\documentclass{article}

% Recommended, but optional, packages for figures and better typesetting:
% \usepackage{microtype}
\usepackage{graphicx}
\usepackage{booktabs} % for professional tables
\usepackage{makecell}
% hyperref makes hyperlinks in the resulting PDF.
% If your build breaks (sometimes temporarily if a hyperlink spans a page)
% please comment out the following usepackage line and replace
% \usepackage{icml2024} with \usepackage[nohyperref]{icml2024} above.
\usepackage{hyperref}

% Attempt to make hyperref and algorithmic work together better:

% Use the following line for the initial blind version submitted for review:
\usepackage[final]{neurips_2024}

% If accepted, instead use the following line for the camera-ready submission:
% \usepackage[accepted]{icml2024}

% For theorems and such
\usepackage{amsmath}
\usepackage{amssymb}
\usepackage{mathtools}
\usepackage{amsthm}
\usepackage{wrapfig}
% if you use cleveref..
\usepackage[capitalize,noabbrev]{cleveref}
\usepackage{subcaption}
\usepackage{math_commands}
% \input{icml2024/preamble}
%%%%%%%%%%%%%%%%%%%%%%%%%%%%%%%%
% THEOREMS
%%%%%%%%%%%%%%%%%%%%%%%%%%%%%%%%
\theoremstyle{plain}
\newtheorem{theorem}{Theorem}[section]

\newtheorem{lemma}[theorem]{Lemma}

\theoremstyle{definition}

\theoremstyle{remark}

% Todonotes is useful during development; simply uncomment the next line
%    and comment out the line below the next line to turn off comments
%\usepackage[disable,textsize=tiny]{todonotes}
\usepackage[textsize=tiny]{todonotes}

\usepackage{pifont}% http://ctan.org/pkg/pifont
\usepackage{tabu}
\usepackage{multirow}
\usepackage{xcolor,colortbl}
\usepackage{hhline}
\usepackage{subcaption}

\definecolor{ashgrey}{rgb}{0.7, 0.75, 0.71}
\newcolumntype{P}[1]{>{\centering\arraybackslash}p{#1}}

\newcommand{\cmark}{\ding{51}}%
\newcommand{\xmark}{\ding{55}}%

% \def\hlinewd#1{%
% \noalign{\ifnum0=`}\fi\hrule \@height #1 %
% \futurelet\reserved@a\@xhline}

% The \icmltitle you define below is probably too long as a header.
% Therefore, a short form for the running title is supplied here:
\title{FuseMoE: Mixture-of-Experts Transformers for Fleximodal Fusion}

\author{%
  Xing Han \\
  Department of Computer Science\\
  Johns Hopkins University\\
  % \texttt{xhan56@jhu.edu} \\
  % examples of more authors
  \And
  Huy Nguyen\thanks{Equal Contribution. Correspondence to \texttt{xhan56@jhu.edu.}}\\
  Department of Statistics and Data Sciences \\
  The University of Texas at Austin \\
  % \texttt{huynm@utexas.edu} \\
  \AND
  Carl Harris$^*$ \\
  Department of Biomedical Engineering \\
  Johns Hopkins University\\
  % \texttt{charr165@jhu.edu} \\
  \And
  Nhat Ho \\
  Department of Statistics and Data Sciences \\
  The University of Texas at Austin \\
  % \texttt{minhnhat@utexas.edu} \\
  \And
  Suchi Saria \\
  Department of Computer Science \\
  Johns Hopkins University \\
  % \texttt{ssaria@cs.jhu.edu} 
}

\begin{document}

\maketitle

\begin{abstract}
As machine learning models in critical fields increasingly grapple with multimodal data, they face the dual challenges of handling a wide array of modalities, often incomplete due to missing elements, and the temporal irregularity and sparsity of collected samples. Successfully leveraging this complex data, while overcoming the scarcity of high-quality training samples, is key to improving these models' predictive performance. We introduce ``FuseMoE'', a mixture-of-experts framework incorporated with an innovative gating function. Designed to integrate a diverse number of modalities, FuseMoE is effective in managing scenarios with missing modalities and irregularly sampled data trajectories. Theoretically, our unique gating function contributes to enhanced convergence rates, leading to better performance in multiple downstream tasks. The practical utility of FuseMoE in the real world is validated by a diverse set of challenging prediction tasks.
\end{abstract}

\section{Introduction}\label{sec:intro}

% Motivation: MoE has not been explored extensively in multimodal applications, existing works mostly focus on vision-language models or multimodal image data

% - Multimodal Fusion: propose a novel gating function that’s effective in combining data with rich modalities

% $\rightarrow$ Theory: provably better convergence rate of Laplace/Gaussian gating than standard softmax

% $\rightarrow$ Application: predictive modeling on multimodal EHR, could also be extended to other domains

Multimodal fusion is a critical and extensively studied problem in many significant domains \citep{shaik2023survey, yang2007online, tsai2019multimodal, cao2023multi}, such as sentiment analysis \cite{han2021improving, majumder2018multimodal}, image and video captioning \cite{karpathy2015deep, johnson2016densecap}, and medical prediction \cite{huang2020multimodal, soenksen2022integrated}. Previous research has shown that embracing multimodality can improve predictive performance by capturing complementary information across modalities, outperforming single-modality approaches in similar tasks \citep{potamianos2003recent, huang2020fusion}. However, an ongoing challenge lies in the creation of scalable frameworks for fusing multimodal data under a variety of conditions, and in creating reliable models that consistently surpass their single-modal counterparts.

Handling a variable number of input modalities remains an open challenge in multimodal fusion, due to challenges with scalability and lack of unified approaches for addressing missing modalities. Many existing multimodal fusion methods are designed for only two modalities \cite{han2021bi, zhou2019review, zhang2023improving}, rely on costly pairwise comparisons between modalities \cite{tsai2019multimodal}, or employ simple concatenation approaches \cite{soenksen2022integrated}, rendering them unable to scale to settings with a large number of input modalities or adequately capture inter-modal interactions. Similarly, existing works are either unable to handle missing modalities entirely \cite{zhang2023improving, zhan2021product1m} or use imputation approaches \cite{tran2017missing, liu2023attention, soenksen2022integrated} of varying sophistication. The former methods restrict usage to cases where all modalities are completely observed, significantly diminishing their utility in settings where this is often not the case (such as in clinical applications); the latter can lead to suboptimal performance due to the inherent limitations of imputed data. 
In addition, the complex and irregular temporal dynamics present in multimodal data have often been overlooked \citep{zhang2023improving, tipirneni2022self}, with existing methods often ignoring irregularity entirely \cite{soenksen2022integrated} or relying on positional embedding schemes \cite{tsai2019multimodal} that may not be appropriate when modalities display a varying degree of temporal irregularity. Consequently, there is a pressing need for more advanced and scalable multimodal fusion techniques that can efficiently handle a broader set of modalities, effectively manage missing and irregular data, and capture the nuanced inter-modal relationships necessary for robust and accurate prediction. We use the term \textbf{FlexiModal Data} to capture several of these key aspects, which haven’t been well-addressed by prior works: 
%\vspace{-6pt}
\begin{quotation}
    \noindent \textit{“Flexi” suggests flexibility, indicating the possibility of having any combination of modalities, even with arbitrary missingness or irregularity.}
\end{quotation}
%\vspace{-6pt}
% Moreover, these methods often struggle to address the complex and irregular temporal dynamics that are characteristic of many real-world datasets, especially in high-stakes environments such as healthcare. 

\begin{table*}[t]
%\vspace{-1em}
\caption{\small We evaluated the characteristics of \texttt{FuseMoE} against various benchmarks. The pipeline approach \citep{soenksen2022integrated} relies on a simple feature extraction scheme for each modality, followed by concatenation and classification. It doesn't incorporate irregularities or missingness in its process, but its use of concatenation and zero-imputation for missing modalities allows it to be adapted to FlexiModal settings. Both \citet{zhang2023improving} and \citet{zadeh2017tensor} tackle multi-modality fusion, but as modalities increase, their method demands exponentially more cross-modal computations and significant model architecture modifications. Finally, \citet{mustafa2022multimodal} presents MoE for language-image alignment, yet it also requires substantial adjustments for the more intricate and universal FlexiModal context we explore.}
\centering
\renewcommand\arraystretch{1.2}
\scalebox{0.78}{
\begin{tabular}{P{0.23\linewidth} | P{0.18\linewidth}| P{0.11\linewidth}| P{0.11\linewidth}| P{0.15\linewidth}| P{0.07\linewidth}| P{0.23\linewidth}}
\Xhline{4\arrayrulewidth}
Method & Type & Irregularity & Missingness & Num of Mods & Theory & FlexiModal Adaptive? \\ \hline
\citet{soenksen2022integrated} & Data Pipeline & \textcolor{red}{\xmark} & \textcolor{red}{\xmark} & $\geq$4 & \textcolor{red}{\xmark} & \textcolor{green}{\cmark} \\
\citet{zhang2023improving} & Modality Fusion & \textcolor{green}{\cmark} & \textcolor{red}{\xmark} & 2 & \textcolor{red}{\xmark} & \textcolor{red}{\xmark} \\
\citet{zadeh2017tensor} & Modality Fusion & \textcolor{red}{\xmark} & \textcolor{red}{\xmark} & 3 & \textcolor{red}{\xmark} & \textcolor{red}{\xmark} \\
\citet{mustafa2022multimodal} & Multimodal MoE & \textcolor{red}{\xmark} & \textcolor{red}{\xmark} & 2 & \textcolor{red}{\xmark} & \textcolor{red}{\xmark} \\
\texttt{FuseMoE} & This Paper & \textcolor{green}{\cmark} & \textcolor{green}{\cmark} & $\geq$4 & \textcolor{green}{\cmark} & Adapted \\
\Xhline{4\arrayrulewidth}
\end{tabular}
}
\label{tab:compare}
\end{table*}
% Although deep learning techniques were extensively used in multimodal fusion across various high-stake applications \citep{khadanga2019using, deznabi2021predicting, yang2021multimodal, xu2021mufasa, wang2021survey, baltruvsaitis2018multimodal, gao2020survey}, many of these approaches are not specifically designed to handle the distinct features of multimodal data. 

% As a result, the proposed fusion techniques often fall short of effectively addressing the intra-modal and inter-modal relationships. 

% A practical example of applying FlexiModal Data is seen in multimodal EHR. ICU patients, whether under mild or severe conditions, often require varying levels of monitoring, encompassing different diagnostic items. 

FlexiModal data is most evident in clinical scenarios, where extensive monitoring results in the accumulation of comprehensive electronic health records (EHRs) for each patient. A typical EHR encompasses diverse data types, including tabular (e.g., age, demographics, gender), images (X-rays, magnetic resonance imaging, and photographs), clinical notes, physiological time series (ECG and EEG), and vital signs (blood chemistry, heart rate). 
% These datasets contain detailed physiological information, chronicling the patient’s health status and the evolution of their illness over time. 
In this setting, we observe a variety of modalities, sampled with varying irregularity and a high degree of missingness and sparsity. 
% These challenges, coupled with the relevance of predictive models to clinical settings, render ICU predictions an ideal use case to demonstrate our approach for handling FlexiModal data.

% This diversity presents a challenge in establishing a ``common input'' standard for each training sample. Disregarding patient records with missing diagnostics entirely would result in a significant reduction of available training samples. 
% In healthcare settings, especially in ICUs, multiple vital signs are monitored at varying frequencies or irregular intervals based on clinical judgment, posing significant predictive challenges. 
 % Moreover, it lacks strategies for handling missing modalities, thereby restricting its application to patients with comprehensive modality records. This limitation hinders the method’s utility in leveraging a broader spectrum of EHR samples, especially in scenarios where patient data exhibits a high incidence of missing modalities.

% then processes multimodal embeddings incorporated with missing indicators using
%\vspace{-1em}
\paragraph{Contributions}
In this paper, we introduce a novel mixture-of-experts (MoE) framework, which we call \texttt{FuseMoE}, specifically designed to enhance the multimodal fusion of FlexiModal data. \texttt{FuseMoE} incorporates sparsely gated MoE layers in its fusion component, which are adept at managing distinct tasks and learning optimal modality partitioning. In addition, \texttt{FuseMoE} surpasses previous cross-attention-based methods in scalability, accommodating an unlimited array of input modalities. 
% This makes it particularly suited for the diverse data types encountered in high-stake scenarios such as ICUs. 
Furthermore, \texttt{FuseMoE} routes each modality to designated experts that specialize in those specific data types. This allows \texttt{FuseMoE} to effectively handle scenarios with missing modalities by dynamically adjusting the influence of experts primarily responsible for the absent data, while still utilizing the available modalities.
Lastly, \texttt{FuseMoE} integrates a novel Laplace gating function, which is theoretically proven to ensure better convergence rates compared to traditional Softmax functions, thereby enhancing predictive performance. We have conducted comprehensive empirical evaluations of \texttt{FuseMoE} across a range of application scenarios to validate its effectiveness.
% We demonstrate that our approach shows superior ability, as compared to existing methods, to integrate diverse input modality types with varying missingness and irregular sampling on three challenging ICU prediction tasks.
% In addition to employing the conventional Softmax gating function found in standard MoE models, \texttt{FuseMoE} introduces an innovative Laplace gating function, showcasing enhanced predictive performance when benchmarked against baseline methods. 

% \begin{itemize}
%     \item Proposed a sophisticated Transformer-based framework that integrates sparse MoE layers that can leverage multiple EHR modalities and achieves state-of-the-art performance
%     \item 
% \end{itemize}

% \input{neurips2024/tex/related}
\section{FuseMoE: Enhance Predictive Performance for FlexiModal Data}\label{sec:method}

In this section, we delve into the fundamental components of \texttt{FuseMoE}, illustrated in Figure \ref{fig:demo_fig}. We focus on two critical elements: the modality and irregularity encoder, and the MoE fusion layer. 
% These components are pivotal in handling the unique characteristics of FlexiModal data.
% We mainly discuss three components: (1) the sparse MoE backbone for multimodal fusion; (2) modality and irregularity encoder; (3) 

% Let $\mathcal{D} = \{(x_i^{m_1}, t_i^{m_1}), (x_i^{m_2}, t_i^{m_2}), \dots, (x_i^{m_j}, t_i^{m_j}), y_i\}_{i=1}^N$ to be the FlexiModal dataset with $N$ units, where $x_i^{m_j}$ is the observation of the $i^{\mathrm{th}}$ unit of the $j^{\mathrm{th}}$ modality, $t_i^{m_j}$ is the corresponding time points, and $y_i$ is the task-specific outcome such as death or critical care conditions. For notational simplicity, we will drop the unit index $i$ in the following discussions. We assume the $j^{\mathrm{th}}$ modality input has dimension $d_{m_j}$. For multivariate time series input of each unit, each dimension of time series has $l_k^{\mathrm{TS}}$ observations, where $k = 1, \dots, d_{m_{\mathrm{TS}}}$. If the unit contains high-dimensional data such as Chest-X-Rays, it includes $l^{\mathrm{CXR}}$ Chest-X-Rays.

\begin{figure*}[t]
    \centering
    \includegraphics[width=1.02\textwidth]{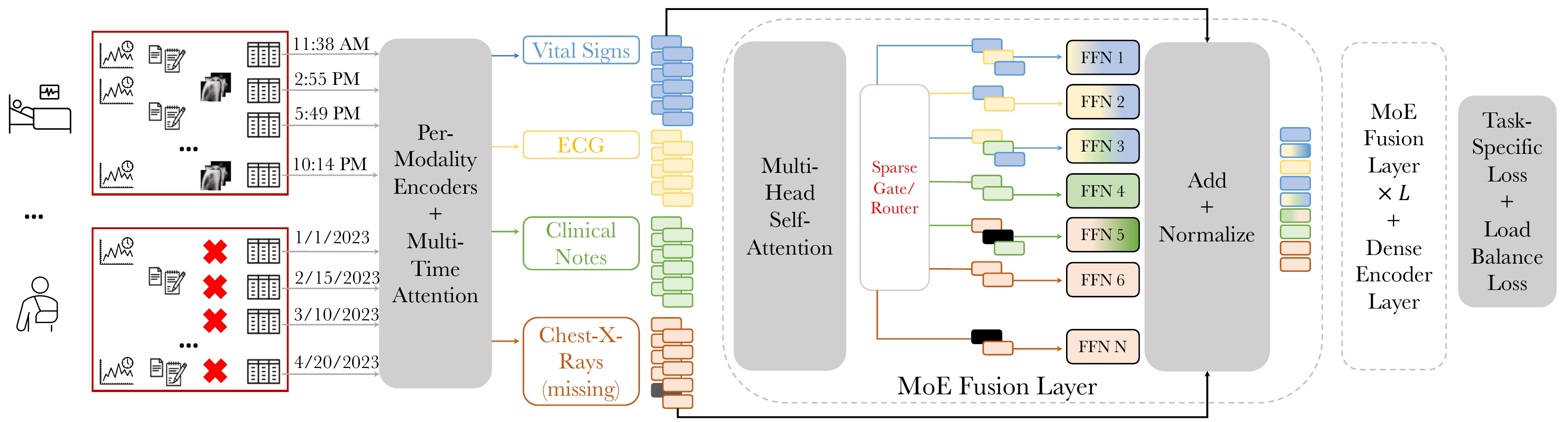}
    \vspace{-1.5em}
    \caption{\small An example of addressing the challenge of FlexiModal Data: patients in ICUs often have extensive and irregular health status measurements over time; patients with milder conditions only require monitoring across fewer categories. \texttt{FuseMoE} is adept at handling inputs featuring any combination of modalities, including those with missing elements. It starts by encoding inputs using modality-specific feature extractors, followed by employing a multi-time attention mechanism \citep{shukla2021multi} to address temporal irregularities. The core of \texttt{FuseMoE} lies the MoE Fusion Layer, where a routing mechanism is trained to categorize multimodal inputs and direct them to the appropriate combinations of MLPs. The outputs from these MLPs are weighted through a gating function, resulting in fused embeddings, which are subsequently utilized for further processing.}
    \label{fig:demo_fig}
\end{figure*}

\subsection{Sparse MoE Backbone} 
% We first briefly discuss sparse MoE and its gating functions used in our work. 
The main components of a sparse MoE layer are a network $G$ as a sparse gate and an expert network $E$. \cite{shazeer2017outrageously} proposed a Top-$K$ gating function that takes as an input a token representation $x \in \mathbb{R}^D$ and then routes it to the Top-$K$ experts out of the set $\{E_i\}_{i=1}^S$. The gating network parameter $W \in \mathbb{R}^{D\times N}$ produces logits $h_s(x) = \mathrm{Top~K}(x \cdot W)$, which are normalized via Softmax: 
% over the Top-$K$ experts
\begin{equation}
    G(x)_i = \frac{\exp(h_s(x)_i)}{\sum_j^K\exp(h_s(x)_j)}.
    %\mathrm{Softmax}(\mathrm{Top~K}(x\cdot W_g)).
    \label{eq:softmax_gating}
\end{equation}
Each expert network ($E_i: \mathbb{R}^D \rightarrow \mathbb{R}^D$) contains a feed-forward layer (FFN) and its parameters are independent of other models. The final output of the expert network $y$ is the linearly weighted combination of each expert's output on the token by the gate's output: $y = \sum_{i=1}^S G(x)_i E_i(x)$.
% \begin{align}
% y &= \sum_{i=1}^N G(x)_i E_i(x) \\
% &= \sum_{i=1}^N \mathrm{Top}_k (\mathrm{Softmax}(Wx))\cdot W_{\mathrm{FFN}_i}^2 \phi(W_{\mathrm{FFN}_i}^1 x)
% \end{align}
% We follow the architectural design of \cite{riquelme2021scaling}. 

\textbf{Gating Network Design} The gating network's advantage lies in its capacity to be concurrently trained with FFNs, facilitating the learning of an optimal sparse combination of experts. Essentially, by evaluating the similarity between the input token and the experts, the gating network/router optimally matches the input partition with the most suitable experts. In many cases, variations in the routing mechanism can greatly influence performance across diverse applications \citep{li2022sparse}. The Softmax gating is the most widely adopted across domains \citep{riquelme2021scaling, shazeer2017outrageously}. 
We introduce a novel Laplace gating function that offers enhanced convergence guarantees and delivers superior predictive performance, particularly in FlexiModal applications. The function is formulated as follows:
% Softmax gating, commonly used as a gating function in various tasks, is contrasted with Gaussian gated MoE, a popular alternative noted for its distinct advantages. Initially introduced by \cite{xu1994alternative}, Gaussian gating is recognized for its superior qualities in certain scenarios, as compared to Softmax gating. The nonlinearity of the Softmax function complicates the application of the expectation-maximization (EM) algorithm. In contrast, Gaussian gating overcomes this difficulty with an analytical solution during the $M$-step. Additionally, it offers enhanced localization properties, crucial for minimizing interference among experts, a benefit that becomes increasingly significant as the number of experts in the model grows \citep{ramamurti1998use}. The logits of the Gaussian gating function are formulated as follows:
% \begin{equation}
%     h_g(x) = \mathrm{Top~K}(-\|W - x\|_2^2).
%     %\mathrm{Softmax}(\mathrm{Top~K}(x\cdot W_g)).
%     \label{eq:gaussian_gating}
% \end{equation}
% The incorporation of Gaussian gating facilitates the convergence of the EM algorithm. 
\begin{equation}
    h_l(x) = \mathrm{Top~K}(-\|W - x\|_2).
    %\mathrm{Softmax}(\mathrm{Top~K}(x\cdot W_g)).
    \label{eq:laplace_gating}
\end{equation}
The Laplace gating function, characterized by its Euclidean term $\exp(-\|W - x\|_2)$, is less prone to converge towards extreme weight distributions due to the bounded nature of this term. In subsequent sections, we will illustrate how this gating function facilitates faster parameter estimation rates compared to Softmax gating. Moreover, our empirical findings indicate that the Laplace gating exhibits enhanced performance in managing FlexiModal data.

\subsection{Modality and Irregularity Encoder} \label{sec:modality_irregularity_encoder}
% Let $\mathcal{D} = \{(x_i^{m_1}, t_i^{m_1}), (x_i^{m_2}, t_i^{m_2}), \dots, (x_i^{m_j}, t_i^{m_j}), y_i\}_{i=1}^N$ be the FlexiModal dataset with $N$ instances, where $x_i^{m_j}$ represents the observation for the $i^{\mathrm{th}}$ instance in the $j^{\mathrm{th}}$ modality, and $t_i^{m_j}$ is its corresponding time points. The outcome of interest, such as death or critical care requirement, is denoted by $y_i$. For simplicity in our discussion, we will omit the instance index $i$. We assume that the input for the $j^{\mathrm{th}}$ modality has a dimension of $d_{m_j}$. In cases of multivariate time series for each instance, each dimension of the time series comprises $l_k^{\mathrm{TS}}$ observations, where $k = 1, \dots, d_{m_{\mathrm{TS}}}$. For instances involving high-dimensional data, such as Chest-X-Rays, they include $l^{\mathrm{CXR}}$ images.

To encode the irregularity of sampling in each modality, we utilize a discretized multi-time attention (mTAND) module \citep{shukla2021multi}, which leverages a time attention mechanism \citep{kazemi2019time2vec, vaswani2017attention} to discretize irregularly sampled observations into discrete intervals. Specifically, given a set of $l_k$ continuous time points, $t \in \mathbb{R}^{l_k}$, corresponding to the $k^{\mathrm{th}}$ dimensionality of a given modality, 
% (superscript $m_j$ omitted for clarity)
we employ $H$ embedding functions $\phi_h (\tau)$ to embed each $\tau_k \in t_k$ in a $d_h$ dimensional vector space (detailed definition and examples can be found in Appendix \ref{app:tasks_of_interest} and \ref{app:irregularity}). 
The $i^{\mathrm{th}}$ dimension of the $h^{\mathrm{th}}$ embedding is defined as
\begin{equation*}
\phi_h (\tau) [i] = \begin{cases}
    w_i \tau_k, & \text{if } i = 1 \\
    \sin (w_i \tau_k + \phi_i), & \text{if } 1 < i \leq d_h,
\end{cases}
\end{equation*}
where $\{w_i, \phi_i \}_{i=1}^{d_h}$ are learnable parameters. By performing this for each continuous time point in $t_k$, we create a $d_h$ dimensional representation of each time point in $H$ different embedding spaces. We then leverage these embeddings to discretize the irregularly sampled observations into discretized bins. Specifically, we seek to discretize $x_k$ (with $l_k$ corresponding observation times $t_k$) into $\gamma$ regularly sampled intervals $\boldsymbol{\gamma}$. 
% (here, hourly intervals, so $\boldsymbol{\gamma} = \{ 0, 1, \dots , \gamma \}$).
% We let $\mathbf{x}_i \in \mathbb{R}^t$ denote the $i$th set of time series observations (e.g., heart rate) recorded at the irregular time steps $\boldsymbol{\tau}_i$, and seek to discretize the observations into $\gamma$ regularly sampled intervals $\boldsymbol{\gamma}$ (here, hourly intervals, so $\boldsymbol{\gamma} = \{ 0, 1, \dots , \gamma \}$).
We do this via an attention mechanism, which, for each embedding function $\phi_h (\tau)$, takes $\boldsymbol{\gamma}$ as queries, $t_k$ as keys, and $x_k$ as values and produces $\hat{x}_{k,h} \in \mathbb{R}^\gamma$ embeddings for each sequence. Formally,
\[
\hat{x}_{k,h} = \text{Softmax} \left(\frac{\phi_h(\boldsymbol{\gamma}) \mathbf{Q}_h \mathbf{K}_h^\top \phi_h(t_k)^\top}{\sqrt{l_k}}\right) x_k,
\]
where $\mathbf{Q}_h$ and $\mathbf{K}_h$ are learnable parameters. This formulation allows us to discretize univariate observations $x_k$ into $\gamma$ regularly-sampled bins. To model irregularity across a multivariate set of observations for a given modality with $d_m$ dimensions, we repeat this process for each dimension of the input. This allows us to obtain an interpolation matrix $\hat{X}_h = [\hat{x}_{1,h}, \hat{x}_{2,h}, ..., \hat{x}_{d_m,h}] \in \mathbb{R}^{\gamma \times d_m}$ for each of the $h$ embedding functions.
We then concatenate the interpolation matrices across all $H$ embedding functions (i.e., $I = [\hat{X}_1, \hat{X}_2, ..., \hat{X}_h] \in \mathbb{R}^{\gamma \times (H \cdot d_m)}$) and employ a linear projection to achieve a final, discretized embedding for each modality, $Z \in \mathbb{R}^{\gamma \times d_e}$, where $d_e$ denotes the desired dimensionality of each modality's representation. The discretization procedure offers a standardized approach to managing irregularly sampled time series across various input types; however, it can inevitably result in information loss. On the other hand, relying solely on the mTAND module may yield suboptimal performance due to the potentially varying sampling rates of different variables \citep{horn2020set}, especially in scenarios where the sample sizes are small. To mitigate this, we combine discretized outputs with continuous representations learned through the mTAND module.
% Appendix \ref{app:irregularity} discusses further details of the embedding.
% For vitals/lab values, we combine our mTAND embedding scheme with a simple imputation module, combined via a gating mechanism, to obtain our final embedding, as described in Appendix \ref{app:UTDE}.
%In our analysis, we embed each modality with $d_e = 128$ dimensions. 

\begin{figure*}[t]
    \centering
    \includegraphics[width=\textwidth]{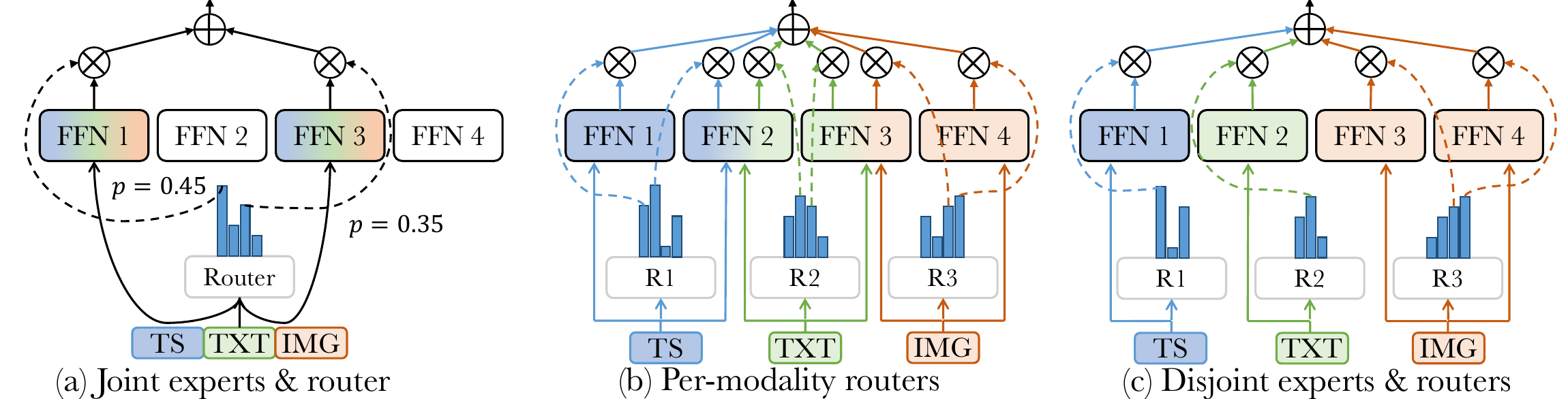}
    \vspace{-1.5em}
    \caption{\small We present three exemplary designs of the Top-$K$ router for effective multimodal fusion, considering an input scenario with three modalities: Time-Series (TS), Text (TXT), and images (IMG). (a) The joint router design utilizes a concatenated embedding of all modalities, directing this combined input to selected experts. (b) In the modality-specific router design, each modality's embedding is independently assigned to a shared pool of experts. (c) The third design variant also uses modality-specific routers but assigns each modality's embedding to separate pools of experts, each pool uniquely tailored to process a specific modality type.}
    \label{fig:router_fig}
\end{figure*}

\textbf{Encoding Multiple Modalities} ~The process described above allows us to discretize an arbitrarily long irregular, multivariate sequence into a regularly sampled, discretized embedding with length $\gamma$ and dimensionality $d_e$. We repeat this for each of the $M$ modalities, to create $M$ embeddings, $\{Z_j\}_{j=1}^M$, which are then combined to generate predictions.

\subsection{MoE Fusion Layer} \label{sec:moe_fusion_layer}
% \xing{two figures demonstrating our method: first on page 1 discussing the multimodal problem setting and motivation; second on pages 3 - 4 showing the model architecture}

% \carl{to recap mTAND module/Zhang et al. (2023) approach -- most will be in appendix}
\paragraph{Router Design Study}
Upon obtaining embeddings from each of the $j$ modalities, we propose multiple complementary approaches for processing multimodal inputs. Figure \ref{fig:router_fig} illustrates a range of router design options. The most straightforward strategy involves employing a common router that handles the concatenated embeddings of all $j$ modalities, without imposing any gating constraints. As the complexity increases with additional modalities, we consider more sophisticated alternatives: deploying separate routers for each modality's embedding and assigning these embeddings to a shared pool of experts. This allows for distinct processing while maintaining a unified expert framework. Additionally, we further segregate these common expert pools, allowing each router to direct its respective embedding to dedicated experts skilled in handling such specific inputs. These varied router design choices offer users enhanced flexibility, enabling more fine-grained control of both inter-modal and intra-modal relationships. Details of the respective advantages and challenges of these router design mechanisms can be found in Appendix \ref{app:router}.

We implement an entropy regularization loss to ensure balanced and stable expert utilization, a concept supported by various previous studies \citep{mustafa2022multimodal, meister2020generalized, genevay2019entropy}. It maximizes the mutual information between modalities and experts and serves as an auxiliary loss function in addition to task-specific loss. Given a total of $M$ modalities, and denoting $\mathcal{H}$ as the entropy, we define the loss function $\mathcal{E}$ as
\begin{equation}
    \mathcal{E}(x) = \frac{1}{M}\sum_{j=1}^M \mathcal{H}(\hat{p}_{m_j}(E)) - \mathcal{H}(\frac{1}{M}\sum_{j=1}^M \hat{p}_{m_j}(E)),
\end{equation}
where $\hat{p}_{m_j}(E)$ is the distribution over the experts $\{E_i\}_{i=1}^S$ for the $j^{\mathrm{th}}$ modality. This distribution can be approximated by $\hat{p}_{m_j}(E) = \frac{1}{l^j} \sum_{i=1}^{l^j} p_{m_j}(E~|~x_i^{m_j})$, where $l^j$ is the number of observations of the $j^{\mathrm{th}}$ modality. Intuitively, we actively encourage the input embeddings to diminish the uncertainty in selecting experts. By incorporating the loss $\mathcal{E}$, we aim to stabilize the experts' preferences within each modality, while promoting a diverse range of expert selections across different modalities.

% \vspace{-1em}
\paragraph{Missing Modalities} 
In scenarios where certain modalities are missing throughout the data trajectories, we substitute the original embedding $Z_{\mathrm{missing}}$ with a learnable embedding $\mathcal{Z}$, acting as a generic ``missing indicator''. This strategy is facilitated by employing per-modality routers, which, in conjunction with entropy regularization, guide $\mathcal{Z}$ predominantly toward a specific group of less-utilized experts. The new embeddings $\mathcal{Z}$ are dynamically adjusted throughout the model training process to minimize the task-specific loss and the entropy regularization loss. As a result, the router will assign lower weights to the experts responsible for processing these embeddings. 
% This dynamic adjustment ensures that the model efficiently adapts to and compensates for the presence of missing modalities.
% During the optimization process, which involves both the task-specific loss and the entropy regularization loss, the router systematically assigns lower weights to these experts. 
% gating network assigns zero weights to missing parts, patient samples with the same or similar missing components should be assigned to the same group of experts to handle their data

\section{Theoretical Contribution}\label{sec:theory}
% \textcolor{red}{@NH @HN }
%\nhat{} \huy{please update this part}
In this section, we provide a theoretical guarantee of the benefits of the Laplace gating over the standard Softmax gating in MoE. In particular, we conduct a convergence analysis for maximum likelihood estimation (MLE) under the Lapace gating Gaussian MoE, and demonstrate that the MLE under this model has better convergence behaviors than that under the softmax gating Gaussian MoE.

\textbf{Problem Setup.} Since the convergence analysis of MLE under the Top-K sparse gating MoE has been studied in \cite{nguyen2024statistical}, we will focus on examining the Laplace gating solely in the sequel. Assume that $(X_{1}, Y_{1}), \ldots, (X_{n}, Y_{n}) \in \mathbb{R}^{d} \times \mathbb{R}$ are i.i.d. samples drawn from the Laplace gating Gaussian MoE of order $k_{*}$ whose conditional density function $p_{G_{*}}(Y|X)$ is
\begin{align}
    \label{eq:model_density}
    p_{G_*}(Y|X)=\sum_{i=1}^{k_*}&\softmax(-\|W_i^{*}-X\|+\bzi) \cdot f(Y|(\ai)^{\top}X+\bi,\vvi),
\end{align}
where we define for any vectors $v=(v_i)_{i=1}^{k_*}$ that $\softmax(v_i):=\frac{\exp(v_i)}{\sum_{j=1}^{k_*}\exp(v_j)}$.
% \begin{align*}
%    \topK(v_i,K;u_i):=\begin{cases}
%         v_i +u_{i},\hspace{0.82cm} \text{if } v_i \text{ is in the top } K \text{ elements of } v;\\
%         -\infty, \hspace{1.2cm} \text{otherwise}.
%     \end{cases}
% \end{align*}
Above, $f(\cdot|\mu,\nu)$ denotes a univariate Gaussian density function with mean $\mu$ and variance $\nu$. For ease of the presentation, we denote $G_*:=\sum_{i=1}^{k_*}\exp(\beta_{i}^{*})\delta_{(W_i^{*},a_i^{*},b_i^{*},\nu_i^{*})}$ as a true but unknown \emph{mixing measure} associated with unknown parameters $(\beta_{i}^{*}, W_{i}^{*},\ai,\bi,\vvi)$ for $i\in\{1,2,\ldots,k_*\}$. In the paper, we specifically consider two settings of the true number of experts $k_*$: (i) \emph{Exact-specified setting}: when $k_{*}$ is known; (ii) \emph{Over-specified setting}: when $k_{*}$ is unknown, and we over-specify the model in \eqref{eq:model_density} by a Laplace gating MoE model with $k>k_{*}$ experts. However, due to the space limit, we present only the latter setting, and defer the former setting to Appendix~\ref{appendix:exact_specified}. 

\textbf{Maximum Likelihood Estimation.} We use the maximum likelihood method to estimate the unknown mixing measure $G_{*}$ \cite{Vandegeer-2000}. 
%Notably, under the over-specified setting, as the true density $p_{G_*}(Y|X)$ is associated with the top $K$ experts which are possibly approximated by more than $K$ fitted experts, we need to select $\overline{K}>K$ experts in the formulation of density estimation to guarantee its convergence to $p_{G_*}(Y|X)$. 
In particular, the MLE is given by 
\begin{align}
    \label{eq:MLE}
    \widehat{G}_n\in\argmax_{G\in\mathcal{G}_{k}(\Theta)}\frac{1}{n}\sum_{i=1}^{n}\log(p_{G}(Y_i|X_i)),
\end{align}
where $\mathcal{G}_{k}(\Theta):=\{G=\sum_{i=1}^{k'}\exp(\beta_i)\delta_{(W_i,a_i,b_i,\nu_i)}:1\leq k'\leq k, \ (W_i,a_i,b_i,\nu_i)\in\Theta\}$ denotes the set of all mixing measures with at most $k$ components.
% and
% \begin{align}
%      p_{G}(Y|X)=\sum_{i=1}^{k_*}\softmax(&\topK(-\|W_i-X\|,\overline{K};\beta_{i})) \cdot f(Y|(a_i)^{\top}X+b_i,\nu_i).
% \end{align}
Given the MLE defined in \eqref{eq:MLE}, we are ready to present the main results. Before that, let us introduce some necessary notations for our analysis.

\textbf{Notations.} We denote $[n]: = \{1, 2, \ldots, n\}$ for any $n\in\mathbb{N}$. For any vector $v \in \mathbb{R}^{d}$, $\|v\|$ stands for its $2$-norm value. Additionally, the notation $|S|$ indicates the cardinality of a given set $S$, while $\delta$ denotes the Dirac delta measure. Finally, for any two probability densities $p,q$ dominated by the Lebesgue measure $\mu$, we denote $V(p,q) = \frac 1 2 \int |p-q| d\mu$ as their Total Variation distance.

Firstly, we demonstrate in Theorem~\ref{theorem:over_density_estimation} that the convergence rate of density estimation under the Laplace gating Gaussian MoE is parametric on the sample size $n$.
\begin{theorem}[Density estimation]
    \label{theorem:over_density_estimation}
    The density estimation $p_{\widehat{G}_n}(Y|X)$ converges to the true density $p_{G_*}(Y|X)$ under the Total Variation distance at the following rate:
    \begin{align*}
        \mathbb{E}_X[V(p_{\widehat{G}_n}(\cdot|X),p_{G_*}(\cdot|X))]=\mathcal{O}(\sqrt{\log(n)/n}).
    \end{align*}
\end{theorem}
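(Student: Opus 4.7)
The plan is to follow the standard empirical-process route for MLE in parametric/semi-parametric mixture models, as laid out in van de Geer (2000) and specialized to MoE models in prior work. The target rate $\sqrt{\log n/n}$ is the parametric rate up to a logarithmic factor, which suggests the class of conditional densities $\mathcal{P}_k(\Theta) := \{p_G(\cdot|X) : G \in \mathcal{G}_k(\Theta)\}$ has logarithmic bracketing entropy in $\epsilon$, and then the usual MLE-Hellinger bound, combined with $V(p,q) \leq h(p,q)$, delivers the claim for the Total Variation distance.

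\textbf{Step 1 (reduce to Hellinger).} First I would convert the Total Variation statement to a Hellinger statement. Since $V(p,q) \leq h(p,q)$ (where $h$ is the Hellinger distance), it suffices to bound $\mathbb{E}_X[h(p_{\widehat G_n}(\cdot|X), p_{G_*}(\cdot|X))]$, or equivalently, the averaged Hellinger distance $\bar h(p_{\widehat G_n}, p_{G_*})$ on $(X,Y)$.

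\textbf{Step 2 (bracketing entropy bound).} The main technical step is to show that the class of square-root densities $\overline{\mathcal{P}}_k^{1/2}(\Theta) := \{p_G^{1/2}(Y|X) : G \in \mathcal{G}_k(\Theta)\}$ admits a bracketing entropy bound of the form
\begin{equation*}
H_B(\epsilon, \overline{\mathcal{P}}_k^{1/2}(\Theta), \|\cdot\|_2) \leq C \, \log(1/\epsilon),
\end{equation*}
for all small $\epsilon>0$. To do this I would exploit the compactness of $\Theta$ (which makes $\mathcal{G}_k(\Theta)$ totally bounded in a natural metric on mixing measures), and show that $p_G(Y|X)$ is Lipschitz as a function of the parameters $(\beta_i, W_i, a_i, b_i, \nu_i)$ with a Lipschitz constant that is envelope-integrable against a Gaussian-type dominating function. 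The Laplace gate $\exp(-\|W_i - X\|)$ is clearly bounded and smooth in $W_i$, the softmax normalization is $1$-Lipschitz in its input, and the Gaussian expert $f(Y|a_i^\top X + b_i, \nu_i)$ is smooth in its parameters with standard tail bounds. Combining these yields a polynomial covering number and hence logarithmic bracketing entropy.

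\textbf{Step 3 (apply the MLE bound).} Given Step 2, I plug this entropy bound into Theorem 7.4 of van de Geer (2000) (or Theorem 1 of Wong-Shen 1995). The resulting rate $\epsilon_n$ is determined by
\begin{equation*}
\sqrt{n} \, \epsilon_n^2 \;\gtrsim\; \int_{\epsilon_n^2/2^8}^{\epsilon_n} \sqrt{H_B(u, \overline{\mathcal{P}}_k^{1/2}(\Theta), \|\cdot\|_2)} \, du,
\end{equation*}
and the logarithmic entropy from Step 2 gives $\epsilon_n \asymp \sqrt{\log n/n}$ (up to constants). Together with standard measure-theoretic checks that $p_{G_*}$ is in the interior of the model class, this yields $\bar h(p_{\widehat G_n}, p_{G_*}) = \mathcal{O}(\sqrt{\log n/n})$ in expectation, and the TV bound follows from Step 1.

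\textbf{Main obstacle.} The routine work is Step 3; Step 1 is essentially free. The actual burden is Step 2, and within it the subtlety is controlling the parameter-Lipschitz behavior of $p_G(Y|X)$ uniformly over $G \in \mathcal{G}_k(\Theta)$, especially because different $G$'s can have different numbers of atoms $k' \leq k$. The standard trick is to absorb extra atoms by letting $\exp(\beta_i) \to 0$ or by collapsing pairs of atoms, so one reduces to $k' = k$ and then to a finite-dimensional parameter space of dimension $O(k(d+d'))$; compactness plus Lipschitzness then gives polynomial covering numbers, hence logarithmic bracketing entropy. This mirrors the argument used in the exact/over-specified setting for softmax gating in Nguyen et al.\ (2024), with the key difference being that Laplace gating replaces $\exp(W_i^\top X)$ by $\exp(-\|W_i - X\|)$, which is actually easier to control because it is uniformly bounded by $1$.
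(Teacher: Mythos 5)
Your proposal follows essentially the same route as the paper: the paper's proof also reduces Total Variation to Hellinger, establishes the logarithmic bracketing entropy bound $H_B(\eta,\mathcal{P}_{k}(\Theta),h)\lesssim\log(1/\eta)$ via a parameter-space cover exploiting compactness of $\Theta$, the boundedness and Lipschitzness of the Laplace gate and softmax, and a Gaussian-tail envelope $B(Y|X)$ (Lemma~\ref{lemma:covering_bracketing_bound}), and then plugs this into Theorem~7.4 of van de Geer (2000) with $\Psi(\delta)=\delta\sqrt{\log(1/\delta)}$ to obtain $\delta_n=\sqrt{\log(n)/n}$. The argument and the key intermediate lemma match, so the proposal is correct and aligned with the paper's proof.
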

Proof of Theorem~\ref{theorem:over_density_estimation} is in Appendix~\ref{appendix:over_density_estimation}. The parametric rate $\mathcal{O}(\sqrt{\log(n)/n})$ of the conditional density function $p_{\widehat{G}_{n}}$ indicates that if there exists a loss function among parameters $\mathcal{D}$ such that $\mathbb{E}_X[V(p_{\widehat{G}_n}(\cdot|X),p_{G_*}(\cdot|X))]\gtrsim\mathcal{D}(\widehat{G}_n,G_*)$, then we will achieve the parameter and expert estimation rates via the bound $\mathcal{D}(\widehat{G}_n,G_*)=\mathcal{O}(\sqrt{\log(n)/n})$.

\textbf{Voronoi Loss.} 
%Similar to the exact-specified setting, we also define a Voronoi loss function between the MLE and the true mixing measure. However, different from the exact-specified setting, each component of the true mixing measure may have more than one component of the MLE converge under the over-specified setting. The possible convergence of several components of the MLE to the same component of the true mixing measure leads to the interaction among these components. The Voronoi loss function under the over-specified setting is, therefore, modified to account for that interaction and given by:
Following the above implication, we now define a loss function among parameters based on a notion of Voronoi cells as in \cite{manole22refined}. Given some mixing measure $G$, we distribute its components $\theta_i:=(W_i,a_i,b_i,\nu_i)$ to the following Voronoi cells, which are generated by the components $\theta^*_j:=(\wj,\aj,\bj,\vvj)$ of the true mixing measure $G_*$:
\begin{align}
    \label{eq:Voronoi_cells}
    \mathcal{A}_{j}\equiv\mathcal{A}_{j}(G):=\{i\in[k]:\|\theta_i-\theta^*_j\|\leq\|\theta_i-\theta^*_{\ell}\|, \ \forall \ell\neq j\},
\end{align}
for any $1 \leq j \leq k_{*}$. Note that, the cardinality of the Voronoi cell $\mathcal{A}_j$ is exactly the number of fitted components approximating $\theta^*_j$.
For ease of the presentation, let us denote $\Phi_{ij}(\rho_1,\rho_2,\rho_3,\rho_4) := \|W_{i} - W_{j}^{*}\|^{\rho_1} + \|a_{i} - a_{j}^{*}\|^{\rho_2} + |b_{i} - b_{j}^{*}|^{\rho_3} + |\nu_{i} - \nu_{j}^{*}|^{\rho_4}$, for any $(\rho_1,\rho_2,\rho_3,\rho_4)\in\mathbb{R}^4$.
Then, the Voronoi loss function $\mathcal{D}_{2}(G,G_*)$ used for our analysis under the over-specified setting is given by:
\begin{align}
    \label{eq:Voronoi_loss_over}
    &\mathcal{D}_{2}(G,G_*):=\sum_{j=1}^{k_*}\Big|\sum_{i\in\mathcal{A}_{j}}\exp(\beta_i)-\exp(\beta^*_{j})\Big|+\sum_{j\in[k_*]:|\mathcal{A}_{
    j
    }|=1}\sum_{i\in\mathcal{A}_{j}}\exp(\beta_i)\Phi_{ij}(1,1,1,1)\nonumber\\
    &\hspace{3.5cm}+\sum_{j\in[k_*]:|\mathcal{A}_{j}|>1}\sum_{i\in\mathcal{A}_{j}}\exp(\beta_i)\Phi_{ij}\Big(2,2,\Bar{r}(|\mathcal{A}_{j}|),\frac{\Bar{r}(|\mathcal{A}_{j}|)}{2}\Big).
\end{align}
% where the maximum operator included in the above Voronoi loss is to cope with the complex structure of the $\topK$ function. 
The notation $\bar{r}(|\mathcal{A}_{j}|)$ stands for the minimum value of $r\in\mathbb{N}$ such that the following system of polynomial equations does not have any non-trivial solutions for the unknown variables $\{(q_{1i},q_{2i},q_{3i})\}_{i=1}^{|\mathcal{A}_{j}|}$:
\begin{align}
    \label{eq:system}
\sum_{i=1}^{|\mathcal{A}_{j}|}\sum_{\substack{m_1+2m_2=s,\\ 1\leq m_1+m_2\leq r}}\frac{q_{3i}^{2}q_{1i}^{m_1}q_{2i}^{m_2}}{m_1!m_2!}=0,~ \text{for each } s=1,2,\ldots,r,
\end{align}
A solution to the above system is regarded as non-trivial if at least among variables $q_{1i}$ is different from zero, whereas all the variables $q_{3i}$ are non-zero. It is worth noting that the function $\bar{r}(\cdot)$ was previously studied in \cite{Ho-Nguyen-Ann-16} to characterize the convergence behavior of parameter estimation under the location-scale Gaussian mixture models. \cite{Ho-Nguyen-Ann-16} also gave some specific values of that function, namely $\bar{r}(2)=4$ and $\bar{r}(3)=6$. Meanwhile, they claimed that it was non-trivial to determine the value of $\bar{r}(m)$ when $m\geq 4$, and further techniques should be developed for that purpose. Since Gaussian MoE models are generalization of the Gaussian mixture models, we also involve the function $\bar{r}(\cdot)$ in our convergence analysis. Now, we provide in the following theorem the convergence rate of parameter estimation under the over-specified setting of the Laplace gating Gaussian MoE model (see also Figure~\ref{fig:simulation} for the empirical convergence rates justifying the theoretical rates in Theorem~\ref{theorem:over-specified}). 

\begin{table*}[!ht]
%\vspace{-1em}
\caption{\small Parameter estimation rates under the Softmax and Laplace gating Gaussian MoE models. The function $\widetilde{r}(\cdot)$ represents the solvability of a system of polynomial equations considered in \cite{nguyen2023demystifying} while $\widetilde{r}(\cdot)\leq\bar{r}(\cdot)$ and $\widetilde{r}(2)=4$, $\widetilde{r}(3)=6$. Additionally, $\mathcal{A}^n_j:=\mathcal{A}_j(\widehat{G}_n)$ denotes a Voronoi cell defined in \eqref{eq:Voronoi_cells}.}
\textbf{}\\
\centering
\scalebox{0.8}{
\begin{tabular}{  | c | c |c|c|c|c|c|c|} 
\hline
\textbf{Gates} & $\exp(\beta_{j}^{*})$ & $W_{j}^{*}$& $a_{j}^{*}$ & $b^*_{j}$ & $\nu_{j}^{*}$ \\
\hline 
{Softmax \citep{nguyen2023demystifying}}  & $\mathcal{O}(n^{-1/2})$ &$\mathcal{O}(n^{-1/2\widetilde{r}(|\mathcal{A}^n_{j}|)})$  & $\mathcal{O}(n^{-1/\widetilde{r}(|\mathcal{A}^n_{j}|)})$ &$\mathcal{O}(n^{-1/2\widetilde{r}(|\mathcal{A}^n_{j}|)})$  & $\mathcal{O}(n^{-1/\widetilde{r}(|\mathcal{A}^n_{j}|)})$\\
\hline
{Laplace (Ours)}  & $\mathcal{O}(n^{-1/2})$ &$\mathcal{O}(n^{-1/4})$ & $\mathcal{O}(n^{-1/4})$ & $\mathcal{O}(n^{-1/2\bar{r}(|\mathcal{A}^n_{j}|)})$ & $\mathcal{O}(n^{-1/\bar{r}(|\mathcal{A}^n_{j}|)})$ \\
\hline
\end{tabular}}
\label{table:parameter_rates}
\end{table*}
\begin{theorem}[Parameter Estimation]
    \label{theorem:over-specified}
    When $k>k_*$ becomes unknown, the following Total Variation bound holds 
 true for any mixing measure $G\in\mathcal{G}_{k}(\Theta)$:
    \begin{align*}
        \bbE_X[V(p_{G}(\cdot|X),p_{G_*}(\cdot|X))]\gtrsim\mathcal{D}_{2}(G,G_*).
    \end{align*}
    Consequently, we obtain that $\mathcal{D}_{2}(\widehat{G}_n,G_*)=\mathcal{O}(\sqrt{\log(n)/n})$.
\end{theorem}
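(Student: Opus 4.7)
The plan is to deduce the parameter-estimation rate from the density-estimation rate of Theorem~\ref{theorem:over_density_estimation}. Once the Total Variation lower bound $\mathbb{E}_X[V(p_G,p_{G_*})] \gtrsim \mathcal{D}_2(G, G_*)$ is available uniformly in $G \in \mathcal{G}_k(\Theta)$, substituting $G = \widehat{G}_n$ yields $\mathcal{D}_2(\widehat{G}_n,G_*) = \mathcal{O}(\sqrt{\log(n)/n})$ immediately. Thus all the real work is in proving that lower bound, and I would follow the local-to-global framework standard in Voronoi analyses of MoE (as in \cite{manole22refined, nguyen2023demystifying}). By compactness and continuity on $\Theta$, it suffices to establish the local inequality $\liminf_{\mathcal{D}_2(G,G_*)\to 0} \mathbb{E}_X[V(p_G,p_{G_*})]/\mathcal{D}_2(G,G_*) > 0$.

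Assume for contradiction that a sequence $G_n$ violates this, with $\mathcal{D}_2(G_n,G_*)\to 0$ but the ratio tending to zero. Passing to a subsequence freezes the Voronoi partition $\{\mathcal{A}_j\}$, and every atom of $G_n$ in cell $\mathcal{A}_j$ converges to $\theta_j^* = (W_j^*,a_j^*,b_j^*,\nu_j^*)$. I would then Taylor-expand $p_{G_n}(Y|X) - p_{G_*}(Y|X)$ around the true atoms, grouping contributions by cell: first order in all of $(W,a,b,\nu)$ for singleton cells, and order $\bar{r}(|\mathcal{A}_j|)$ in the coupled $(b,\nu)$ variables for larger cells. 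The appearance of $\bar{r}(\cdot)$ is forced by the classical Gaussian identity $\partial_\mu^2 f = 2\,\partial_\nu f$, which rewrites the $(b,\nu)$-expansion precisely as the polynomial system \eqref{eq:system}.

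The key structural observation, and the source of the improved rates over Softmax gating, is the behaviour of the gate derivative in $W$. For the Laplace gate, $\nabla_W[-\|W-X\|] = -(W-X)/\|W-X\|$ is a unit direction field that is not aligned with the $X$-linear factors appearing when we differentiate the expert mean $a_j^\top X + b_j$. Consequently the family $\{\partial^{\alpha}_{W,a,b,\nu}(g_j f_j)\}$ is linearly independent in $(X,Y)$ already at first order in $W$ and $a$, producing the $\Phi_{ij}(2,2,\cdot,\cdot)$ contribution to $\mathcal{D}_2$ and the $n^{-1/4}$ rates. By contrast, the Softmax logit $W^\top X$ has gradient $X$, which collides with the expert gradient in $a$; this algebraic dependence is precisely what degrades the Softmax rates to the $\widetilde{r}(\cdot)$-driven entries in Table~\ref{table:parameter_rates}. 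After dividing the Taylor expansion by $\mathcal{D}_2(G_n,G_*)$ and sending $n\to\infty$, the assumed vanishing of $\mathbb{E}_X V/\mathcal{D}_2$ forces every coefficient of the independent basis to vanish: for singleton cells the resulting linear system admits only the trivial solution, and for multi-cells one would extract a non-trivial solution of \eqref{eq:system} at order strictly less than $\bar{r}(|\mathcal{A}_j|)$. Either conclusion contradicts the normalization $\mathcal{D}_2(G_n,G_*) = 1$.

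The hard part will be verifying the linear-independence step rigorously. Unlike the purely polynomial setting of Softmax-gate analyses, the Laplace-gate derivatives involve the non-polynomial direction field $(X-W_j^*)/\|X-W_j^*\|$ together with its higher-order derivatives, so one must prove that these factors, combined with the polynomial-in-$X$ contributions from the Gaussian expert and the Hermite-type $Y$-moments, form a linearly independent family in $L^2(\mu_X \otimes \mathrm{Leb}_Y)$ away from the measure-zero set $\{X=W_j^*\}$. Two secondary technical obstacles are (i) controlling the softmax normalizer $\sum_j \exp(-\|W_j-X\|+\beta_j)$ and its Taylor expansion uniformly in $X$, and (ii) bounding the Taylor remainders uniformly over the compact parameter space $\Theta$ so that they are absorbed into the $o(\mathcal{D}_2(G_n,G_*))$ error before taking the limit.
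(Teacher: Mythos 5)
Your proposal follows essentially the same route as the paper's proof: a local--global reduction by compactness, a contradiction argument along a sequence $G_n$ with frozen Voronoi cells, Taylor expansion whose order is dictated by cell cardinality, the Gaussian identity $\partial f/\partial\nu=\tfrac{1}{2}\,\partial^{2}f/\partial h^{2}$ generating the polynomial system \eqref{eq:system} and hence $\bar r(\cdot)$, and a Fatou-plus-linear-independence step forcing all normalized coefficients to vanish (the paper clears the softmax denominator by multiplying through by $\sum_{j}\exp(-\|W_j^{*}-X\|+\beta_j^{*})$, which is your ``normalizer'' obstacle handled concretely). One minor imprecision: for cells with $|\mathcal{A}_j|>1$ the $n^{-1/4}$ rates for $W_j^{*},a_j^{*}$ do not come from independence ``already at first order'' --- first-order coefficients such as $\sum_{i\in\mathcal{A}_j}\exp(\beta_i)\Delta W_{ij}$ can cancel within a cell, so one must pass to the non-cancelling pure second-order derivatives, which is precisely what the $\Phi_{ij}(2,2,\cdot,\cdot)$ term in $\mathcal{D}_2$ records.
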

Proof of Theorem~\ref{theorem:over-specified} is in Appendix~\ref{appendix:over-specified}. The results of Theorem~\ref{theorem:over-specified} together with the formulation of the loss function $\mathcal{D}_{2}$ in \eqref{eq:Voronoi_loss_over} reveal that (see also Table~\ref{table:parameter_rates}):

\textbf{(i)} The parameters $W^*_i,a^*_i,b^*_i,\nu^*_i$ which are fitted by exactly one component, i.e. $|\mathcal{A}^n_{i}|:=|\mathcal{A}_{i}(\widehat{G}_n)|=1$, enjoy the same estimation rate of order $\mathcal{O}(n^{-1/2})$ (up to some logarithmic factor), which match those in \cite{nguyen2023demystifying}.

% \begin{wrapfigure}[14]{r}{0.54\textwidth}
% \vspace{-2.7em}
% \begin{tabular}{@{\hspace{-3.6ex}} c @{\hspace{-2ex}} c @{\hspace{-2.5ex}}}
%     \begin{tabular}{c}
%     \includegraphics[width=.28\textwidth]{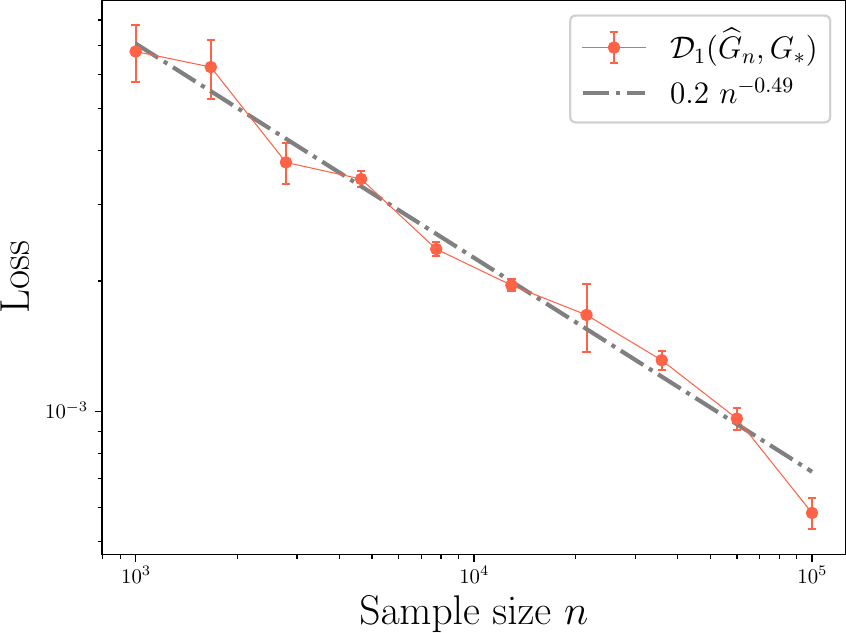}
%     %\vspace{-5pt}
%     \\
%     \end{tabular} & 
%     \begin{tabular}{c}
%     \includegraphics[width=.28\textwidth]{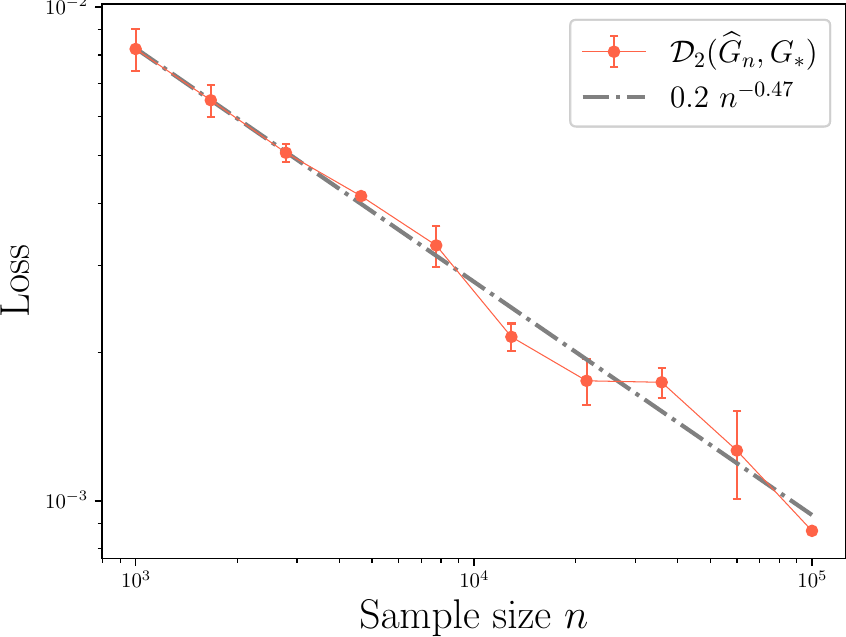}
%     %\vspace{-5pt}
%     \\
%     \end{tabular} \\
%     \end{tabular}
%     \vspace{-1em}
%     \caption{\small Log-log scaled plots illustrating simulation results under the exact-specified (left) and the over-specified settings (right). The orange curves depict the mean discrepancy between the MLE $\widehat{G}_n$ and the true mixing measure $G_*$, accompanied by error bars signifying two empirical standard deviations. Additionally, the gray dash-dotted line represents the least-squares fitted linear regression line for these data points. See Appendix~\ref{appendix:numerical_experiments} for the experimental details.}
%     \label{fig:simulation}
% \end{wrapfigure}

\begin{figure}[t!]


    \centering
    \begin{subfigure}{.4\textwidth}
        \centering
        \includegraphics[scale = .4]{exact.pdf}
        \caption{Exact-specified setting}
        %\label{fig:sigmoid_relu}
    \end{subfigure}
    \hspace{0.6cm}
    \begin{subfigure}{.4\textwidth}
        \centering
        \includegraphics[scale = .4]{over.pdf}
        \caption{Over-specified setting}	
        %\label{fig:sigmoid_linear}
    \end{subfigure}
    \caption{Log-log scaled plots illustrating simulation results under the exact-specified (left) and the over-specified settings (right). The orange curves depict the mean discrepancy between the MLE $\widehat{G}_n$ and the true mixing measure $G_*$, accompanied by error bars signifying two empirical standard deviations. Additionally, the gray dash-dotted line represents the least-squares fitted linear regression line for these data points. Finally, the loss functions $\mathcal{D}_1$ and $\mathcal{D}_2$ are defined in equations~\eqref{eq:Voronoi_loss} and \eqref{eq:Voronoi_loss_over}, respectively. See Appendix~\ref{appendix:numerical_experiments} for the experimental details.}
    \label{fig:simulation}
    %\vspace{-1.4em}
\end{figure}

\textbf{(ii)} The rates for estimating the parameters $W^*_i,a^*_i,b^*_i,\nu^*_i$ which are fitted by more than one component, i.e. $|\mathcal{A}^n_{i}|>1$, are no longer homogeneous. On the one hand, the estimation rates for the parameters $b^*_i$ and $\nu^*_i$ are of orders $\mathcal{O}(n^{-1/2\bar{r}(|\mathcal{A}^n_i|)})$ and $\mathcal{O}(n^{-1/\bar{r}(|\mathcal{A}^n_i|)})$, respectively, both of which are determined by the function $\bar{r}(\cdot)$ and vary with the number of fitted components $|\mathcal{A}^n_i|$. Those rates are comparable to their counterparts in \cite{nguyen2023demystifying}. On the other hand, the estimation rates for the gating parameters $W^*_i$ and the expert parameters $a^*_i$ are all of order $\mathcal{O}(n^{-1/4})$, which remains constant with respect to the number of fitted components. Meanwhile, those rates in \cite{nguyen2023demystifying} depend on a different system of polynomial equations from that in \eqref{eq:system}, which are significantly slower. 

\textbf{Advantage of Laplace Gating on FlexiModal Setting} ~In the standard Softmax gating \cite{nguyen2023demystifying}, the similarity score is computed as the inner product of a token's hidden representation and an expert embedding. However, this approach can lead to \textit{representation collapse} \cite{chi2022on,pham2024competesmoe}, where a subset of experts dominates the decision-making process, resulting in the redundancy of other experts. This issue likely contributes to the slow rates of estimating expert parameters $a^*_i$ in this setting (see Table~\ref{table:parameter_rates}). By contrast, the Laplace gating function partially alleviates this problem by computing the similarity score as the $L_2$-distance between token representations and expert embeddings. This approach does not inherently favor any expert based on magnitude, unlike inner product which can be biased towards experts with larger norms. The Laplace gating ensures that all experts have a more balanced opportunity to be selected based on how close they are to the token representation. Therefore, Laplace gating is beneficial when dealing with heterogeneous inputs, such as multimodal data, where its feature distributions can be very different across modalities. This is because it can handle these differences without being overly sensitive to the scale and variance of the input features. In addition, it can gracefully degrade in the presence of missing data, rather than causing abrupt changes in gating probabilities that might occur with inner product-based measures. The improved estimation rates for expert parameters $a^*_i$ under the Laplace gating Gaussian MoE, along with our empirical results on multiple large-scale datasets, substantiate these insights.
%In response, Chi et. al. \cite{chi2022on} proposed an alternative known as a cosine router. In particular, this router begins with projecting the token hidden representation into a low-dimensional space, followed by applying $\mathbb{L}_2$ normalization to both the token representations and expert embeddings. By doing so, the similarity scores become more stable, circumventing the dominance of certain experts. The efficacy of the cosine routing MoE has been experimentally demonstrated in language modeling \cite{chi2022on}, and domain generalization \cite{li2023sparse}. On the other hand, a comprehensive theoretical study of the cosine router has remained lacking.
\section{Experiments}\label{sec:exp}
\paragraph{Overview} We demonstrate that \texttt{FuseMoE} can provide accurate and efficient predictions when applied to the FlexiModal setting. We tested \texttt{FuseMoE} on a diverse set of benchmarks, including MIMIC-III \citep{johnson2016mimic} and MIMIC-IV \citep{johnson2020mimic}, CMU-MOSI and MOSEI \citep{zadeh2018multi}, the Physical Activity Monitoring (PAM) dataset \citep{reiss2012introducing}, and CIFAR-10 \citep{krizhevsky2009learning}. Compared to CMU-MOSI and MOSEI, the MIMIC ecosystem exhibits irregular and missing modality patterns and includes distinct modalities unlike PAM and CIFAR-10. Evaluating \texttt{FuseMoE} across these diverse datasets provides various empirical insights into critical aspects of our model's performance. Comprehensive details on the datasets, metrics, parameters, and additional results are thoroughly presented in the Appendices.

\begin{table*}[t]
\vspace{-1em}
\caption{\small MoE demonstrates improved performance averaged over 5 random experiments on the CMU-MOSI and MOSEI datasets; the best results are highlighted in \textbf{bold font} and the second best results are \underline{underlined}.}
\centering
\renewcommand\arraystretch{1.3}
% \vspace{-2em}
\scalebox{0.67}{
\begin{tabular}{c|cccc|cccc} \Xhline{4\arrayrulewidth}
\multirow{2}{*}{ Method / Data } & \multicolumn{4}{c}{MOSI Dataset} & \multicolumn{4}{c}{MOSEI Dataset} \\ \hhline{|~|-|-|-|-|-|-|-|-|}
& MAE$\downarrow$ & Acc-2$\uparrow$ & Corr$\uparrow$ & F1$\uparrow$ & MAE$\downarrow$ & Acc-2$\uparrow$ & Corr$\uparrow$ & F1$\uparrow$ \\ \hline
TFN & 0.90 $\pm$ 0.02 & 80.81 $\pm$ 0.34 & 0.70 $\pm$ 0.04 & 80.70 $\pm$ 0.18 & 0.59 $\pm$ 0.03 & 82.50 $\pm$ 0.58 & 0.68 $\pm$ 0.02 & 82.10 $\pm$ 0.41 \\
MulT & 0.86 $\pm$ 0.01 & 84.10 $\pm$ 0.21 & 0.71 $\pm$ 0.02 & 83.90 $\pm$ 0.27 & 0.58 $\pm$ 0.02 & 82.51 $\pm$ 0.41 & 0.71 $\pm$ 0.04 & 82.31 $\pm$ 0.27 \\
MAG & 0.71 $\pm$ 0.04 & 86.10 $\pm$ 0.44 & 0.80 $\pm$ 0.03 & 86.00 $\pm$ 0.09 & 0.57 $\pm$ 0.07 & 85.56 $\pm$ 0.22 & 0.79 $\pm$ 0.02 & 84.50 $\pm$ 0.18 \\
Softmax-MoE & 0.69 $\pm$ 0.01 & 87.09 $\pm$ 0.18 & \underline{0.82 $\pm$ 0.02} & 87.29 $\pm$ 0.22 & 0.55 $\pm$ 0.03 & 86.34 $\pm$ 0.23 & 0.76 $\pm$ 0.05 & 84.97 $\pm$ 0.32 \\
\cellcolor{ashgrey}Joint experts$\&$router & \underline{0.67 $\pm$ 0.02} & \underline{87.28 $\pm$ 0.35} & 0.82 $\pm$ 0.03 & \underline{87.35 $\pm$ 0.24} & \textbf{0.54 $\pm$ 0.01} & \textbf{86.41 $\pm$ 0.36} & \underline{0.81 $\pm$ 0.05} & \textbf{85.43 $\pm$ 0.25} \\
\cellcolor{ashgrey}Per-mod router & \textbf{0.65 $\pm$ 0.04} & \textbf{88.23 $\pm$ 0.57} & \textbf{0.84 $\pm$ 0.01} & \textbf{87.39 $\pm$ 0.13} & 0.56 $\pm$ 0.02 & \underline{86.12 $\pm$ 0.19} & 0.78 $\pm$ 0.02 & 85.07 $\pm$ 0.14 \\
\cellcolor{ashgrey}Disjoint router & 0.73 $\pm$ 0.02 & 86.37 $\pm$ 0.33 & 0.81 $\pm$ 0.04 & 86.89 $\pm$ 0.21 & \underline{0.55 $\pm$ 0.02} & 85.67 $\pm$ 0.31 & \textbf{0.81 $\pm$ 0.01} & \underline{85.21 $\pm$ 0.22} \\ \Xhline{4\arrayrulewidth}
\end{tabular}}
\label{tab:mosi}
\end{table*}

\begin{figure*}[t!]
    \begin{minipage}{\textwidth}
    \centering
    % \vspace{-2ex}
    \begin{tabular}{@{\hspace{-3.8ex}} c @{\hspace{-2.4ex}} c @{\hspace{-1.5ex}} c @{\hspace{-1.5ex}}}
        \begin{tabular}{c}
        \includegraphics[width=.34\textwidth]{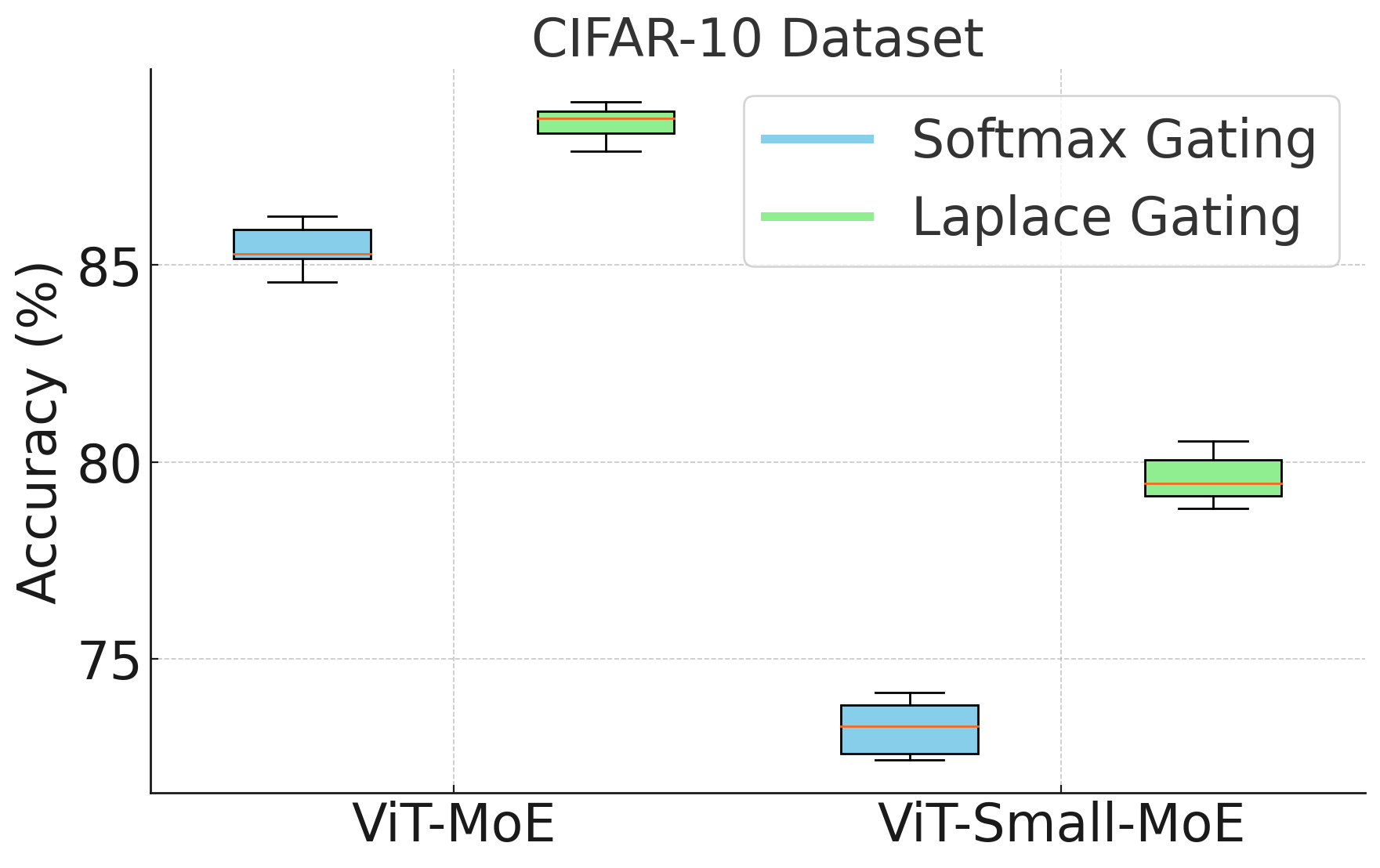}
        %\vspace{-5pt}
        \\
        {\small{(a)}}
        \end{tabular} & 
        \begin{tabular}{c}
        \includegraphics[width=.34\textwidth]{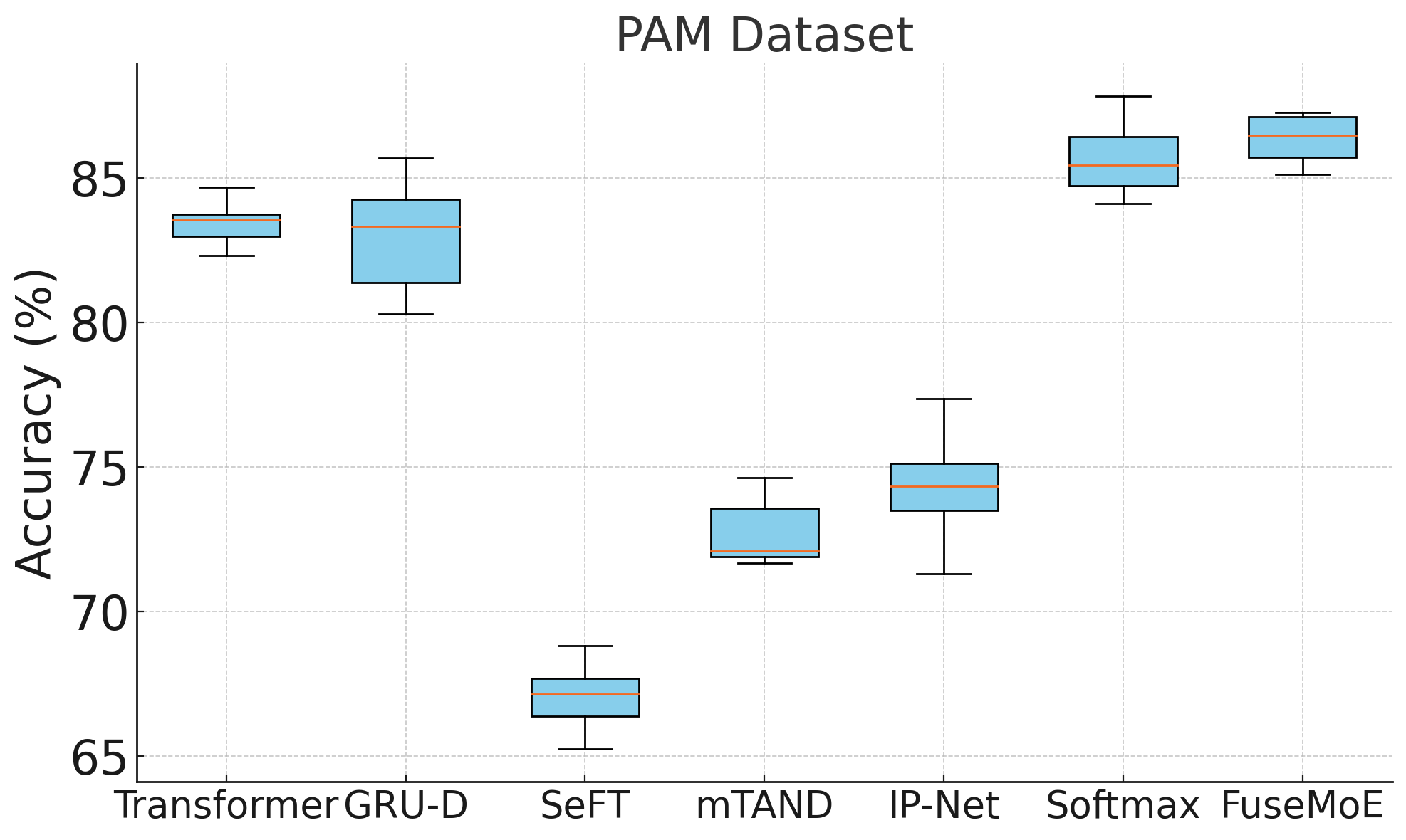} 
        %\vspace{-5pt}
        \\
        {\small{(b)}}
        \end{tabular} &
        \begin{tabular}{c}
        \includegraphics[width=.34\textwidth]{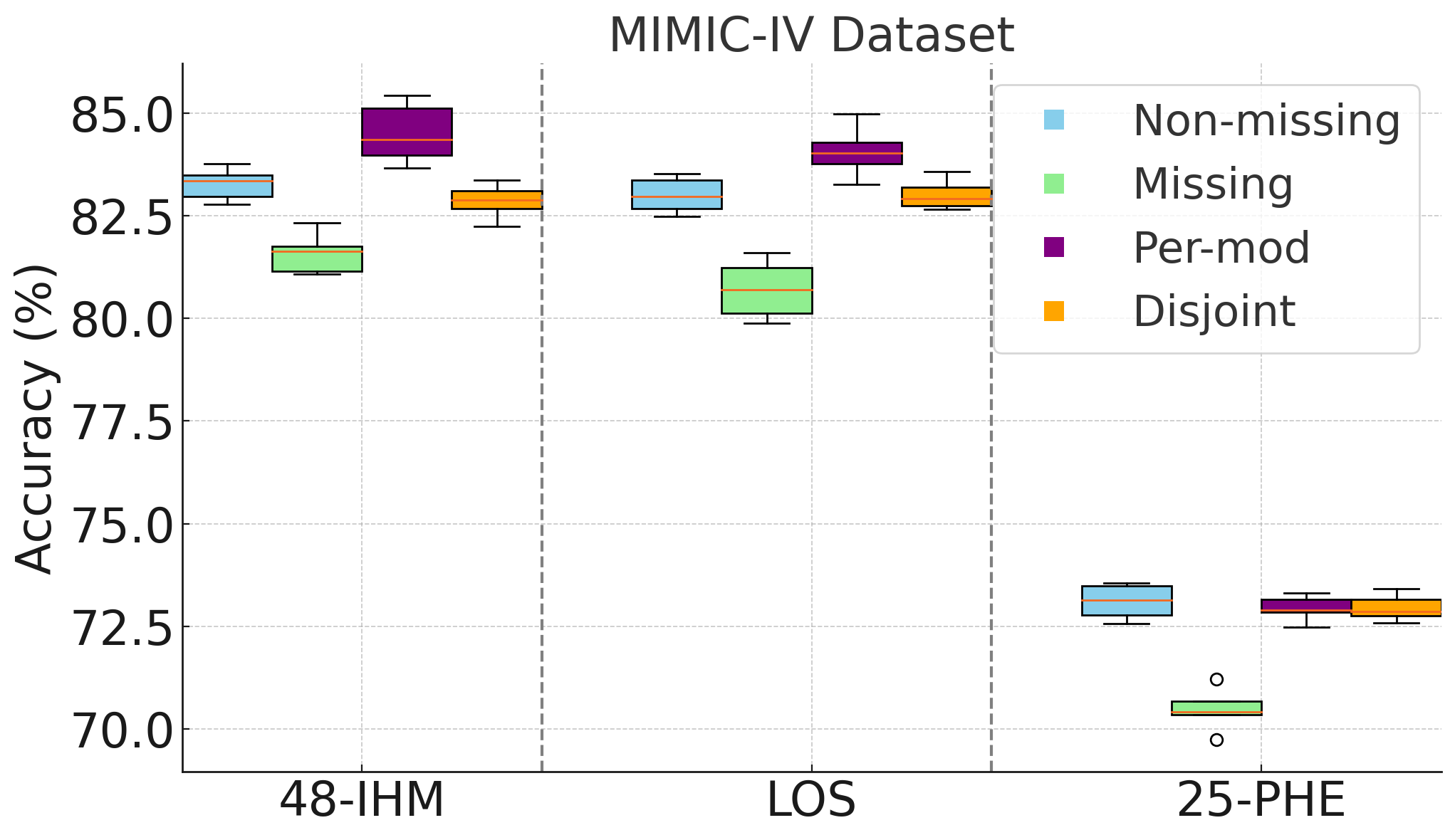} 
        %\vspace{-5pt}
        \\
        {\small{(c)}}
        \end{tabular} \\
        \end{tabular}
    \end{minipage}
    \vspace{-1em}
    \caption{\small (a) The Laplace gating mechanism enhances CIFAR-10 classification when integrated into Vision-MoE \citep{riquelme2021scaling}. We employed Vision Transformer (ViT) \citep{dosovitskiy2020image} and ViT-small as the backbone models and selectively replaced their FFN layers with MoE layers; (b) \texttt{FuseMoE} improves prediction on PAM dataset over baseline time series models; (c) Per-modality routers and the entropy loss $\mathcal{E}$ mitigate the impact of missing modalities.}
    \label{fig:box_plot_results}
\end{figure*}

\subsection{Main Results}
% \paragraph{MoE Enhances Multimodal Fusion}
\paragraph{CMU-MOSI and MOSEI Datasets}
We first apply our method to the CMU-MOSI and MOSEI datasets \citep{zadeh2018multi}, which utilize visual, acoustic, and textual data for sentiment analysis and emotion recognition tasks. Our methodology employs pre-trained T5 \citep{2020t5} for text encoding, librosa \citep{mcfee2015librosa} for audio feature extraction, and EfficientNet \citep{tan2019efficientnet} for video feature encoding. Table \ref{tab:mosi} details the performance of various router design mechanisms within our MoE architecture, utilizing the Laplace gating function, compared against representative baselines. The baselines include (1) the early fusion method, Tensor Fusion Network (TFN) \citep{zadeh2017tensor}; (2) the Multimodal Transformer (MulT), which fuses modalities by modeling their interactions \citep{tsai2019multimodal}; (3) the Multimodal Adaptation Gate (MAG), which focuses on the consistency and differences across modalities \citep{rahman2020integrating}; and (4) multimodal fusion using standard MoE with the Softmax gating function. Results indicate that employing an MoE backbone—regardless of the gating function chosen or whether utilizing per-modality routers or a joint experts \& router configuration—significantly enhances performance on the multimodal task. This improvement is attributed to the MoE's ability to effectively allocate specific components to handle distinct input modalities, thus better addressing both inter- and intra-modal relationships.

% \vspace{-1em}
\paragraph{CIFAR-10 Dataset}
Subsequently, we evaluate our method using the Vision-MoE framework \citep{riquelme2021scaling} on the CIFAR-10 classification task \citep{krizhevsky2009learning}, with results illustrated in Figure \ref{fig:box_plot_results}(a). In this experiment, we selectively replace the FFN layers with an even number in the Vision Transformer (ViT) models with MoE layers. These results, along with Table \ref{tab:mosi} on the CMU-MOSI and MOSEI datasets comparing Softmax-gating MoE, indicate that the Laplace gating function surpasses the standard Softmax gating function in performance. This outcome is consistent with our theoretical claims.

% In Tables \ref{tab:mimic_4_two_mod} and \ref{tab:mimic_4_all_mod}, which are thoroughly detailed in the following paragraphs, we leverage various modalities from the MIMIC-IV dataset \citep{johnson2020mimic}. In addition to the commonly used Softmax gating function, we also evaluated the Gaussian gating function \citep{xu1994alternative} as a comparative benchmark. 
% \vspace{-1em}
\paragraph{MIMIC-IV and PAM Datasets} We then conduct comprehensive evaluations of \texttt{FuseMoE} on MIMIC-IV \citep{johnson2020mimic}, and the Physical Activity Monitoring (PAM) dataset \citep{reiss2012introducing}. These datasets feature multiple input modalities, each \textit{characterized by varying degrees of irregular sampling or significant levels of missingness}. Our tasks of interest for MIMIC datasets include the 48-hour in-hospital mortality prediction (48-IHM), 25-type phenotype classification (25-PHE), and length-of-stay (LOS) prediction. In addition to the previously mentioned baselines, we have incorporated the HAIM method \citep{soenksen2022integrated}, a data pipeline specifically designed for integrating multimodal data from the MIMIC-IV dataset. We also include the cross-attention combined with irregular sequences modeling approach (MISTS) \citep{zhang2023improving}. Table \ref{tab:mimic_4_two_mod} shows the outcomes of combining irregular vital signs and clinical notes from the MIMIC-IV dataset. In addition to the commonly used Softmax gating function, we also evaluated the Gaussian gating function \citep{xu1994alternative} as a comparative benchmark. The \texttt{FuseMoE}-based methods surpass baselines in most scenarios, often by a non-trivial margin. Furthermore, we observe that HAIM shows considerable efficacy in extracting features from time series, resulting in a strong performance in the 48-IHM and LOS tasks, which are heavily reliant on such data. However, its performance appears more moderate on the 25-PHE task. The PAM dataset captures daily living activities through 17 sensors, with data from each sensor treated as a separate modality. These modalities are individually processed through time-series and irregularity encoders before being integrated into the FuseMoE framework. Our baselines include the Transformer \citep{vaswani2017attention}, GRU-D \citep{che2018recurrent}, SeFT \citep{horn2020set}, a mTAND-only configuration, and IP-Net \citep{shukla2019interpolation}. We use the Laplace gating and its joint experts \& router structure in these experiments. The results in Figure \ref{fig:box_plot_results}(b) have again shown the efficacy of integrating the irregularity encoder with the MoE fusion layer.

\begin{table*}[t]
%\vspace{-1em}
\caption{\small Comparison of \texttt{FuseMoE}-based methods (gray) and baselines, utilizing vital signs and clinical notes of MIMIC-IV \citep{johnson2020mimic}. The best results are highlighted in \textbf{bold font}, and the second-best results are \underline{underlined}. All results are averaged across 5 random experiments.}
\centering
\renewcommand\arraystretch{1.3}
\scalebox{0.65}{
\begin{tabular}{c|c|cccccccc}
\Xhline{4\arrayrulewidth}
\multicolumn{2}{c}{Task $\backslash$ Method} & MISTS & MulT & MAG & TFN & HAIM & \cellcolor{ashgrey}Softmax & \cellcolor{ashgrey}Gaussian & \cellcolor{ashgrey}Laplace \\ \hline
\multirow{2}{*}{ 48-IHM } & AUROC & 75.06 $\pm$ 1.03 & 75.95 $\pm$ 0.84 & 75.82 $\pm$ 0.73 & 78.76 $\pm$ 0.79 & 79.65 $\pm$ 0.00 & 79.49 $\pm$ 0.83 & \underline{80.76 $\pm$ 0.56} & \textbf{81.03 $\pm$ 0.25} \\
& F1 & 45.61 $\pm$ 0.34 & 38.81 $\pm$ 0.22 & 42.55 $\pm$ 0.82 & 40.61 $\pm$ 0.41 & 39.79 $\pm$ 0.00 & 42.86 $\pm$ 0.44 & \textbf{46.86 $\pm$ 0.24} & \underline{46.53 $\pm$ 0.57} \\ \hline
\multirow{2}{*}{ LOS } & AUROC & 80.56 $\pm$ 0.33 & 81.36 $\pm$ 1.32 & 81.13 $\pm$ 0.66 & 80.71 $\pm$ 0.45 & \underline{82.58 $\pm$ 0.00} & 82.11 $\pm$ 0.39 & 81.92 $\pm$ 0.73 & \textbf{82.91 $\pm$ 1.02} \\
& F1 & 73.01 $\pm$ 0.52 & 73.45 $\pm$ 0.59 & 72.51 $\pm$ 0.27 & 73.84 $\pm$ 0.61 & 73.18 $\pm$ 0.00 & 74.43 $\pm$ 0.88 & \underline{74.46 $\pm$ 0.52} & \textbf{74.58 $\pm$ 0.63} \\ \hline
\multirow{2}{*}{ 25-PHE } & AUROC & 69.45 $\pm$ 0.72 & 66.58 $\pm$ 0.41 & 69.55 $\pm$ 0.67 & 69.18 $\pm$ 0.32 & 63.39 $\pm$ 0.00 & \underline{70.54 $\pm$ 0.47} & 70.42 $\pm$ 0.26 & \textbf{71.23 $\pm$ 0.53} \\
& F1 & 28.59 $\pm$ 0.46 & 28.55 $\pm$ 0.31 & 27.86 $\pm$ 0.29 & 28.52 $\pm$ 0.22 & \textbf{42.13 $\pm$ 0.00} & 31.25 $\pm$ 0.18 & 30.44 $\pm$ 0.27 & \underline{31.33 $\pm$ 0.19} \\ \Xhline{4\arrayrulewidth}
\end{tabular}}
\label{tab:mimic_4_two_mod}
\end{table*}

\begin{table*}[t]
%\vspace{-1em}
\caption{\small Incorporating CXR and ECG into \texttt{FuseMoE} leads to a noticeable enhancement as compared to their two-modality counterparts. All results are averaged across 5 random experiments.}
\centering
\renewcommand\arraystretch{1.3}
% \vspace{-2em}
\scalebox{0.65}{
\begin{tabular}{c|c|cccc|cccc} \Xhline{4\arrayrulewidth}
\multicolumn{2}{c}{\multirow{2}{*}{ Task $\backslash$ Method }} & \multicolumn{4}{c}{Vital \& Notes \& CXR} & \multicolumn{4}{c}{Vital \& Notes \& CXR \& ECG} \\ \hhline{|~|~|-|-|-|-|-|-|-|-|}
\multicolumn{2}{c}{} & HAIM & \cellcolor{ashgrey}Softmax & \cellcolor{ashgrey}Gaussian & \cellcolor{ashgrey}Laplace & HAIM & \cellcolor{ashgrey}Softmax & \cellcolor{ashgrey}Gaussian & \cellcolor{ashgrey}Laplace \\ \hline
\multirow{2}{*}{ 48-IHM } & AUROC & 78.87 $\pm$ 0.00 & 83.13 $\pm$ 0.36 & \underline{83.64 $\pm$ 0.47} & \textbf{83.87 $\pm$ 0.33} & 78.87 $\pm$ 0.00 & 82.92 $\pm$ 0.22 & \underline{83.03 $\pm$ 0.85} & \textbf{83.55 $\pm$ 0.49} \\
& F1 & 39.78 $\pm$ 0.00 & \textbf{46.82 $\pm$ 0.28} & 38.87 $\pm$ 0.26 & \underline{45.36 $\pm$ 0.46} & 39.78 $\pm$ 0.00 & \underline{46.87 $\pm$ 0.17} & 44.04 $\pm$ 0.26 & \textbf{46.88 $\pm$ 0.42} \\ \hline
\multirow{2}{*}{ LOS } & AUROC & 82.46 $\pm$ 0.00 & \textbf{83.76 $\pm$ 0.59} & \underline{83.64 $\pm$ 0.52} & 83.51 $\pm$ 0.51 & 82.46 $\pm$ 0.00 & \underline{83.53 $\pm$ 0.34} & 83.47 $\pm$ 0.37 & \textbf{83.58 $\pm$ 0.78} \\
& F1 & 72.75 $\pm$ 0.00 & 74.32 $\pm$ 0.44 & \textbf{76.59 $\pm$ 0.74} & \underline{75.18 $\pm$ 0.77} & 72.75 $\pm$ 0.00 & \underline{75.01 $\pm$ 0.63} & 74.43 $\pm$ 0.64 & \textbf{75.11 $\pm$ 0.65} \\ \hline
\multirow{2}{*}{ 25-PHE } & AUROC & 63.57 $\pm$ 0.00 & \textbf{73.87 $\pm$ 0.71} & 72.68 $\pm$ 0.61 & \underline{73.65 $\pm$ 0.39} & 63.82 $\pm$ 0.00 & 73.64 $\pm$ 0.89 & \textbf{73.74 $\pm$ 0.41} & \underline{73.67 $\pm$ 0.71} \\
& F1 & \textbf{42.80 $\pm$ 0.00} & 35.96 $\pm$ 0.23 & 35.09 $\pm$ 0.15 & \underline{36.01 $\pm$ 0.42} & \textbf{43.20 $\pm$ 0.00} & 36.06 $\pm$ 0.17 & \underline{36.46 $\pm$ 0.55} & 35.81 $\pm$ 0.34 \\ \Xhline{4\arrayrulewidth}
\end{tabular}}
\label{tab:mimic_4_all_mod}
\end{table*}

\subsection{Ablation Studies}
\paragraph{Scalability of FuseMoE with Increasing Modalities}
% We then expand the \texttt{FuseMoE} framework to incorporate additional modalities. Note that, except for HAIM, the baselines were not designed to be agnostic to the quantity and variety of input modalities. Therefore, adapting them to manage extra and missing modalities requires considerable model changes, which might compromise their performance.
Table \ref{tab:mimic_4_all_mod} presents the revised outcomes of the MIMIC-IV dataset after integrating CXR and ECG of corresponding patients, employing the per-modality router and the entropy loss $\mathcal{E}$ within \texttt{FuseMoE}. This setup was chosen as it slightly outperformed the joint router with an increase in modalities. Relative to their two-modality versions, \texttt{FuseMoE} has effectively harnessed additional information (notably from CXR), resulting in a significant enhancement in performance. Conversely, the addition of new modalities did not benefit the HAIM method, possibly due to its reliance on vital signs and clinical notes without adequately addressing the dynamics between different modalities. Furthermore, HAIM's notably high F1 scores on the 25-PHE task can be attributed to XGBoost's proficiency in managing missing minority classes. Note that, except for HAIM, other baselines were not designed to be agnostic to the quantity and variety of input modalities. Therefore, adapting them to manage extra and missing modalities requires considerable model changes, which might compromise their performance.

% \vspace{-1em}
\paragraph{Missing Modalities}
Figure \ref{fig:box_plot_results}(c) illustrates the effectiveness of utilizing per-modality routers and the entropy loss $\mathcal{E}$ in addressing missing modalities. Initially, we compare the performance of \texttt{FuseMoE} on patients with fully available modalities against those with missing components, employing a joint router mechanism with the importance loss function \citep{shazeer2017outrageously}, to ensure load balancing. The inclusion of datasets with missing modalities, while expanding the sample size, resulted in a decrease in performance due to the compromised data quality. However, a performance enhancement was observed upon integrating per-modality or disjoint routers with $\mathcal{E}$. Notably, the outcomes for the 48-IHM and LOS tasks with missing modalities surpassed those obtained from datasets without any missingness. This is because the per-modality approach can better separate the present and missing modalities, reducing the influence of experts responsible for processing the absent inputs. Therefore, this leads to a more efficient exploitation of a broader array of samples. Implementation of \texttt{FuseMoE} and our baselines can be found at \texttt{\textcolor{magenta}{https://github.com/aaronhan223/FuseMoE}}.
\section{Discussions and Limitations}\label{sec:conclusion}
% In summary, our core contributions include (1) the development of an advanced Transformer-based framework that incorporates sparse MoE layers; this framework is adept at harnessing multiple EHR modalities (e.g., vital signs, chest X-rays, electrocardiogram (ECG), and clinical notes) and has achieved state-of-the-art performance. (2) Effectively utilizes patient records even with missing modalities, resulting in a broader utilization of the spectrum of EHR samples. (3) Proposed a novel Laplace gating function, which not only achieves noticeably improved results but also comes with a theoretical guarantee. Explore FlexiModal Data applications in other critical domains.
% In this paper, we introduced \texttt{FuseMoE}, a model adept at managing multimodal data characterized by random missingness or irregularity—a crucial yet relatively unexplored challenge. 
% Unlike prior methods that heavily depend on the quantity and variety of input modalities, \texttt{FuseMoE} integrates MoE fusion layers and offers multiple router configurations to adeptly handle multimodal inputs across different complexity levels. 
% \texttt{FuseMoE} also employs an innovative Laplace gating function, which provides better results in both theory and empirical evaluation. 
% % Through empirical evaluation, \texttt{FuseMoE} has demonstrated superior performance across diverse scenarios. 
% We will focus on identifying more effective modality encoders that seamlessly integrate with the MoE fusion layer and extend the application of \texttt{FuseMoE} to other critical domains.

In this paper, we introduced \texttt{FuseMoE}, a model adept at
managing multimodal data characterized by random missingness or irregularity—a crucial yet relatively unexplored
challenge. \texttt{FuseMoE} integrates MoE fusion layers with
modal embeddings and offers multiple router configurations to adeptly handle multimodal inputs across different
complexity levels. \texttt{FuseMoE} also employs an innovative
Laplace gating function, which provides better theoretical
results. Through empirical evaluation, \texttt{FuseMoE} has demonstrated superior performance across diverse scenarios. However, our current approach to encoding irregularities may potentially lead to over-parameterization when the input size is small.
In our future work, we aim to identify simpler and more efficient methods to handle the irregularities of input samples while preserving the model's overall performance.

\section*{Acknowledgement}
Xing Han and Suchi Saria acknowledge support from the National Science Foundation (NSF) and the Gordon and Betty Moore Foundation. Carl Harris acknowledges support from the NSF Graduate Research Fellowship under Grant No. DGE 2139757. Nhat Ho acknowledges support from the NSF IFML 2019844 and the NSF AI Institute for Foundations of Machine Learning. Any opinion, findings, and conclusions or recommendations expressed in this material are those of the authors(s) and do not necessarily reflect the views of the NSF, Gordon and Betty Moore Foundation.

\bibliography{references}
\bibliographystyle{plainnat}

%%%%%%%%%%%%%%%%%%%%%%%%%%%%%%%%%%%%%%%%%%%%%%%%%%%%%%%%%%%%%%%%%%%%%%%%%%%%%%%
%%%%%%%%%%%%%%%%%%%%%%%%%%%%%%%%%%%%%%%%%%%%%%%%%%%%%%%%%%%%%%%%%%%%%%%%%%%%%%%

% APPENDIX

\newpage
\appendix
\onecolumn
% \section{You \emph{can} have an appendix here.}
% \section{Comparison with Baselines}

% \begin{table}[h]
% \centering
% \renewcommand\arraystretch{1.2}
% \scalebox{0.9}{
% \begin{tabular}{p{0.16\linewidth} | p{0.18\linewidth}| p{0.17\linewidth}| p{0.17\linewidth}| p{0.17\linewidth}| p{0.17\linewidth}}
% \textbf{Method} & \cite{soenksen2022integrated} & \cite{zhang2023improving} & \cite{zadeh2017tensor} & \cite{mustafa2022multimodal} & \texttt{FuseMoE} \\
% Type & MIMIC-IV pipeline & Modal Fusion via Cross Attn & Modal Fusion via Tensor Decomposition & Multimodal MoE baseline & Ours \\
% Irregularity & \textcolor{red}{\xmark} & \textcolor{green}{\cmark} & \textcolor{red}{\xmark} & \textcolor{red}{\xmark} & \textcolor{green}{\cmark} \\
% Missingness & \textcolor{red}{\xmark} & \textcolor{red}{\xmark} & \textcolor{red}{\xmark} & \textcolor{red}{\xmark} & \textcolor{green}{\cmark} \\
% Num of Mods & $\geq$4 & 2 & 2 & 2 & $\geq$4 \\
% Theory & \textcolor{red}{\xmark} & \textcolor{red}{\xmark} & \textcolor{red}{\xmark} & \textcolor{red}{\xmark} & \textcolor{green}{\cmark} \\
% Adapt FlexiModal? & \textcolor{green}{\cmark} & \textcolor{red}{\xmark} & \textcolor{red}{\xmark} & \textcolor{red}{\xmark} & Adapted \\
% \end{tabular}
% }
% \caption{Comparison on }
% \label{tab:compare}
% \end{table}

\begin{center}
\textbf{\Large{Appendix for \\ \vspace{.2em} 
``FuseMoE: Mixture-of-Experts Transformers for \\Fleximodal Fusion''}}
\end{center}
% In this appendix, we provide further discussions on related works (in Appendix \ref{app:related}), additional details on the datasets and tasks of interest (in Appendix \ref{app:tasks_of_interest}), mechanisms of different router designs (in Appendix \ref{app:router}), data preprocessing pipelines (in Appendix \ref{app:data_preprocessing}), irregularity encoder (in Appendix \ref{app:irregularity}), baseline methods (in Appendix \ref{app:baseline_comparison}), computational resources and hyperparameters (in Appendix \ref{app:training}), additional experimental results (in Appendix \ref{app:results}), details on numerical experiments (in Appendix \ref{appendix:numerical_experiments}), exact-specified setting of Laplace gating (in Appendix \ref{appendix:exact_specified}), and proofs for all theoretical results (in Appendix \ref{app:proofs}).
% % \tableofcontents

\tableofcontents

\newpage
\section{Related Works}
\label{app:related}

\paragraph{Multimodal Fusion}
% The literature on fusion strategies can generally be categorized into three types based on the timing and methodology of integrating multi-modal information: early fusion, late fusion, and hybrid fusion. Early fusion involves the integration of multi-modal information at the initial stages of the processing pipeline, before any feature extraction or learning processes occur \citep{huang2012affinity}. Late fusion, by contrast, integrates information after features have been independently extracted and processed \citep{cao2015diversity, wang2009unified, wang2012multimodal}. Hybrid fusion combines elements of both early and late fusion strategies and involves integrating multi-modal information at both the initial stages and after feature extraction \citep{wang2016iterative, wang2015unsupervised}.

% Additionally, fusion strategies also align with specific machine-learning methodologies.
Initial approaches to multimodal fusion incorporated techniques such as kernel-based methods \citep{bucak2013multiple, chen2014emotion, poria2015deep}, graphical models \citep{nefian2002coupled, garg2003boosted, reiter2007hidden}, and neural networks \citep{ngiam2011multimodal, gao2015you, nojavanasghari2016deep}. With the diverse evolution of deep learning models, numerous advanced methods have now been employed in the fusion of multimodal data. In the realm of sentiment analysis, \cite{zadeh2017tensor, liu2018efficient} employ a low-rank Tensor Fusion method that leverages both language and video content. Attention-gating mechanisms are used by \cite{rahman2020integrating, yang2021leverage} to generate displacement vectors through cross-modal self-attention, which are then added to the input vectors from the primary modality. \cite{tsai2019multimodal} takes an alternative approach by integrating multiple layers of cross-modal attention blocks in a word-level vision/language/audio alignment task.

In the context of clinical prediction, \cite{khadanga2019using, deznabi2021predicting} adopt a late fusion approach to combining vital sign and text data by concatenating embeddings from pre-trained feature extractors. \cite{soenksen2022integrated} developed a generalizable data preprocessing and modeling pipeline for EHR encompassing four data modalities, albeit through a direct concatenation of existing feature embeddings for each modality followed by an XGBoost classifier \cite{chen2016xgboost}. Recently, \cite{zhang2023improving} expanded on the work of \cite{tsai2019multimodal} by introducing a discretized multi-time attention (mTAND) module \cite{shukla2021multi} to encode temporal irregularities in time series and text data. Their fusion approach involves layering sets of self- and cross-modal attention blocks. However, this approach is limited to just two modalities and is not easily extendable to include additional modal components or handle missing modalities. 
% Moreover, it also lacks strategies for handling missing modalities, thereby restricting its application to patients with comprehensive modality records. 
To the best of our knowledge, existing works are tailored to application-specific settings that necessitate the computation of pairwise cross-modal relationships, which are not scalable to more general settings with arbitrary modalities. Moreover, these studies typically do not account for scenarios where modalities are missing, or rely on imputation approaches based on observed data.

\paragraph{Mixture-of-Experts} MoE \citep{jacobs1991adaptive, xu1994alternative} has gained significant popularity for managing complex tasks since its introduction three decades ago. Unlike traditional models that reuse the same parameters for all inputs, MoE selects distinct parameters for each specific input. This results in a sparsely activated layer, enabling a substantial scaling of model capacity without a corresponding increase in computational cost. Recent studies have demonstrated the effectiveness of integrating MoE with cutting-edge models across a diverse range of tasks \citep{shazeer2017outrageously, fedus2022switch, zhou2023brainformers}. These works have also tackled key challenges such as accuracy and training instability \citep{nie2021evomoe, zhou2022mixture, puigcerver2023sparse}. Given its ability to assign input partitions to specialized experts, MoE naturally lends itself to multimodal applications. This approach has been explored in fields such as vision-language modeling \citep{mustafa2022multimodal, shen2023scaling} and dynamic image fusion \citep{cao2023multi}. However, the application of MoE in complex real-world settings, such as those involving FlexiModal Data, remains largely unexplored. This gap presents an opportunity to leverage MoE's potential in handling its intricate and multifaceted nature such as multimodal EHR, where reliable multimodal integration is crucial.

\paragraph{MoE Theory} 
While MoE has been widely employed to scale up large models, its theoretical foundations have remained nascent. Recently, \cite{nguyen2023demystifying} provided convergence rates for both density and parameter estimation of Softmax gating Gaussian MoE. They connected these rates to the solvability of systems of polynomial equations under Voronoi-based loss functions. Later, \cite{nguyen2024statistical} extended these theories to the top-K sparse softmax gating Gaussian MoE. Their theories further characterize the effect of the sparsity of gating functions on the behaviors of parameter estimation and verify the benefits of using top-1 sparse softmax gating MoE in practice. Other theoretical results for MoE include estimation rates of parameters and experts for multinomial logistic MoE~\cite{nguyen2024general}, for dense-to-sparse gating MoE~\cite{nguyen2024temperature}, for Gaussian gating MoE \citep{nguyen2024gaussian}, and for input-independent gating MoE~\cite{ho2022gaussian}. 

\section{FlexiModal Data and Tasks of Interest} \label{app:tasks_of_interest}
\paragraph{Definition of FlexiModal Data} We provide a generic definition for the FlexiModal Data as we used throughout the paper. 
Let $\mathcal{D} = \{(x_i^{m_1}, t_i^{m_1}), (x_i^{m_2}, t_i^{m_2}), \dots, (x_i^{m_j}, t_i^{m_j}), y_i\}_{i=1}^N$ to be the FlexiModal dataset with $N$ units, where $x_i^{m_j}$ represents the input sequence from the $i^{\mathrm{th}}$ unit of the $j^{\mathrm{th}}$ modality, $t_i^{m_j}$ denotes the corresponding time points, and $y_i$ is the task-specific outcome. Take multimodal EHR as an example, each $j^{\mathrm{th}}$ modality, which may vary from time-series data like heart rate, blood pressure, and glucose levels to high-dimensional inputs such as clinical notes and X-rays, contains $l^{\mathrm{m_j}}$ observations. Figure \ref{fig:all_tasks} is a more specific illustration of the FlexiModal example.

\subsection{MIMIC-IV and MIMIC-III Datasets}
\paragraph{Tasks} In the ICU, where rapid and informed decisions are crucial, accurate mortality prediction is essential to provide clinicians with advanced warnings of patient deterioration, aiding in critical decision-making processes \cite{awad2017early}. Similarly, the prediction of patient length-of-stay is indispensable for optimizing treatment plans, resource allocation, and discharge processes \cite{bertsimas2022predicting}. Further, phenotyping of critical care conditions is highly relevant to comorbidity detection and risk adjustment and presents a more challenging task than binary classification, due to the heterogeneous presentation of conditions and the larger number of prediction tasks \cite{zhang2023improving}. We concentrate on three critical care tasks as highlighted in \citep{keuning2020mortality}, performing extensive empirical analysis on each building block of the proposed framework.

\begin{itemize}
    \item \textbf{48-IHM} ~In this binary classification task, we predict in-hospital mortality based on the first 48 of the ICU stay for patients who stayed in the ICU for at least 48 hours.
    \item \textbf{LOS} ~We formulate our length-of-stay task similar to that of 48-IHM: for patients who spent at least 48 hours in the ICU, we predict ICU discharge without expiration within the following 48 hours.
    \item \textbf{25-PHE} ~In this multilabel classification problem, we attempt to predict one of 25 acute care conditions \cite{elixhauser2009clinical, lovaasen2012icd} (e.g., congestive heart failure, pneumonia, shock, etc.) at the \textit{end} each each patient's ICU stay. Because the original task was designed for diagnoses based on ICD-9 codes, but MIMIC-IV includes both ICD-9 and ICD-10 codes, we map patients with diagnoses coded using ICD-10 using the conversion database provided by \cite{butler2007icd}.
\end{itemize}

\begin{figure}[htb]
\centering
\includegraphics[width=1\columnwidth]{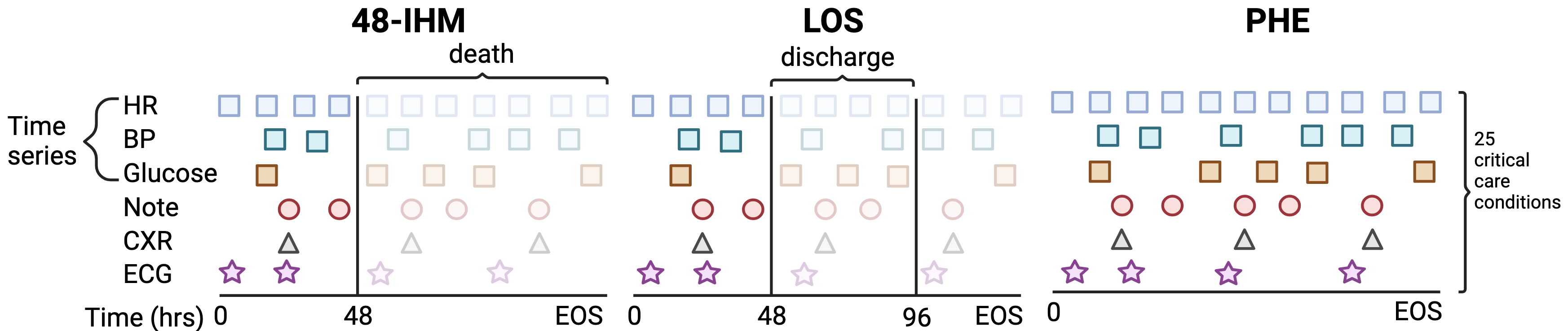}
\caption{\textbf{Schematic of tasks of interest.} Plotted are example vitals/labs, radiological notes, X-rays, and ECGs sampled over the course of a patient's ICU stay. The first three rows represent example observations from a single modality consisting of three irregularly sampled vital signs (HR, BP), and lab values (Glucose). The following three rows represent irregularly sampled radiological notes, X-rays, and ECGs. Opaque shapes denote observations falling within the observation window (i.e., observations that are used to generate predictions), while translucent shapes are not used to generate predictions. For the \textbf{48-IHM} task, we use the first 48 hours of observations to predict death at any time during the ICU stay. For the \textbf{LOS} task, we use the first 48 hours of observations to predict whether the patient will be discharged (alive) during the following 48 hours. And in the phenotyping task (\textbf{PHE}), we use all observations to predict one of 25 critical care conditions.}
\label{fig:all_tasks}
\end{figure}

We implement an in-hospital mortality prediction (\textbf{48-IHM}) task to evaluate our method's ability to predict short-term patient deterioration. Similarly, an accurate determination of patient discharge times is crucial for optimizing patient outcomes and hospital resource allocation \cite{bertsimas2022predicting}, which motivates our length-of-stay (\textbf{LOS}) task. We frame 48-IHM and LOS as binary classification problems and use a 48-hour observation window (for patients who spent at least 48 hours in the ICU) to predict in-hospital mortality (48-IHM) and discharge (without expiration)  within the 48 hours following the observation window (LOS). Lastly, identifying the presence of specific acute care conditions in patient records is essential for various clinical objectives, including the construction of cohorts for clinical studies and the detection of comorbidities \cite{agarwal2016learning}. Traditional methods, often reliant on manual chart reviews or simple billing code-based definitions, are increasingly being supplemented by machine learning techniques \cite{harutyunyan2019multitask}; automating this process requires high-fidelity classifications, motivating our 25-type phenotype classification (\textbf{25-PHE}) task. In this multilabel classification problem, we attempt to predict one of 25 acute care conditions using data from the entire ICU stay.

\paragraph{Evaluation} 
In our initial analysis, we focused on patients with no missing modalities, resulting in a dataset comprised of 8,770 ICU stays for the 48-IHM and LOS tasks, and 14,541 stays for the 25-PHE task. For our analyses \textit{with} missing observations, we include a total of 35,129 stays for 48-IHM and LOS, and 71.173 for 25-PHE.
% we include patients with \textit{at least one} vital/lab measurement measurement ($\sim 99.99\%$ of ICU stays) in the first 48 hours (48-IHM and LOS; 35,129 stays) or during the entire ICU stay (25-PHE; 71,173 stays). 
To evaluate the single-label tasks, 48-IHM and LOS, we employ the F1-score and AUROC as our primary metrics. In line with previous studies \citep{zhang2023improving, lin2019predicting, arbabi2019identifying}, we use macro-averaged F1-score and AUROC to assess the 25-PHE task.

\paragraph{Dataset Information}
We leveraged data from MIMIC-IV \citep{johnson2020mimic}, a comprehensive database with records from nearly $300k$ patients admitted to a medical center from 2008 to 2019, focusing on the subset of 73,181 ICU stays. We were able to link core ICU records (containing lab results and vital signs) to corresponding chest X-rays \citep{johnson2019mimiccxr}, radiological notes \citep{johnson2023mimic}, and electrocardiogram (ECG) data \citep{gowmimicecg} taking place during a given ICU stay. We allocated 70 percent of the data for model training, with the remaining 30 percent evenly split between validation and testing.

\paragraph{Missingness Rates} \label{app:missingness_rates}
The total number of samples for each of our three tasks (i.e., those in which \textit{at least one} vital sign was recorded in the specified observation window), along with the total number of observations per-modality, are shown in Table \ref{tab:missingness}.

\begin{table}[htb]
\caption{We present the total number of ICU stays in each task, taking into account observations with missing modalities. The total number of stays with \textit{at least one} observation of the corresponding modality are shown in the three right-most columns.}
  \centering
  \setlength\tabcolsep{4pt}
  \begin{tabular}{c|c|ccc}
    \toprule[2pt]
    \textbf{Task(s)} & \textbf{Total} & \textbf{Text} & \textbf{CXR} & \textbf{ECG} \\ \midrule
    48-IHM \& LOS & 35,129 & 32,038 & 8,781 & 18,271 \\
    25-PHE & 73,173 & 56,824 & 14,568 & 35,925 \\
    \bottomrule[2pt]
  \end{tabular}
  \label{tab:missingness}
\end{table}

\subsection{MOSI and MOSEI Datasets}
\paragraph{Task} ~We focus on the multimodal sentiment analysis (MSA) task which aims to predict sentiment polarity $\in$ \{positive, negative, and neutral\} and sentiment intensity, which is a real number ranging from -3 to +3 under a multimodal setting. 

\paragraph{Evaluation} Following previous work such as \cite{han2021improving}, we adopt mean absolute error
(MAE), Pearson correlation (Corr), binary classification
accuracy, F1 score computed for non-negative/negative class as evaluation metrics.

\paragraph{Dataset Information} The CMU-MOSI dataset contains 1284/229/686 train/validation/test samples, and the CMU-MOSEI dataset contains 16326/1871/4659 train/validation/test samples. They are the largest dataset of multimodal sentiment analysis and emotion recognition to date. The datasets contain utterance videos from numerous online YouTube speakers, which are transcribed and properly punctuated, leading to multimodal input consisting of video frames, text, and audio signals.

\subsection{PAM Dataset} 
\paragraph{Task} Physical Activity Monitoring (PAM) dataset measures the daily living activities of 9 subjects with 3 inertial measurement units. PAM is labeled into 8 classes where each class represents an activity of daily living.

\paragraph{Evaluation} We choose common classification accuracy as the evaluation metric for this task.

\paragraph{Dataset Information} The processed PAM dataset
contains 5,333 segments (samples) of sensory signals. Each sample is measured by 17 sensors and contains 600 continuous observations with the sampling frequency 100 Hz. PAM does not include static attributes and the samples are approximately balanced across all 8 categories.

\subsection{CIFAR-10 Dataset}
CIFAR-10 \cite{krizhevsky2009learning} is an established computer-vision dataset used for object recognition. It consists of 60,000 32x32 color images containing one of 10 object classes ("plane", "car", "bird", "cat", "deer", "dog", "frog", "horse", "ship", "truck"), with 6000 images per class. 

\begin{table*}[t]
\vspace{-1em}
\caption{\small Dataset Summary}
\centering
\renewcommand\arraystretch{1.3}
% \vspace{-2em}
\scalebox{0.73}{
\begin{tabular}{c|c|c|c|c} \Xhline{4\arrayrulewidth}
Dataset & Research Area & Modalities & Sample Size & Tasks \\ \hline
MIMIC-III & Healthcare & Time-Series, Text & 36,212 & Mortality, length-of-stay, phenotyping \\
MIMIC-IV & Healthcare & Time-Series, Text, Images, ECG & 73,173 & Mortality, length-of-stay, phenotyping \\
CMU-MOSI & Affective Computing & Text, Video, Audio & 2,199 & Sentiment \\
CMU-MOSEI & Affective Computing & Text, Video, Audio & 22,777 & Sentiment, emotions \\
PAM & Healthcare & Time-Series & 5,333 & Activity recognition \\
CIFAR-10 & Multimedia & Images & 60,000 & Image classification \\ \Xhline{4\arrayrulewidth}
\end{tabular}}
\label{tab:data_info}
\end{table*}

\section{Mechanisms of Different Router Designs} \label{app:router}

\subsection{Joint Experts \& Routers} 
In this approach, a concatenated embedding of all modalities is created, and this combined input is directed to selected experts by the router. This method allows the model to capture interactions between modalities at the input level, as the concatenated embedding provides a unified representation that includes all modalities. The router and experts work with this comprehensive view, enabling the model to learn correlations and interactions directly from the fused data. However, this approach might not fully capture modality-specific nuances since the characteristics of each modality are blended into a single representation.

\paragraph{Advantages}
\begin{enumerate}
    \item Captures inter-modal relationships by considering all modalities together.
    \item Simplifies the routing mechanism by treating the concatenated embedding as a single input.
\end{enumerate}

\paragraph{Challenges}
\begin{enumerate}
    \item May overlook modality-specific features due to the blending of all modalities into one representation.
    \item Could be less efficient if some modalities are irrelevant for certain tasks or experts.
\end{enumerate}

\subsection{Modality-Specific Router} 
Each modality’s embedding is independently assigned to a shared pool of experts by modality-specific routers. This setup allows the model to maintain the distinctiveness of each modality while still leveraging a common pool of expertise. By doing so, it can better capture modality-specific nuances and how they contribute independently to the overall task. However, this approach might be less effective in capturing complex inter-modal interactions since the initial routing is done independently for each modality.

\paragraph{Advantages}
\begin{enumerate}
    \item Preserves modality-specific information by routing each modality independently.
    \item Flexible in directing modalities to the most relevant experts, potentially improving efficiency.
    \item Captures interactions between modalities to some extent, but may not be as effective as joint routing approaches.
\end{enumerate}

\paragraph{Challenge} needs additional coordination between independent routes to leverage cross-modal insights.

\subsection{Disjoint Experts \& Routers} 
In this configuration, modality-specific routers assign each modality’s embedding to separate pools of experts, with each pool uniquely tailored to process a specific modality type. This method maximizes the ability of the model to capture and exploit modality-specific features and relationships, as each pool of experts is optimized for a particular type of data. However, this setup might limit the model's ability to learn from the interactions between modalities, as each is processed in isolation.

\paragraph{Advantages}
\begin{enumerate}
    \item Allows for highly specialized processing of each modality, potentially improving performance on modality-specific tasks.
    \item Modality-specific experts can develop deeper insights into the characteristics and patterns within their designated data type.
\end{enumerate}

\paragraph{Challenges}
\begin{enumerate}
    \item Inter-modal relationships might be underutilized due to the segregated processing of each modality.
    \item Requires additional coordination or subsequent integration stages to combine insights from different modality-specific experts.
\end{enumerate}
Each router type offers unique benefits and faces specific challenges in capturing the subtle relationships between modalities. The choice among them depends on the specific requirements of the application, including the importance of preserving modality-specific information versus capturing inter-modal interactions, and the computational efficiency of managing multiple experts and routers. For example, we found that modality-specific routers are more effective in ameliorating the effect of missing modality in our experiments.

\section{Data Preprocessing} \label{app:data_preprocessing}
\subsection{MIMIC-IV}
In the preprocessing stage, we focused on 30 pertinent lab and chart events from each patient's ICU record for vital sign measurements. For chest X-rays, we utilized a pre-trained DenseNet-121 model \citep{cohen2022torchxrayvision}, which was fine-tuned on the CheXpert dataset \citep{irvin2019chexpert}, to extract 1024-dimensional image embeddings. For radiological notes, we obtained 768-dimensional embeddings using the BioClinicalBERT model \citep{alsentzer2019publicly}. ECG signals were processed using a convolutional autoencoder, adapted from \cite{attia2019screening}, to generate a 256-dimensional embedding for each ECG. 

\paragraph{Time series} We selected 30 time series events for inclusion, following \cite{soenksen2022integrated}. Nine of these were vital signs: heart rate, mean/systolic/diastolic blood pressure, respiratory rate, oxygen saturation, and Glascow Coma Scale (GCS) verbal, eye, and motor response. We also included 21 lab values: potassium, sodium, chloride, creatinine, urea nitrogram, bicarbonate, anion gap, hemoglobin, hematocrit, magnesium, platlet count, phosphate, white blood cell count, total calcium, MCH, red blood cell count, MCHC, MCV, RDW, platlet count, neutrophil count, and vancomycin. We standard scale each time series value to have mean $0$ and standard deviation $1$, based on the values in the training set.

\paragraph{Chest X-rays} To incorporate a medical imaging modality into our analyses, we use the MIMIC-CXR-JPG \cite{johnson2019mimicjpg} module available from Physionet \citep{goldberger2000physiobank}, which includes 377,110 JPG format images derived from the DICOM-based MIMIC-CXR database \cite{johnson2019mimiccxr}. Following \cite{soenksen2022integrated}, for each image, we resize each JPG image to 224 $\times$ 224 pixels and then extract embeddings from the last layer of the Densenet121 model. We identify X-rays taken while the patient was in the ICU by first matching subject IDs in MIMIC-CXR-JPG with the core MIMIC-IV database, then limiting these matched X-rays to those with a chart time occuring between an ICU admission and discharge.

\paragraph{Radiological notes} To incorporate text data, we use the MIMIC-IV-Note module \cite{johnson2023mimic}, which contains 2,321,355 deidentified radiology reports for 237,427 patients that can be matched with patients in the main MIMIC-IV via a similar approach to chest X-rays. We note that we were unable to obtain \textit{intermediate} clinical notes (i.e., notes made by clinicians throughout a patient stay), as those have not yet been publicly released. We extract note embeddings using Bio-Clinical BERT \citep{alsentzer2019publicly}.

\paragraph{Electrocardiograms (ECGs)} To include ECGs as an additional modality in our models, we utilize the MIMIC-IV-ECG \cite{gowmimicecg} module, which includes approximately 800,000 ECGs (10 seconds, sampled at 500 Hz) collected from nearly 160,000 unique patients. To transform the ECGs so that they are suitable for input to our model, we adopt a convolutional autoencoder approach, adapted from \cite{attia2019screening}, that compresses each ECG into a 256-dimensional vector. Specifically, each diagnostic ECG contains a $5000 \times 12$ dimensional vector (5000 time points $\times$ 12 ECG leads). To prepare the ECG for input to the autoencoder, we only include the first $4096$ time points. We then train the autoencoder to compress the ECG into a 256-dimensional latent vector, and then reconstruct the original ECG using upsampling layers, using mean squared error as our loss function. The architecture is shown in Figure \ref{fig:autoencoder_architecture}.
We train the autoencoder with 90\% of the ECGs available in the MIMIC-IV-ECG projection and use the rest for validation. We selected a batch size of 2048, and reduced the learning rate by a factor of $0.5$ if the validation loss had plateaued for $3$ epochs. Training stopped if the validation loss had not decreased for $6$ epochs. For our encoder, we use filter numbers of $[16, 16, 32, 32, 64, 64]$, kernel widths of $[5, 5, 5, 3, 3, 3]$ and a dropout rate of $0.1$. For the decoder, we use the same filter numbers and kernel widths in reverse, and maintain a dropout rate of $0.1$. 

\begin{figure}[htb]
\centering
\includegraphics[width=0.55\columnwidth]{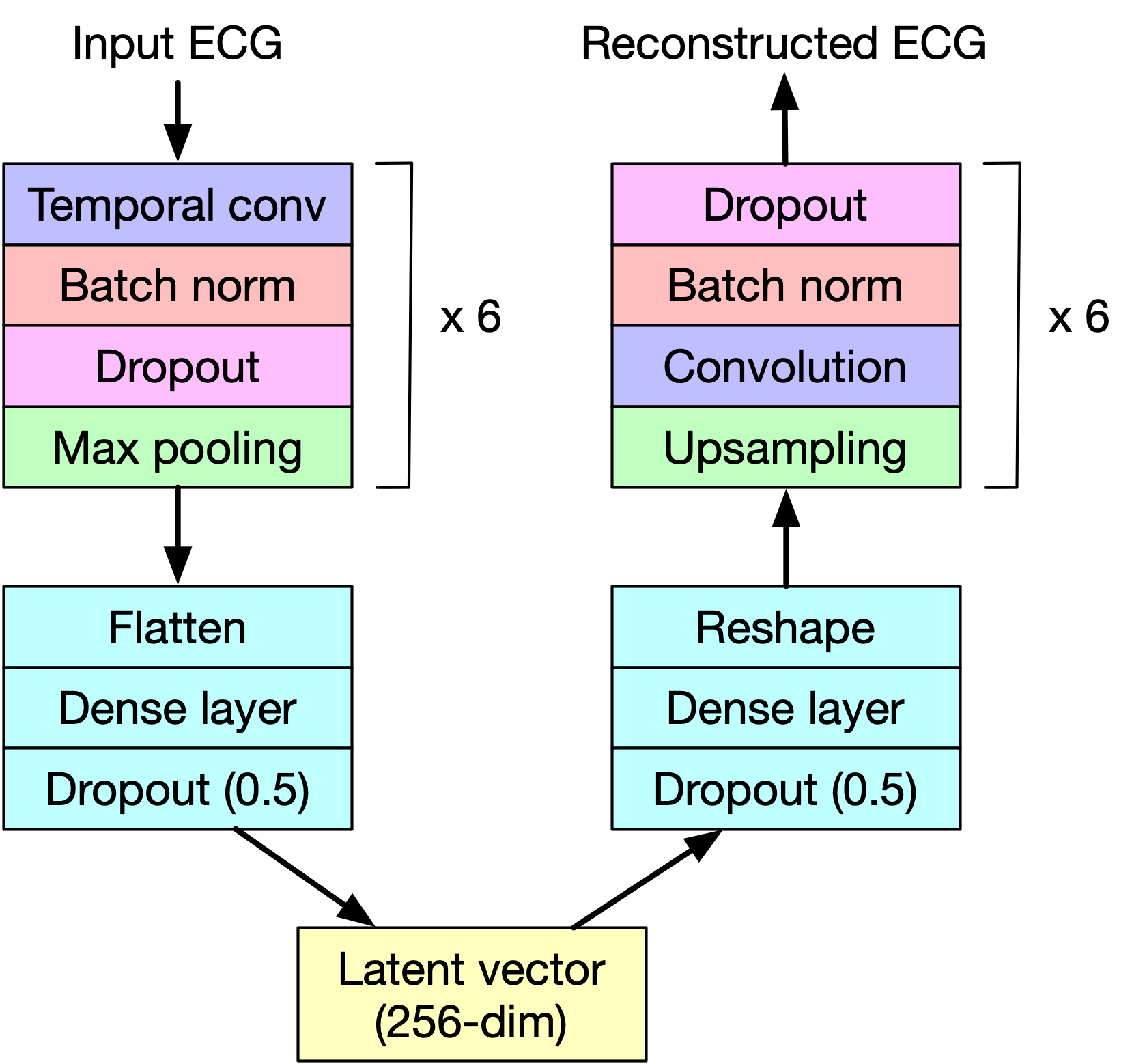}
\caption{\textbf{CNN Autoencoder Architecture} The encoder consists of 6 convolutional blocks (temporal convolution, batch normalization, dropout, and max pooling layers), followed by a dense layer that reduces the dimensionality of the representation of 256. The decoder reconstructs the input ECG (dimensionality $4096 \times 12$) from this latent vector via a dense layer, followed by 6 upsampling convolutional blocks (upsampling, convolutional, batch normalization, and dropout layers).}
\label{fig:autoencoder_architecture}
\end{figure}

\subsection{PAM Dataset}
We follow the preprocessing procedure from \cite{zhang2021graph} as published from their official GitHub repository\footnote[1]{https://github.com/mims-harvard/Raindrop/tree/main/PAMdata}. 

\section{Modeling Irregularity} \label{app:irregularity}
\subsection{Unified Temporal Discretization Embeddings} \label{app:UTDE}
Unlike the embeddings in chest X-rays, clinical notes, and ECGs, vitals/lab/time-series values present temporal irregularity \textit{across dimensions}. That is, for the former three modalities, each dimension of the corresponding is observed at each irregular time point $\boldsymbol{\tau}$. By contrast, the sampling for vitals/labs is irregular in both \textit{within} and \textit{across} dimensions. For example, we might observe heart rate values sampled at times $\boldsymbol{\tau}_\text{HR} = \{0, 0.2, 0.8, 1.2, 2.8 \}$ and glucose values sampled at time $\boldsymbol{\tau}_\text{Glu} = \{ 0.1, 0.7, 3.4 \}$. Given this unique challenge present in vitals/labs, we adapt the Unified Temporal Discretization Embedding (UTDE) approach described in \cite{zhang2023improving}, which combines the mTAND approach described in Section \ref{sec:modality_irregularity_encoder} with a simpler imputation-based discretization scheme. Specifically, given a set of $t$ observations $\mathbf{x} \in \mathbb{R}^t$ observed at irregular times $\boldsymbol{\tau} \in \mathbb{R}^t$, we a simple imputation scheme to discretize $\mathbf{x}$ into target bins $\boldsymbol{\gamma}$ (e.g., $\boldsymbol{\gamma} = \{ 0, 1, 2, ..., \gamma \}$. Specifically, given bin value $\gamma_i \in \boldsymbol{\gamma}$, we apply the following rules:
\begin{itemize}
    \item If there exists a previously observed value of $\mathbf{x}$ (i.e., $\exists \tau \in \boldsymbol{\tau} \, st. \, \tau \leq \gamma_i$), we set the imputed value of $\mathbf{x}$ at time $\gamma$, $\hat{x}_{\gamma_i}$, to the closest previously observed value.
    \item If no previously observed value exists, we set the value of $\hat{x}_{\gamma_i}$ to the global mean of $\mathbf{x}$.
\end{itemize}
We do this for each possible vitals/lab, to generate a matrix of imputation embeddings $\mathbf{I} \in \mathbb{R}^{\gamma \times d_{\text{vitals}}}$, were $ d_{\text{vitals}}$ is the number of vitals/labs. We then input this embedding into a 1D causal convolutional layer with stride 1 to obtain our final imputation embeddings with hidden dimension $d_h$, $\mathbf{E}_{\text{Imp}} \in \mathbb{R}^{\gamma \times d_h}$. 

\subsection{Unifying imputation and mTAND embeddings}
We combined simple imputation and mTAND embeddings via a gating function $\mathbf{g}$. Following \cite{zhang2023improving}, we let $\mathbf{E}_{\text{mTAND}} \in \mathbb{R}^{\gamma \times d_h}$ denote the mTAND embeddings for vitals/labs derived from the process described in Section \ref{sec:modality_irregularity_encoder} and let $\mathbf{E}_{\text{Imp}} \in \mathbb{R}^{\gamma \times d_h}$ denote the simple imputations from the process described above. We use each of these discretization embeddings to derive a final set of embeddings for vitals/labs $\mathbf{E}_{\text{vitals}}$ via a one-layer MLP gating function $f$. Specifically, we let $\mathbf{g} = f ( \mathbf{E}_{\text{Imp}} \oplus \mathbf{E}_{\text{mTAND}})$, where $\oplus$ denotes the concatenation operator. We then calculate $\mathbf{E}_{\text{vitals}}$ as
\[
\mathbf{E}_{\text{vitals}} = \mathbf{g} \odot \mathbf{E}_{\text{Imp}} + (1 - \mathbf{g}) \odot \mathbf{E}_{\text{mTAND}} \in \mathbb{R}^{\gamma \times d_h}, 
\]
where $\odot$ denotes point-wise multiplication. 

\section{Baseline Comparison} \label{app:baseline_comparison}
% Considering the relatively recent release of MIMIC-IV, there have been limited applications to this dataset. In our study, we evaluated both \texttt{FuseMoE} and various baselines using the earlier MIMIC-III version, as well as on our processed data from MIMIC-IV. To our knowledge, the HAIM approach described in \cite{soenksen2022integrated} represents the only comprehensive attempt at multimodal modeling on the MIMIC-IV dataset to date. 

\subsection{MISTS} \label{app:baseline_MISTS}
This approach, from \cite{zhang2023improving}, casts time series and clinical notes as multivariate, irregularly-sampled time series (MISTS) and uses layers of self- and cross-attention to fuse modalities. The method uses a Time2Vec \cite{kazemi2019time2vec} encoding scheme to represent the irregularity of observation times. We use the same hyperparameters as in the original paper (e.g., $3$ self- and cross-attention blocks, $128$-dimensional time embedding, etc.).

\subsection{MulT}
This model from \cite{tsai2019multimodal} relies on multiple stacks of pairwise and bidirectional cross-modal attention blocks (without a self-attention mechanism) to attend to low-level features. The results of cross-modal attention are then sent to modality-specific transformers, concatenated, and used to make predictions.

\subsection{MAG}
This method introduces the Multimodal Adaptation Gate (MAG) as an extension to BERT and XLNet, allowing these pre-trained models to incorporate visual and acoustic data during fine-tuning. By generating a modality-conditioned shift in their internal representations, MAG enables enhanced sentiment analysis performance on multimodal datasets, achieving human-level accuracy in the field \cite{rahman2020integrating}.

\subsection{TFN} \label{app:baseline_TF}
The proposed Tensor Fusion Network approach (TFN) integrates three core components: Modality Embedding Subnetworks for generating rich embeddings from unimodal inputs, a Tensor Fusion Layer for capturing all levels of modality interactions through a 3-fold Cartesian product, and a Sentiment Inference Subnetwork tailored to perform sentiment analysis based on the fusion layer's output \cite{zadeh2017tensor}.

\subsection{HAIM} \label{app:baseline_HAIM}
The multimodal fusion approach detailed by \cite{soenksen2022integrated} extracts a single set of features for each ICU stay, and uses this to predict the outcome of interest (in-hospital mortality, etc.). For vitals/lab values, the authors extract a set of 11 generic time series features: signal length, maximum, minimum, mean, median, SD, variance, number of peaks, and average time-series slope and piece-wise change over time of these metrics. This is done independently for each of the 30 events, leading to $30 \times 11 = 330$ vital/lab features per ICU stay. To provide a fair comparison with our method, we only include the most recent five notes and $128$ vitals measurements in calculating embeddings.
% In the event a given measurement was not recorded during the ICU stay (e.g., if patient has no recorded GCS values), we used zero-imputation, in accordance with the original paper. 
We only include entries for which all modalities are observed. For note/X-ray/ECG embeddings, we compute the mean embedding across all observations occurring during the specified time frame (i.e., the first 48 hours of 48-IHM and LOS, the entire stay for PHE). As with our method, we standardize scale values based on the training set.
\cite{soenksen2022integrated} uses an XGBoost \cite{chen2016xgboost} classifier to predict the outcomes of interest. We follow the hyperparameter optimization approach described in the paper. Specifically, we conduct a grid search across the following sets of hyperparameters: max depth $=\{5, 6, 7, 8\}$, number of estimators $=\{200, 300\}$, learning rate $= \{0.3, 0.1, 0.05\}$. Hyperparameters are selected based on the maximum AU-ROC from five-fold cross-validation.

\subsection{Implementation}
We integrate \ref{app:baseline_MISTS} through \ref{app:baseline_TF} into our workflow using the implementation provided by \cite{zhang2023improving}. For \ref{app:baseline_HAIM}, we adapt the time series (e.g., series variance, mean, etc.) feature extraction and model fitting code from the repository released by the corresponding paper. The original paper doesn't use ECG waveforms, so we adopt a similar approach to ECG embeddings as with image and note embeddings, and take the mean value of the latent vector across all included observations.

\begin{table*}[htb]
\centering
\renewcommand\arraystretch{1}
% \vspace{-2em}
\caption{Hyperparameters used for MoE framework and general architecture.}
\begin{tabular}{c|c|c}
\Xhline{4\arrayrulewidth}
Hyper-Parameter Type & Parameter Name & Value \\ \hline
\multirow{6}{*}{ MoE } & Number of experts & 16 \\
& FFN hidden size & 512 \\
& Top k & 4 \\
& Disjoint top k & 2 \\
& Hidden activation function & GeLU \\
& Number of MoE layers & 3 \\ \hline
\multirow{15}{*}{ Other Parameters } & Random seed & [32, 42, 52, 62, 72] \\
& Training epochs & 8 \\
& Training batch size & 2 \\
& Eval batch size & 8 \\
& CNN kernel size & 1 \\
& Gradient accumulation steps & 16 \\
& BERT update epochs & 2 \\
& BERT learning rate & 2.00E-05 \\
& Time series encoder learning rate & 4.00E-04 \\
& Number of notes to include for a patient & 5 \\
& Get notes from beginning or last & Last \\
& Attention embedding dimension & 128 \\
& Number of attention heads & 8 \\
& Maximum time for irregular time series & 48 \\
& Time embedding dimension & 64 \\ \Xhline{4\arrayrulewidth}
\end{tabular}
\label{tab:hyperparams}
\end{table*}

\section{Computational Resources and Hyper-Parameters} \label{app:training}
\subsection{Computational Resources}
We train models using a Lambda Workstation with four A550 GPUs with 24 GB of memory. We are able to train models using a single GPU. An analysis of computation time and memory requirements is shown in Figure \ref{fig:ablation}.

\subsection{Hyper-Parameters} The set of parameters we used for experiments can be found in Table \ref{tab:hyperparams}.

\section{Additional Results} \label{app:results}
\subsection{FlexiModal Experiments} We present additional results comparing \texttt{FuseMoE} to baselines using the MIMIC-III dataset, which includes only vital signs and clinical notes (Table \ref{tab:mimic_3}), and the MIMIC-IV dataset, featuring vital signs and CXR (Table \ref{tab:mimic_4_cxr}). All experiments utilize the ``joint experts and router'' configuration. In these settings, \texttt{FuseMoE} demonstrates noticeable advantages.

\begin{table*}[htbp]
\caption{Comparison of \texttt{FuseMoE}-based methods (gray) and baselines, utilizing vital signs and clinical notes of MIMIC-III. The best results are highlighted in bold font, and the second-best results are underlined. All results are averaged across 5 random runs. Since HAIM is not designed for the MIMIC-III dataset, we use the concatenation method from e.g. \cite{khadanga2019using} as a replacement.}
\centering
\renewcommand\arraystretch{1.3}
\scalebox{0.65}{
\begin{tabular}{c|c|cccccccc}\Xhline{4\arrayrulewidth}
\multicolumn{2}{c}{Task $\backslash$ Method} & MISTS & MulT & MAG & TF & Concat & \cellcolor{ashgrey}Softmax & \cellcolor{ashgrey}Gaussian & \cellcolor{ashgrey}Laplace \\ \hline
\multirow{2}{*}{ 48-IHM } & AUROC & 89.14 $\pm$ 0.57 & 87.26 $\pm$ 0.35 & 86.53 $\pm$ 1.21 & 87.22 $\pm$ 0.89 & 86.72 $\pm$ 0.76 & 90.25 $\pm$ 0.74 & \underline{90.77 $\pm$ 0.18} & \textbf{91.19 $\pm$ 0.52} \\
& F1 & \underline{56.45 $\pm$ 1.30} & 54.13 $\pm$ 1.20 & 53.20 $\pm$ 2.13 & 51.44 $\pm$ 0.66 & 52.77 $\pm$ 0.70 & 56.41 $\pm$ 0.98 & 56.21 $\pm$ 0.17 & \textbf{57.36 $\pm$ 0.73} \\ \hline
\multirow{2}{*}{ 25-PHE } & AUROC & 86.06 $\pm$ 0.06 & 85.96 $\pm$ 0.07 & 85.94 $\pm$ 0.07 & 84.74 $\pm$ 0.16 & 85.94 $\pm$ 0.21 & \underline{86.41 $\pm$ 0.75} & 85.96 $\pm$ 0.64 & \textbf{86.72 $\pm$ 0.27} \\
& F1 & 54.84 $\pm$ 0.31 & 54.20 $\pm$ 0.33 & 53.73 $\pm$ 0.37 & 49.84 $\pm$ 0.83 & 53.30 $\pm$ 0.35 & 55.02 $\pm$ 0.23 & \underline{55.29 $\pm$ 0.45} & \textbf{55.38 $\pm$ 0.69} \\\Xhline{4\arrayrulewidth}
\end{tabular}}
\label{tab:mimic_3}
\end{table*}

\begin{table*}[htbp]
\caption{Comparison of \texttt{FuseMoE}-based methods (gray) and baselines, utilizing vital signs and CXR of MIMIC-IV. The best results are highlighted in bold font, and the second-best results are underlined. All results are averaged across 5 random experiments.}
\centering
\renewcommand\arraystretch{1.3}
\scalebox{0.65}{
\begin{tabular}{c|c|cccccccc}\Xhline{4\arrayrulewidth}
\multicolumn{2}{c}{Task $\backslash$ Method} & MISTS & MulT & MAG & TF & HAIM & \cellcolor{ashgrey}Softmax & \cellcolor{ashgrey}Gaussian & \cellcolor{ashgrey}Laplace \\ \hline
\multirow{2}{*}{ 48-IHM } & AUROC & 81.36 $\pm$ 0.24 & 77.70 $\pm$ 0.44 & 81.19 $\pm$ 1.25 & 76.92 $\pm$ 0.65 & 80.87 $\pm$ 0.00 & \underline{82.08 $\pm$ 0.26} & 81.26 $\pm$ 0.18 & \textbf{82.97 $\pm$ 0.49} \\
& F1 & 43.35 $\pm$ 0.39 & 28.40 $\pm$ 0.75 & 39.59 $\pm$ 0.43 & \underline{46.59 $\pm$ 0.33} & 40.88 $\pm$ 0.00 & 38.14 $\pm$ 0.31 & 44.59 $\pm$ 0.24 & \textbf{47.48 $\pm$ 0.23} \\ \hline
\multirow{2}{*}{ LOS } & AUROC & 82.07 $\pm$ 0.82 & 81.94 $\pm$ 0.26 & 81.86 $\pm$ 0.76 & 81.47 $\pm$ 0.89 & 81.69 $\pm$ 0.00 & \underline{82.96 $\pm$ 0.47} & 82.74 $\pm$ 0.85 & \textbf{83.22 $\pm$ 0.68} \\
& F1 & 74.07 $\pm$ 0.18 & 74.46 $\pm$ 0.17 & 73.89 $\pm$ 0.93 & 73.39 $\pm$ 0.14 & 72.93 $\pm$ 0.00 & \textbf{75.67 $\pm$ 0.59} & 75.16 $\pm$ 0.42 & \underline{75.43 $\pm$ 0.19} \\ \hline
\multirow{2}{*}{ 25-PHE } & AUROC & \textbf{71.50 $\pm$ 0.22} & 71.20 $\pm$ 0.76 & 70.89 $\pm$ 0.47 & 70.55 $\pm$ 0.29 & 63.43 $\pm$ 0.00 & 71.38 $\pm$ 0.31 & 70.87 $\pm$ 0.67 & \underline{71.44 $\pm$ 0.24} \\
& F1 & 33.52 $\pm$ 0.39 & 32.80 $\pm$ 0.18 & 33.14 $\pm$ 0.61 & 33.56 $\pm$ 0.74 & \textbf{42.45 $\pm$ 0.00} & 33.49 $\pm$ 0.15 & 31.94 $\pm$ 0.09 & \underline{34.13 $\pm$ 0.56} \\ \Xhline{4\arrayrulewidth}
\end{tabular}}
\label{tab:mimic_4_cxr}
\end{table*}

\subsection{Ablation Study on FuseMoE Building Blocks}
In Figure \ref{fig:irr_encoder}, we evaluate the impact of various irregularity encoders on the performance of the FuseMoE framework. Our baseline approaches include the following methods: 
\begin{enumerate}
    \item employing only the imputation (discretization) module from the time-series irregularity encoder, as detailed in Appendix \ref{app:irregularity}
    \item utilizing solely the mTAND module \citep{shukla2021multi} within the time-series irregularity encoder
    \item implementing the SeFT method \citep{horn2020set} as an irregularity encoder
    \item adopting the RAINDROP method \citep{zhang2021graph} as an irregularity encoder
\end{enumerate}

\begin{figure*}[htbp]
    \begin{minipage}{\textwidth}
    \centering
    % \vspace{-2ex}
    \begin{tabular}{@{\hspace{-2.6ex}} c c @{\hspace{-2.4ex}}}
        \begin{tabular}{c}
        \includegraphics[width=.45\textwidth]{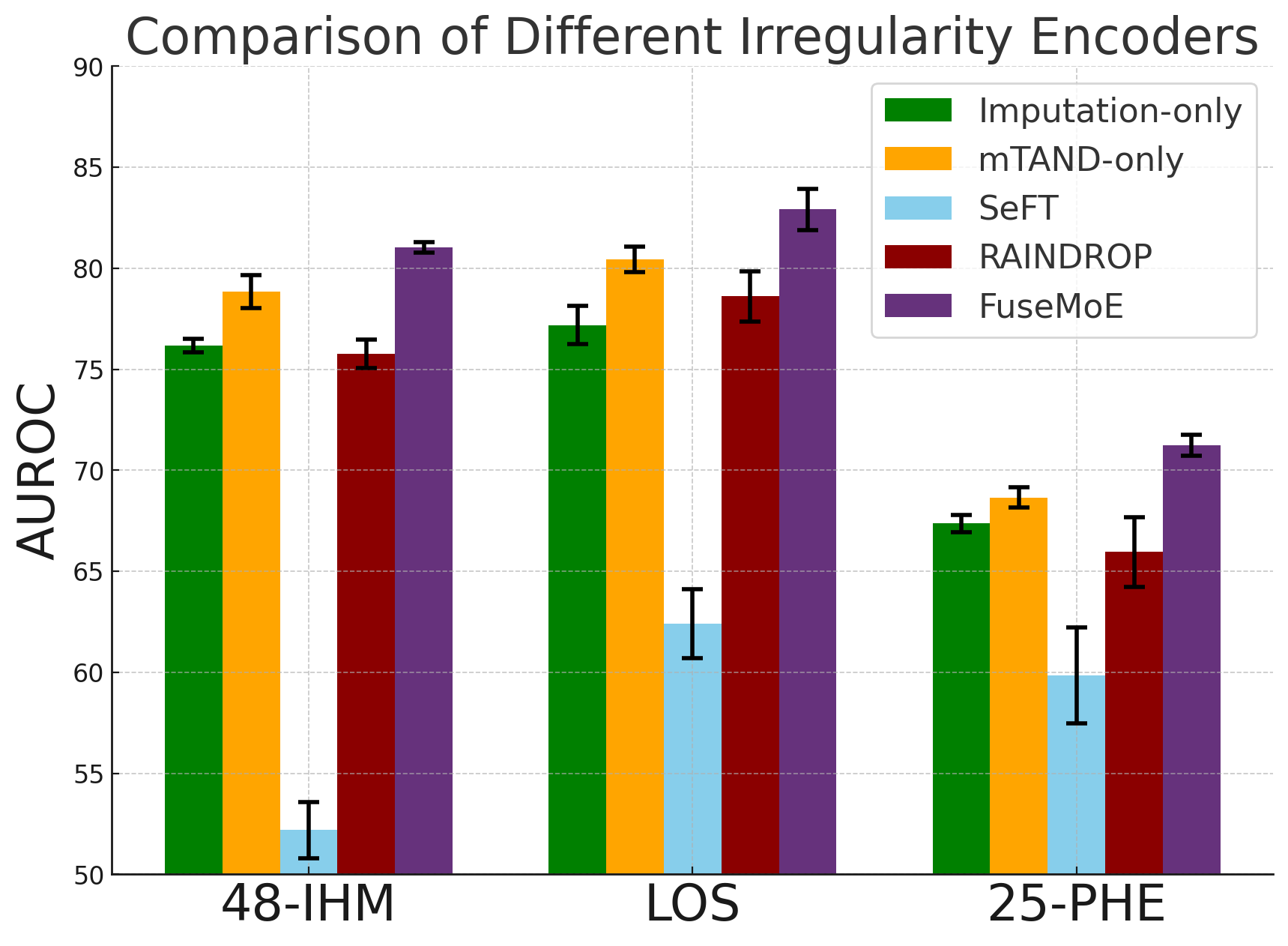}
        %\vspace{-5pt}
        \\
        {\small{(a)}}
        \end{tabular} &
        \begin{tabular}{c}
        \includegraphics[width=.45\textwidth]{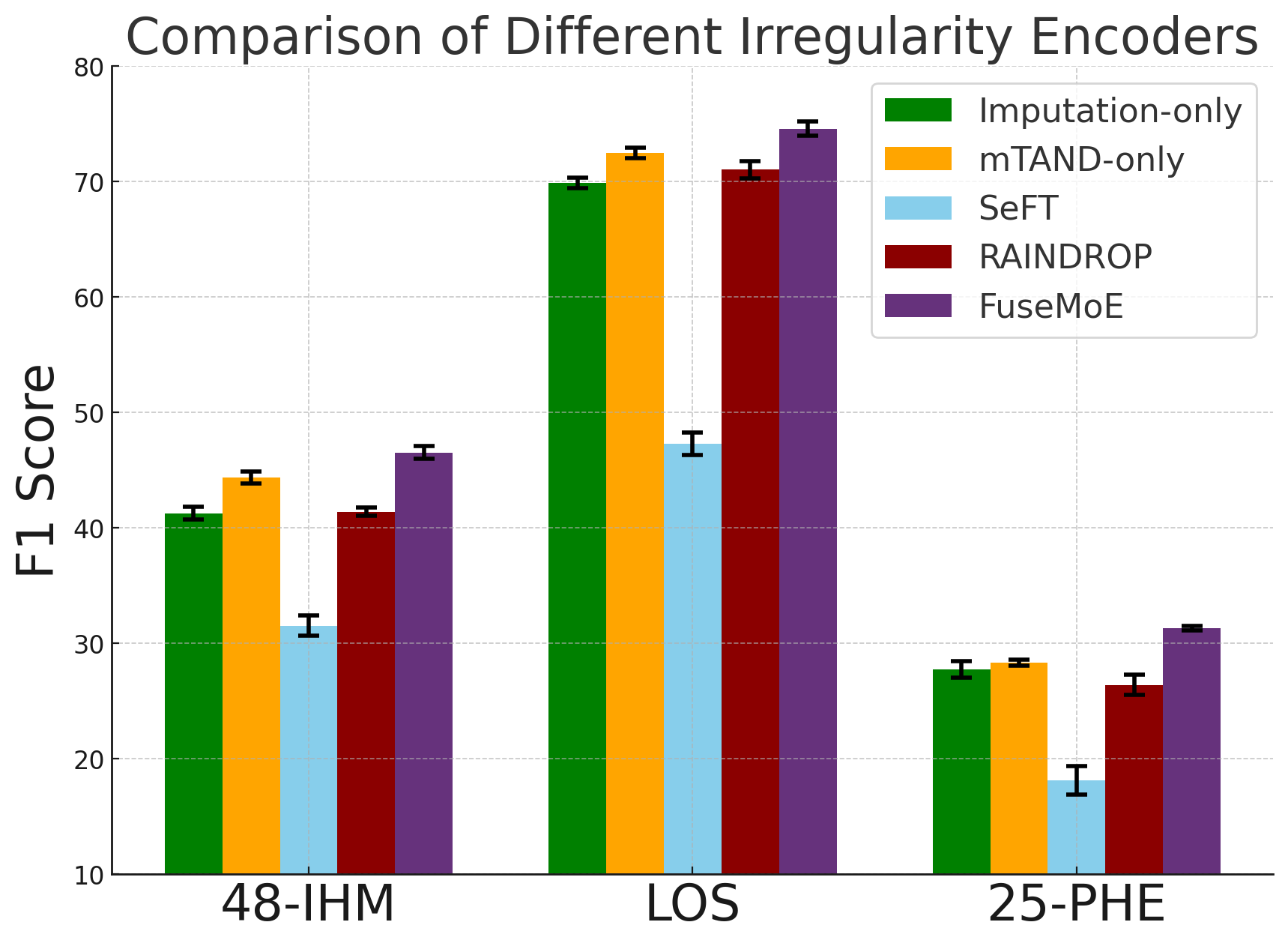}
        %\vspace{-5pt}
        \\
        {\small{(b)}}
        \end{tabular} \\
        \end{tabular}
    \end{minipage}
    \vspace{-1em}
    \caption{The irregularity encoder employed by \texttt{FuseMoE} achieves the best average results compared with 4 baseline irregularity encoders. The performance of these approaches is averaged over 5 random runs. We utilized the vital signs and clinical notes components of the MIMIC-IV dataset.}
    \label{fig:irr_encoder}
\end{figure*}

In Figure \ref{fig:ts_encoder}, we evaluate the impact of various time-series encoders on the performance of the FuseMoE framework. The original FuseMoE framework feeds time-series embeddings obtained from the irregularity encoder into the Transformer \citep{vaswani2017attention} and extracts the last hidden states of the Transformer output to pass through fully connected layers to make predictions. Our baseline approaches include CNN \citep{lecun1998gradient} and LSTM \citep{hochreiter1997long} to encode time-series embeddings from the irregularity encoder.

\begin{figure*}[htbp]
    \begin{minipage}{\textwidth}
    \centering
    % \vspace{-2ex}
    \begin{tabular}{@{\hspace{-2.6ex}} c c @{\hspace{-2.4ex}}}
        \begin{tabular}{c}
        \includegraphics[width=.45\textwidth]{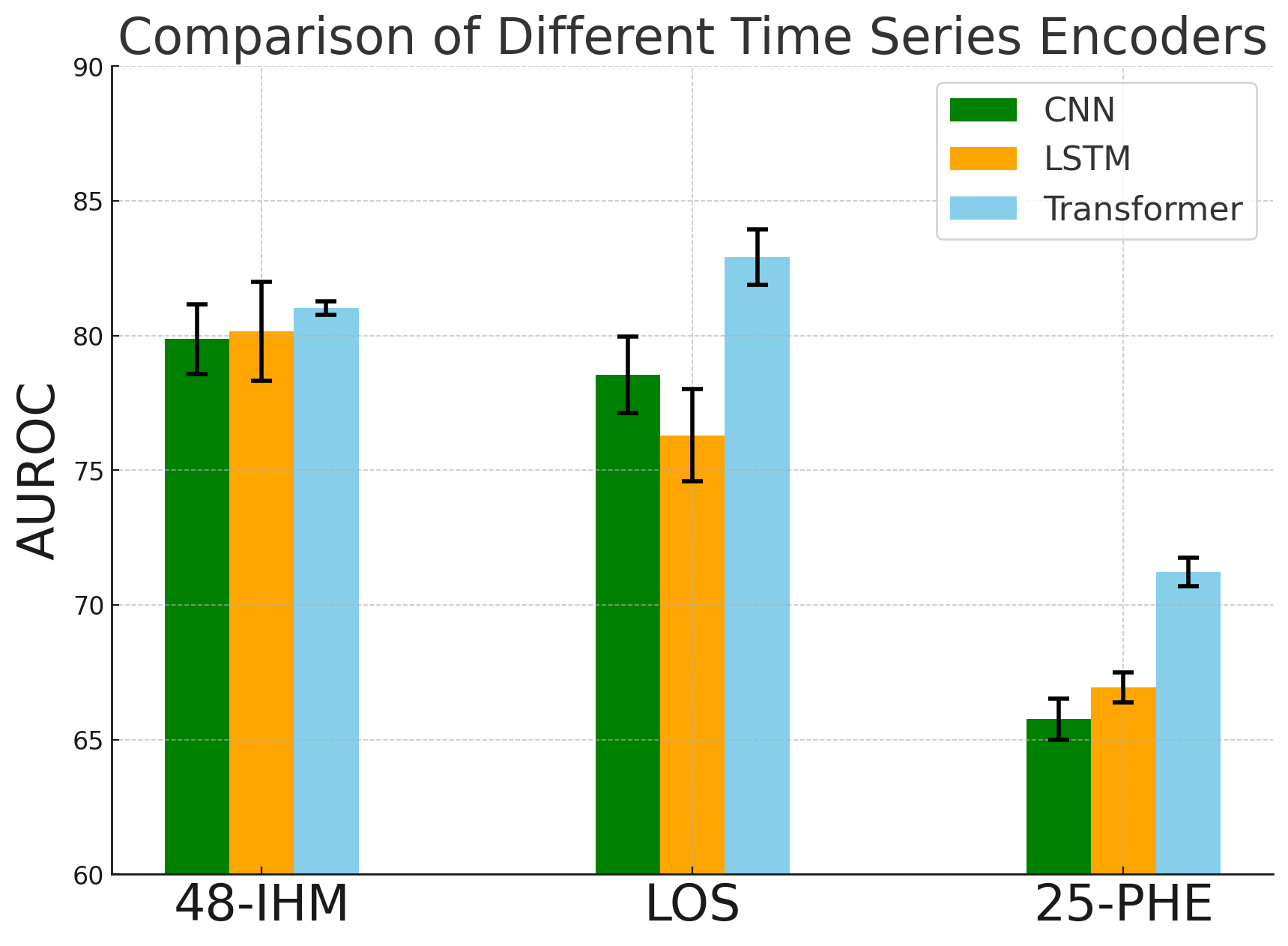}
        %\vspace{-5pt}
        \\
        {\small{(a)}}
        \end{tabular} &
        \begin{tabular}{c}
        \includegraphics[width=.45\textwidth]{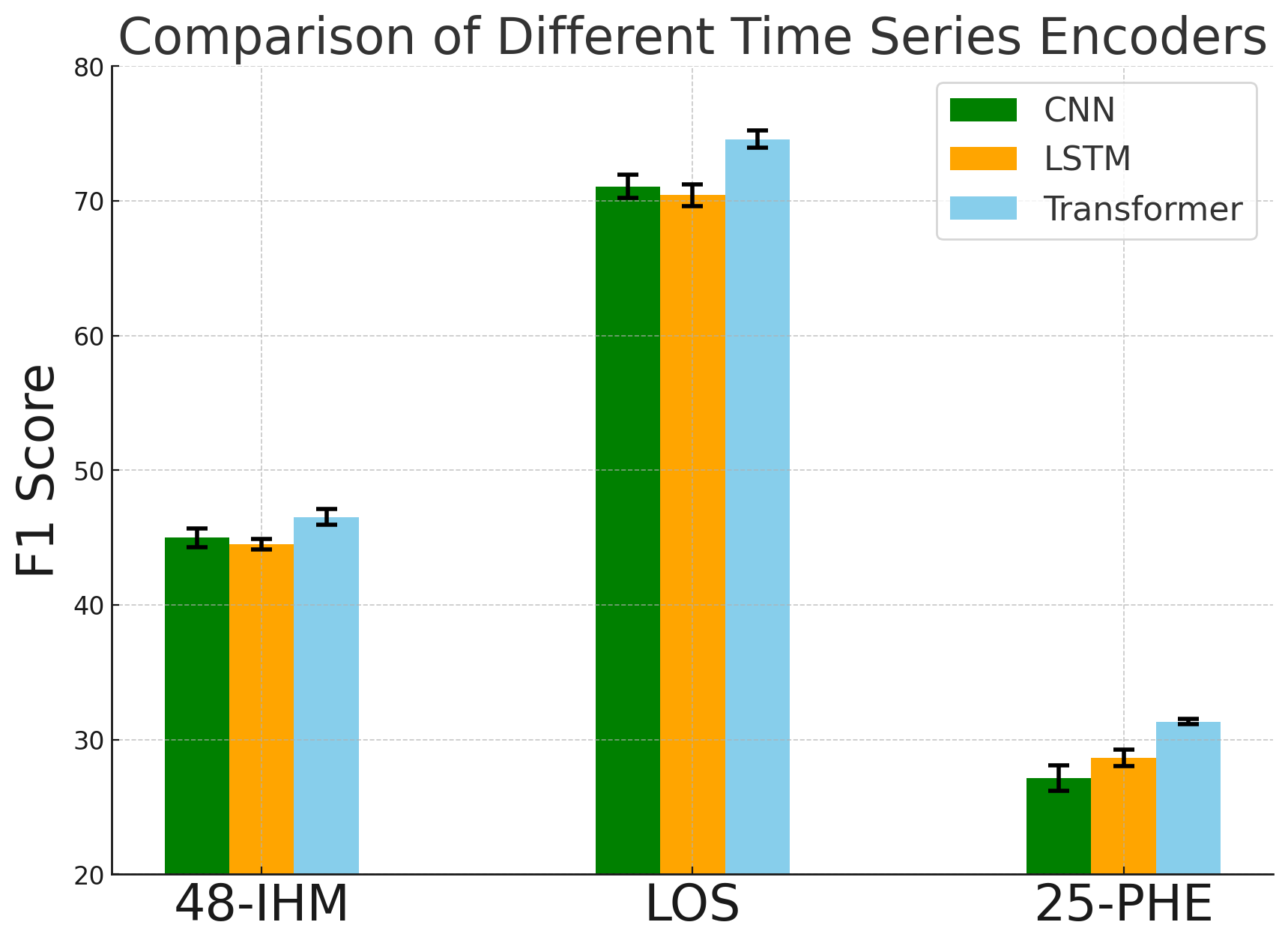}
        %\vspace{-5pt}
        \\
        {\small{(b)}}
        \end{tabular} \\
        \end{tabular}
    \end{minipage}
    \vspace{-1em}
    \caption{Transformer is more effective in acting as the time-series encoder than CNN and LSTM. The performance outcomes of these approaches are derived from averages over 5 random runs. We utilized the vital signs and clinical notes components of the MIMIC-IV dataset.}
    \label{fig:ts_encoder}
\end{figure*}

In Figure \ref{fig:cxr_encoder}, we assess the effect of different CXR encoders on the FuseMoE framework. Currently, the FuseMoE framework incorporates DenseNet-121 as the feature extractor for CXR images before their integration into the mTAND module. This setup is compared with the application of the state-of-the-art vision transformer (ViT-B) \citep{dosovitskiy2020image} as an alternative CXR encoder.

\begin{figure*}[htbp]
    \begin{minipage}{\textwidth}
    \centering
    % \vspace{-2ex}
    \begin{tabular}{@{\hspace{-2.6ex}} c c @{\hspace{-2.4ex}}}
        \begin{tabular}{c}
        \includegraphics[width=.45\textwidth]{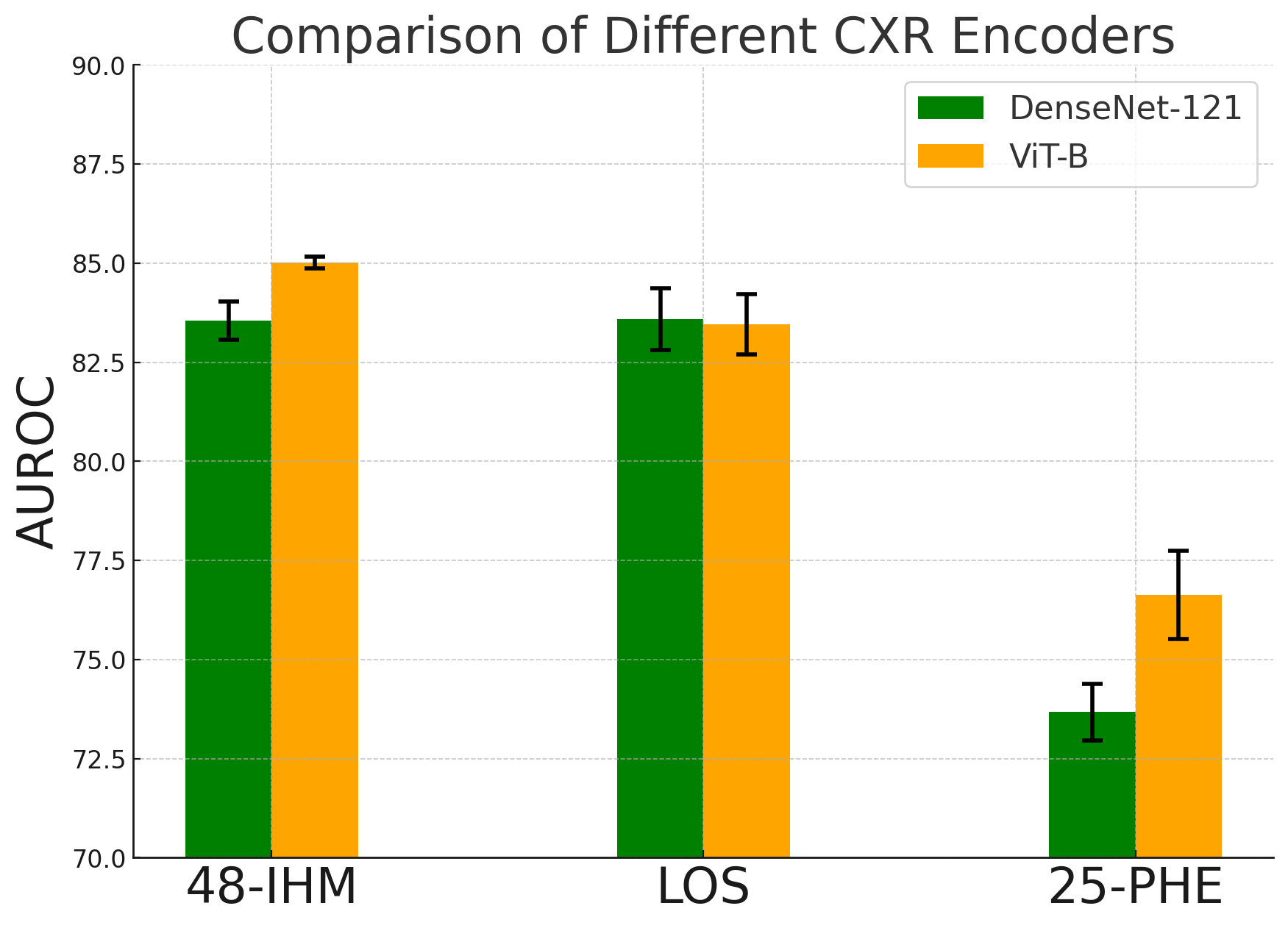}
        %\vspace{-5pt}
        \\
        {\small{(a)}}
        \end{tabular} &
        \begin{tabular}{c}
        \includegraphics[width=.45\textwidth]{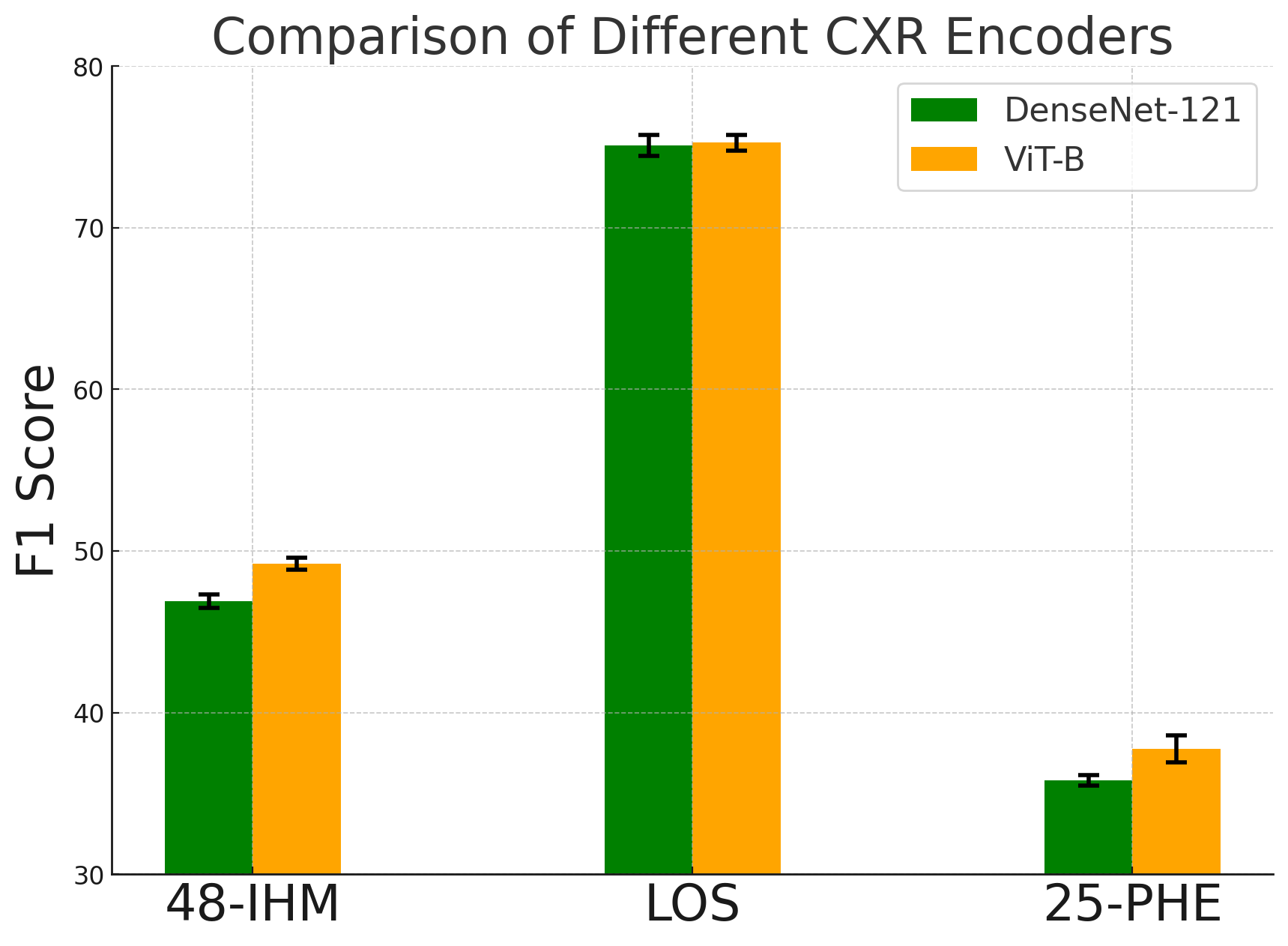}
        %\vspace{-5pt}
        \\
        {\small{(b)}}
        \end{tabular} \\
        \end{tabular}
    \end{minipage}
    \vspace{-1em}
    \caption{ViT could further improve predictive results in some tasks by providing better CXR embeddings. The performance of these approaches is averaged over 5 random runs. We utilized all 4 modalities (vital signs, clinical notes, CXR, and ECG) of MIMIC-IV. Note that while we vary the CXR encoders, the rest of our framework's components remain constant.}
    \label{fig:cxr_encoder}
\end{figure*}

In Figure \ref{fig:text_encoder}, we evaluate the influence of text encoders on the FuseMoE framework. Currently, FuseMoE incorporates Clinical-Longformer \citep{li2022clinical} as the text encoder before integrating it into the mTAND module. This setup is compared with other state-of-the-art text encoders: GRU-D \citep{che2018recurrent}, FT-LSTM \citep{zhang2020time}, and HierTrans \citep{pappagari2019hierarchical}.

\begin{figure*}[htbp]
    \begin{minipage}{\textwidth}
    \centering
    % \vspace{-2ex}
    \begin{tabular}{@{\hspace{-2.6ex}} c c @{\hspace{-2.4ex}}}
        \begin{tabular}{c}
        \includegraphics[width=.45\textwidth]{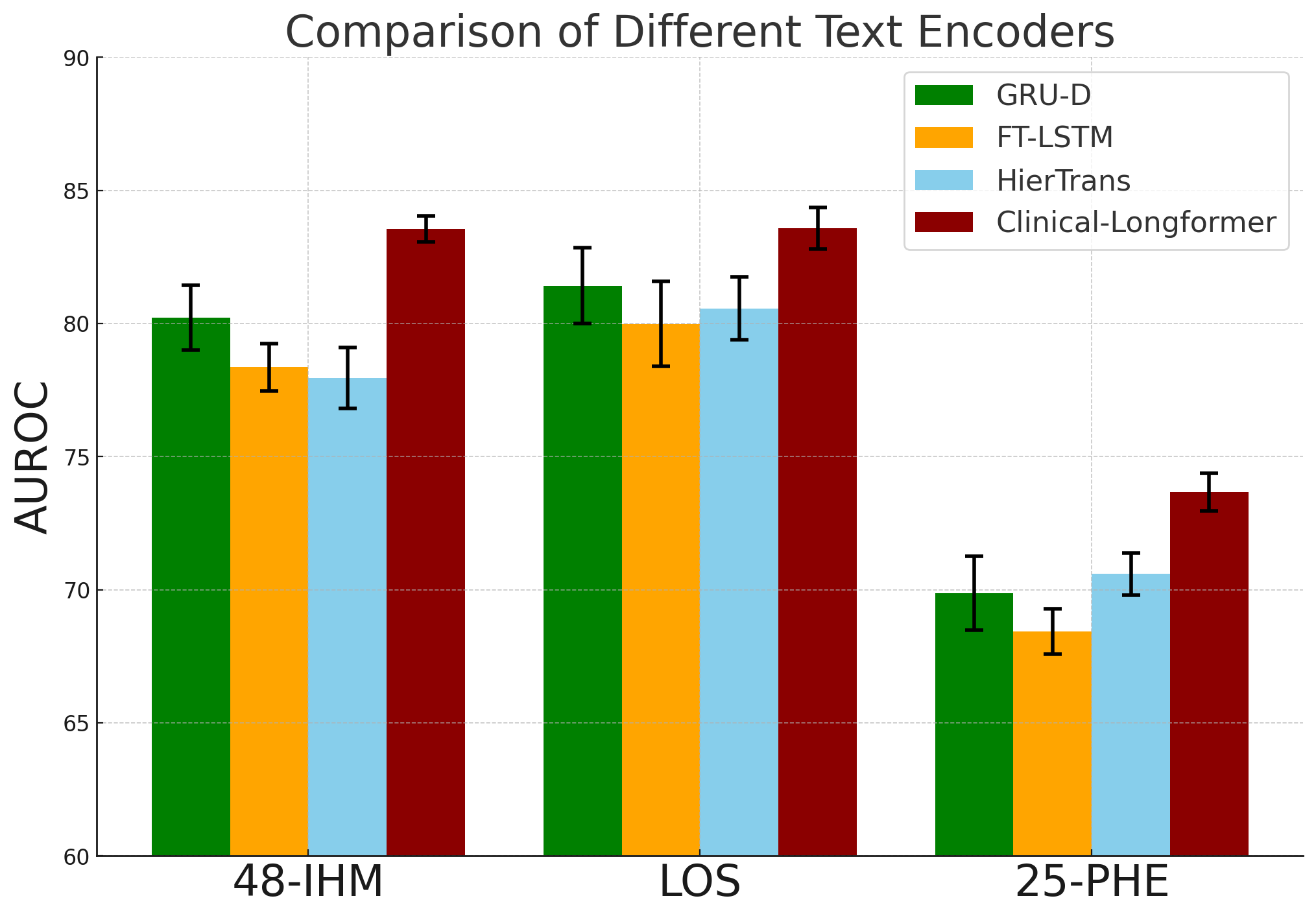}
        %\vspace{-5pt}
        \\
        {\small{(a)}}
        \end{tabular} &
        \begin{tabular}{c}
        \includegraphics[width=.45\textwidth]{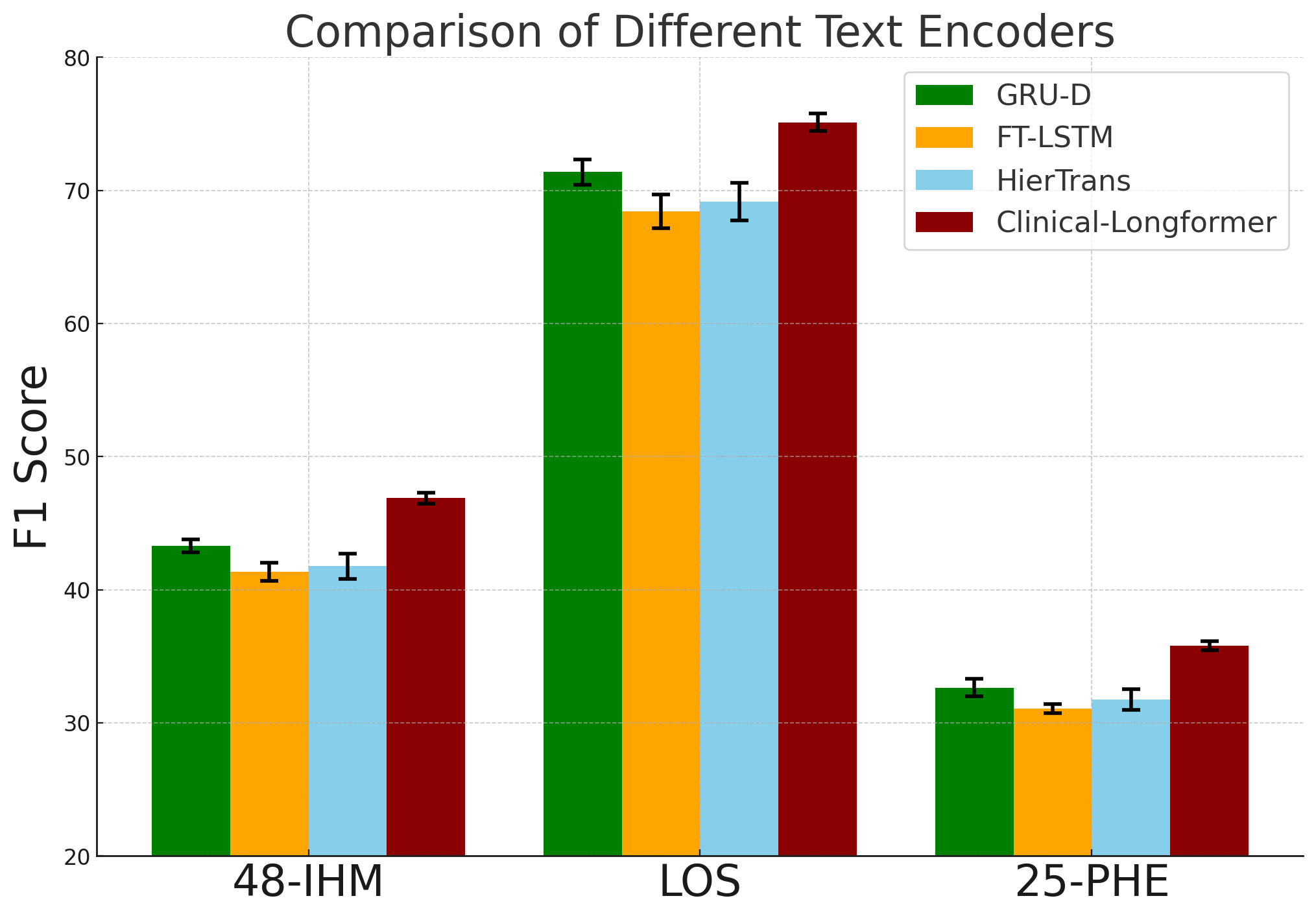}
        %\vspace{-5pt}
        \\
        {\small{(b)}}
        \end{tabular} \\
        \end{tabular}
    \end{minipage}
    \vspace{-1em}
    \caption{Clinical-Longformer as the text encoder achieves the best performance compared with baselines. The performance of these approaches is derived from averages over 5 random runs. We utilized all 4 modalities (vital signs, clinical notes, CXR, and ECG) of the MIMIC-IV dataset, while we varied the text encoders, the rest of our framework's components remained constant.}
    \label{fig:text_encoder}
\end{figure*}

Finally, in Figure \ref{fig:mtand}, we investigate the effect of the mTAND module on each modality, while we removed mTAND for a particular modality, the rest of FuseMoE's components remained constant.

\begin{figure*}[htbp]
    \begin{minipage}{\textwidth}
    \centering
    % \vspace{-2ex}
    \begin{tabular}{@{\hspace{-2.6ex}} c c @{\hspace{-2.4ex}}}
        \begin{tabular}{c}
        \includegraphics[width=.45\textwidth]{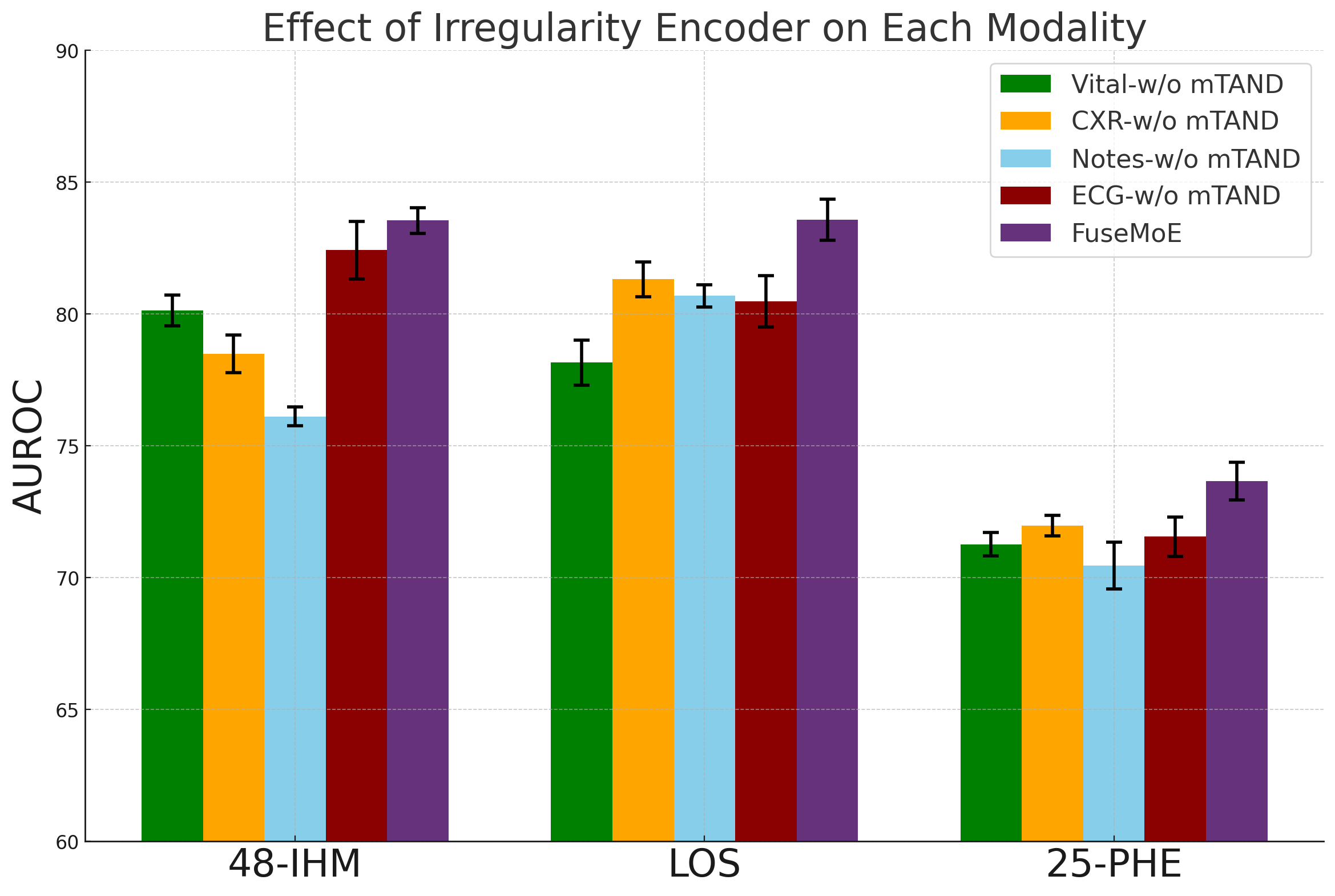}
        %\vspace{-5pt}t
        \\
        {\small{(a)}}
        \end{tabular} &
        \begin{tabular}{c}
        \includegraphics[width=.45\textwidth]{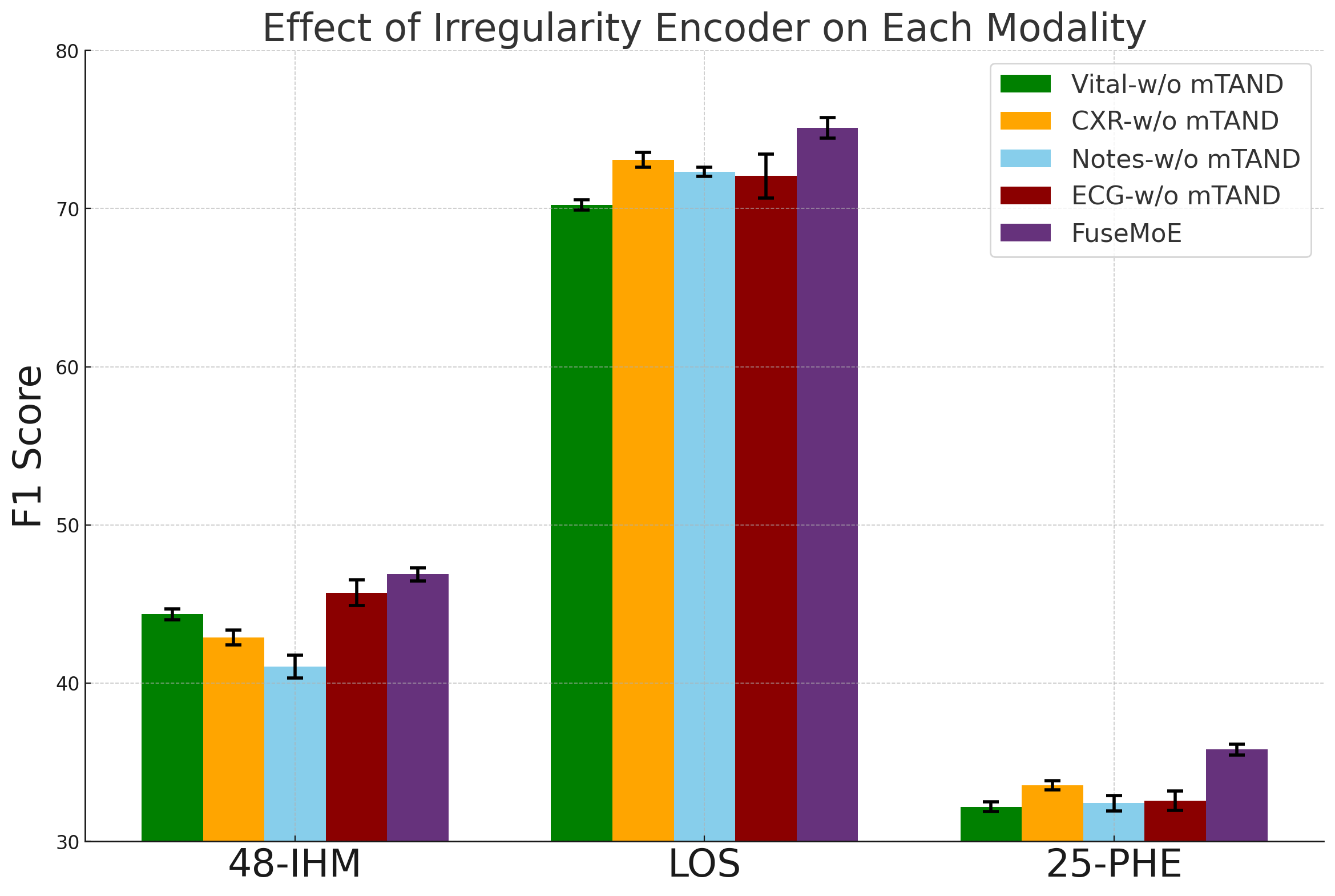}
        %\vspace{-5pt}
        \\
        {\small{(b)}}
        \end{tabular} \\
        \end{tabular}
    \end{minipage}
    \vspace{-1em}
    \caption{Encoding irregularity using the mTAND module improves the overall performance. The positive effect of the irregularity encoder is most evident in vital signs and clinical notes. The performance outcomes of these approaches are averaged over 5 random runs. We utilized all 4 modalities (vital signs, clinical notes, CXR, and ECG) components of the MIMIC-IV dataset.}
    \label{fig:mtand}
\end{figure*}

\subsection{Ablation Study on MoE Architecture}

\begin{figure*}[ht]
    \begin{minipage}{\textwidth}
    \centering
    % \vspace{-2ex}
    \begin{tabular}{@{\hspace{-2.8ex}} c @{\hspace{-2.4ex}} c @{\hspace{-1.5ex}} c @{\hspace{-1.5ex}}}
        \begin{tabular}{c}
        \includegraphics[width=.34\textwidth]{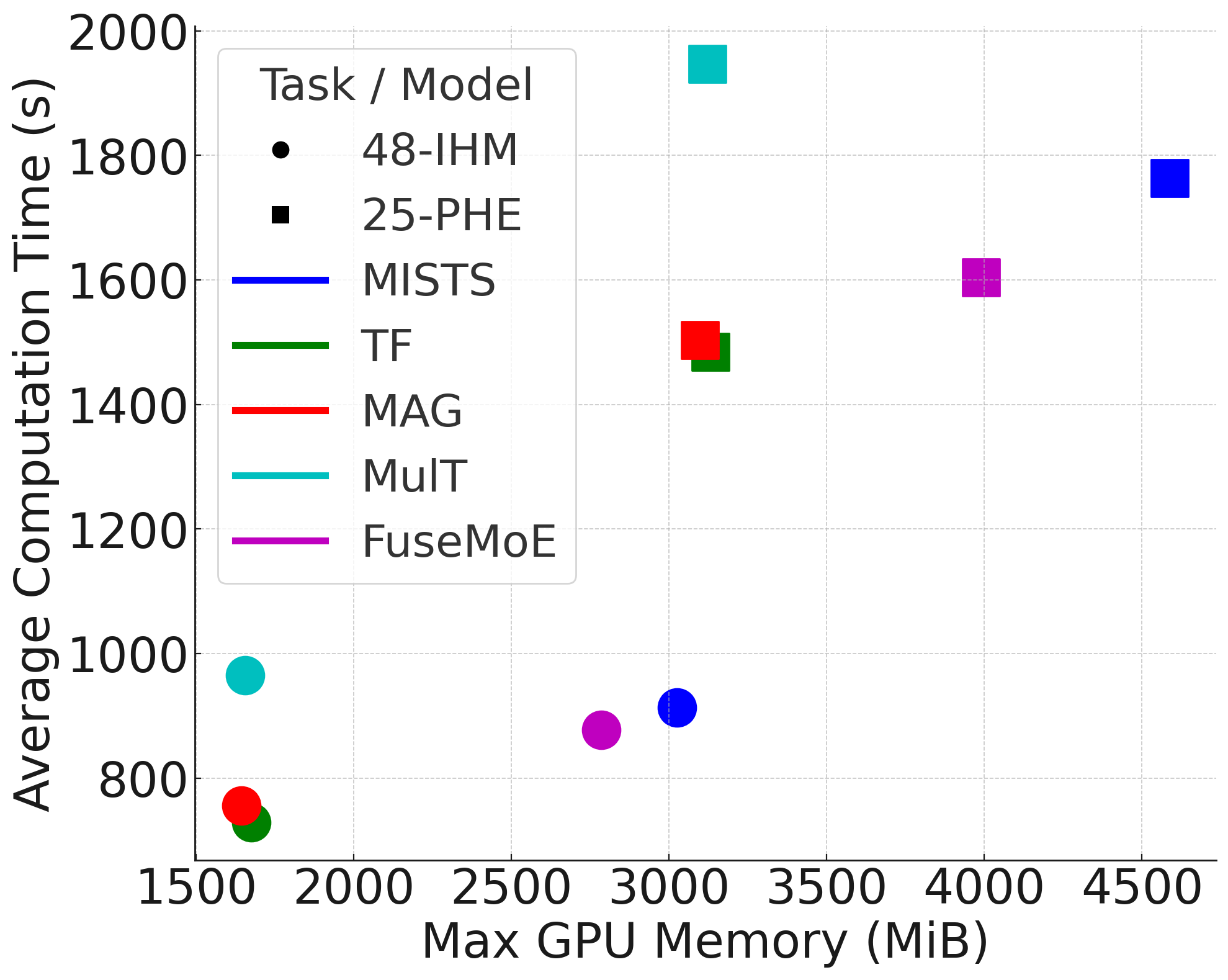}
        %\vspace{-5pt}
        \\
        {\small{(a)}}
        \end{tabular} & 
        \begin{tabular}{c}
        \includegraphics[width=.34\textwidth]{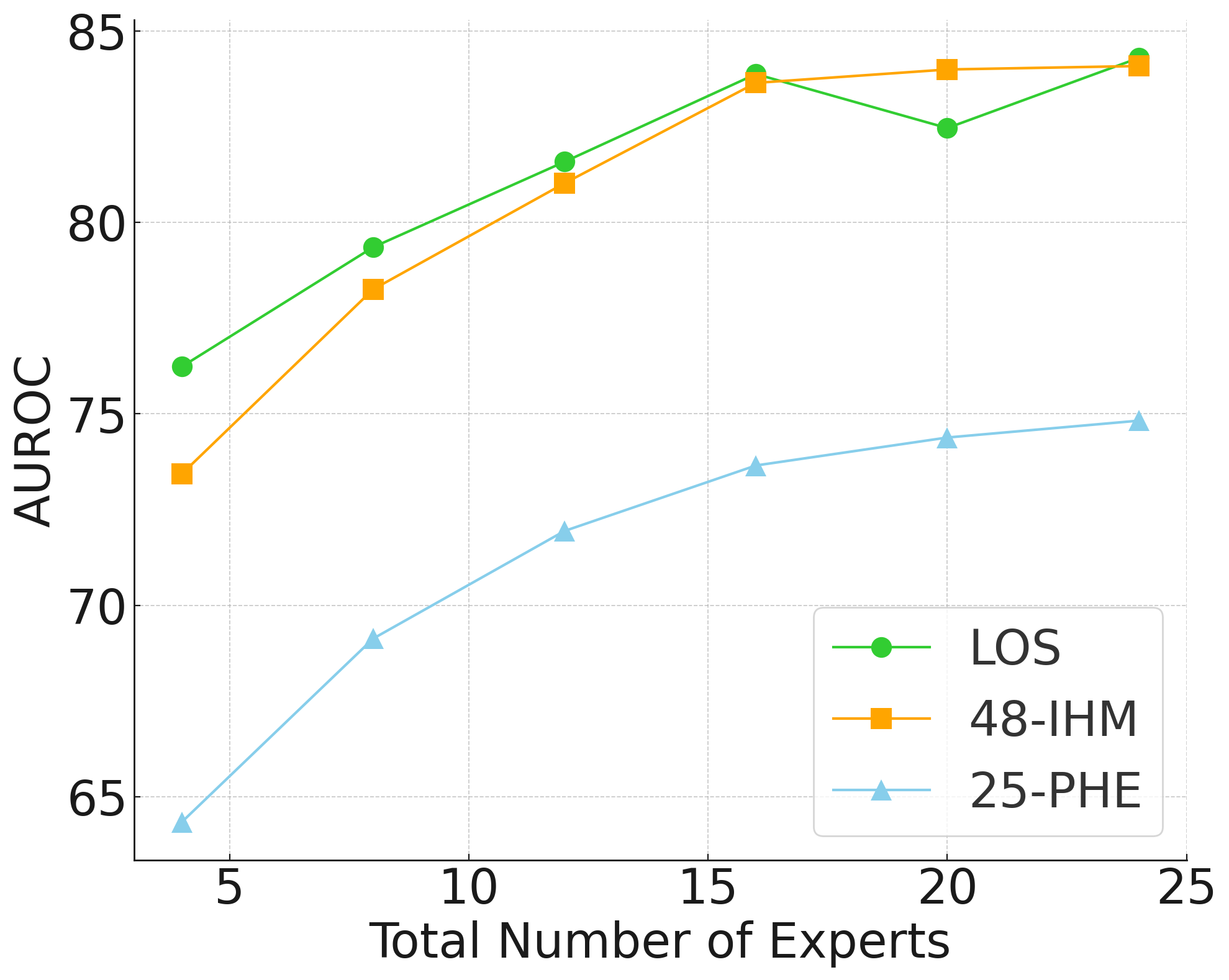} 
        %\vspace{-5pt}
        \\
        {\small{(b)}}
        \end{tabular} &
        \begin{tabular}{c}
        \includegraphics[width=.34\textwidth]{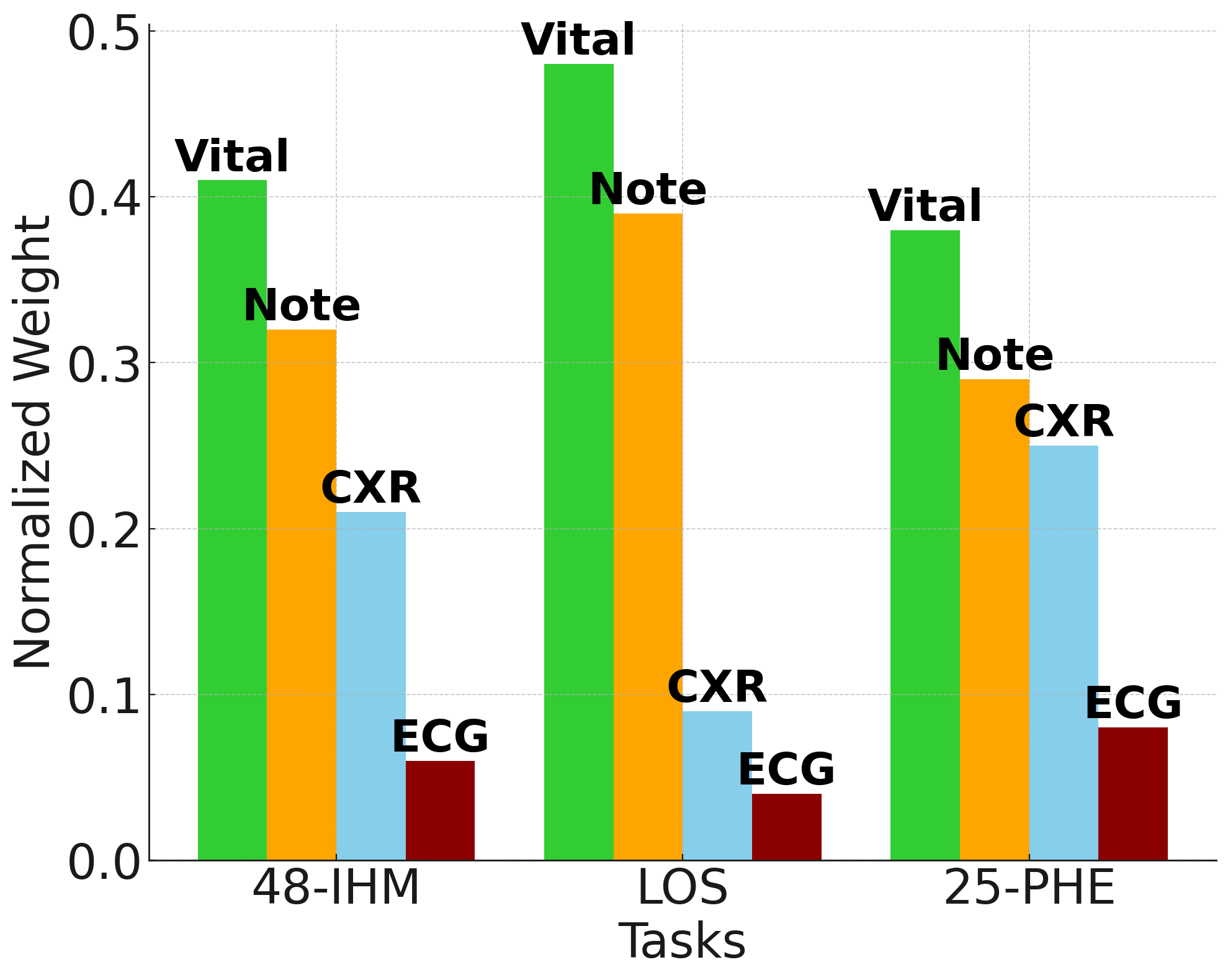} 
        %\vspace{-5pt}
        \\
        {\small{(c)}}
        \end{tabular} \\
        \end{tabular}
    \end{minipage}
    \vspace{-1em}
    \caption{Results of ablation studies on MoE architecture: (a) The computational efficiency and resource utilization of each method when applied to vital signs and clinical notes from the MIMIC-IV dataset; (b) The relationship between the number of experts and task performance across different modalities, including vital signs, clinical notes, and CXR; (c) The impact of each modality on the top-$k$ experts within a disjoint router structure.}
    \label{fig:ablation}
\end{figure*}

\begin{figure}[htb]
\centering
\includegraphics[width=0.6\columnwidth]{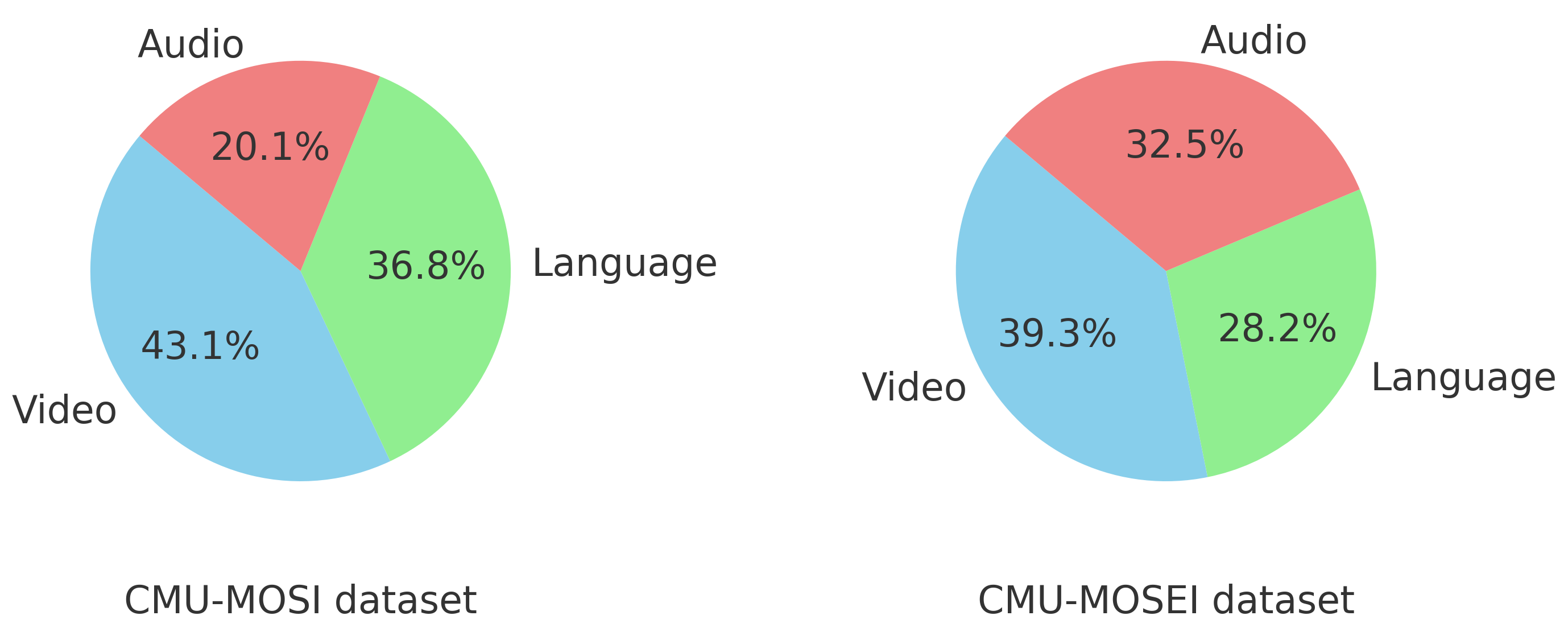}
\caption{Modality weight composition of CMU-MOSI and CMU-MOSEI datasets.}
\label{fig:mosi_weight}
\end{figure}

We then conducted ablation studies to explore the efficiency and effectiveness of MoE architecture on model performance. We mainly use MIMIC-IV as our test bed. Figure \ref{fig:ablation}(a) examines the computational efficiency and resource utilization, positioning \texttt{FuseMoE} approximately in the middle of the comparison. Despite the increase in model parameters due to the incorporation of the MoE layer, its sparse nature does not significantly escalate the computational load. Figure \ref{fig:ablation}(b) illustrates the correlation between the number of experts and task performance across different modalities. Generally, performance improves with the addition of more experts, plateauing once the count exceeds 16. To achieve a compromise between performance and computational expense, we opted to utilize the top 4 experts out of 16 in our experiments. 
Figure \ref{fig:ablation}(c) and Figure \ref{fig:mosi_weight} study the influence of each modality on the top-$k$ chosen experts. For every expert selected, we calculate the number of samples that include a specific modality, weighted by corresponding weight factors from the gating functions. The outcomes are subsequently normalized across modalities. The analysis of Figure \ref{fig:ablation}(c) reveals that predictions across all tasks heavily depend on vital signs and clinical notes. This reliance is attributed to the abundant samples in these two modalities. Despite the notably smaller quantity of CXR, they play more significant roles in the 25-PHE and 48-IHM tasks, which aligns with our findings in Table \ref{tab:mimic_4_all_mod}. The results in Figure \ref{fig:mosi_weight} demonstrate that the modality weight distribution in the MOSI and MOSEI datasets is more “spread out”, with the audio component carrying a greater weight in the MOSEI dataset.

\section{Details on Numerical Experiments}
\label{appendix:numerical_experiments}
We conduct multiple numerical experiments to illustrate the theoretical convergence rates of the MLE $\widehat{G}_n$ to the true mixing measure $G_*$ under both exact-specified and over-specified settings.

\subsection{Experimental Setup}
\textbf{Synthetic Data.} 
Assume that the true mixing measure $G_*=\sum_{i=1}^{k_*}\exp(\bzi)\delta_{(\wi,\ai,\bi,\nu^*_i)}$ is of order $k_* = 2$. The true parameters for the router, $(\wi, \bzi) \in \mathbb{R}^d \times \mathbb{R}$, are drawn independently from an isotropic Gaussian distribution with zero mean and variance $\sigma_r^2 = 0.01/d$ for $1 \le i \le 6$, and otherwise are set to zero. Similarly, the true parameters of the experts, $(a_i^*, b_i^*) \in \mathbb{R}^d \times \mathbb{R}$, are drawn independently of an isotropic Gaussian distribution with zero mean and variance $\sigma_e^2 = 1/d$ for all experts. For the variances $\nu^*_i$, we also sample from the Gaussian distribution $\mathcal{N}(0,\sigma^2_e)$, and then take the absolute value of the sample.
These parameters remain unchanged for all experiments.

Then, we generate i.i.d samples $\{(X_i, Y_i)\}_{i=1}^n$ by first sampling $X_i$'s from the uniform distribution $\mathrm{Uniform}[0, 1]$ and then sampling $Y_i$'s from the true conditional density $g_{G_*}(Y | X)$ of the Laplace gating Gaussian mixture of experts (MoE) given in \eqref{eq:model_density}. 
%Throughout the following experiments, we will consider only the scenario when $K=1$, that is, we choose the best expert from the true conditional density $g_{G_*}(Y|X)$.

\textbf{Maximum Likelihood Estimation (MLE).} A popular approach to determining the MLE $\widehat{G}_n$ for each set of samples is to use the EM algorithm \cite{dempster_maximum_1977}. However, since there are not any closed-form expressions for updating the gating parameters $\beta_{0i},\beta_{1i}$ in the maximization steps, we have to leverage an EM-based numerical scheme, which was previously used in \cite{nguyen2024statistical}. In particular, we utilize a simple coordinate gradient descent algorithm in the maximization steps. Additionally, we select the convergence criterion of $\epsilon = 10^{-6}$ and run a maximum of 2000 EM iterations.

\textbf{Initialization.}  
For each $k\in\{k_*,k_*+1\}$, we randomly distribute elements of the set $\{1, 2, ..., k\}$ into $k_*$ different Voronoi cells $\mathcal{A}_1, \mathcal{A}_2, \ldots, \mathcal{A}_{k_*}$, each contains at least one element. Moreover, we repeat this process for each replication. Subsequently, for each $j\in[k_*]$, we initialize parameters $W_{i}$ by sampling from a Gaussian distribution centered around its true counterpart $W^*_{j}$ with a small variance, where $i\in \mathcal{A}_j$. Other parameters $\beta_{i},a_i,b_i,\nu_i$ are also initialized in a similar fashion.

\subsection{Exact-specified Setting}
Under the exact-specified settings, we 
%specifically set $K = 1$, implying the utilization of Top-1 as defined in equation~\eqref{eq:conditional_form}. We 
conduct 5 sample generations for each configuration, across a spectrum of 10 different sample sizes $n$ ranging from $10^3$ to $10^5$. It can be seen from Figure~\ref{fig:simulation} (left) that the MLE $\widehat{G}_n$ empirically converges to the true mixing measure $G_*$ under the Voronoi metric $\mathcal{D}_1$ at the rate of order ${\mathcal{O}}(n^{-0.49})$, which matches the theoretical parametric convergence rate established in Theorem~\ref{theorem:exact-specified}.

\subsection{Over-specified Setting}
Under the over-specified settings, we continue to generate 5 samples of size $n$ for each setting, given 10 different choices of sample size $n \in [10^3, 10^5]$. From Figure~\ref{fig:simulation} (right), we observe that the MLE $\widehat{G}_n$ empirically converges to $G_*$ under the Voronoi metric $\mathcal{D}_2$ at the rate of order ${\mathcal{O}}(n^{-0.47})$, which aligns with the theoretical parametric convergence rate established in Theorem~\ref{theorem:over-specified}.

%As discussed in Section~\ref{sec:theory}, to guarantee the convergence of density estimation to the true density, we need to select $\overline{K}=2$ experts from the density estimation. As far as we know, existing works, namely \cite{pmlr-v108-kwon20a, pmlr-v130-kwon21b}, only focus on the global convergence of the EM algorithm for parameter estimation under the input-free gating MoE, while that under the top-K sparse softmax gating MoE has remained poorly understood. 
%Additionally, it is worth noting that the sample size must be sufficiently large so that the empirical convergence rate of the MLE returned by the EM algorithm aligns with the theoretical rate of order $\widetilde{\mathcal{O}}(n^{-0.47})$ derived in Theorem~\ref{theorem:over-specified}.

\section{Exact-Specified Setting}
\label{appendix:exact_specified}
In this appendix, we study the theoretical behaviors of the MLE under the exact-specified setting, i.e., $k = k_{*}$, of the Laplace gating Gaussian MoE. We demonstrate that under the exact-specified setting, the rate of estimated conditional density function $p_{\widehat{G}_{n}}$ to $p_{G_{*}}$ is parametric $\mathcal{O}n^{-1/2})$ (up to some logarithmic factor). 
\begin{theorem}
    \label{theorem:density_estimation}
    The density estimation $p_{\widehat{G}_n}(Y|X)$ converges to the true density $p_{G_*}(Y|X)$ under the Total Variation distance $V$ at the following rate:
    \begin{align*}
        \mathbb{E}_X[V(p_{\widehat{G}_n}(\cdot|X),p_{G_*}(\cdot|X))]=\mathcal{O}(\sqrt{\log(n)/n}).
    \end{align*}
\end{theorem}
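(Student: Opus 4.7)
The plan is to follow the standard empirical-process route for MLE convergence in mixture-of-experts models, as developed in \cite{Vandegeer-2000} and applied in related MoE works (e.g., \cite{nguyen2023demystifying, nguyen2024statistical}). Specifically, letting $\mathcal{P}_{k_*}(\Theta) := \{p_G(Y\mid X) : G \in \mathcal{G}_{k_*}(\Theta)\}$ denote the class of conditional densities induced by the Laplace gating Gaussian MoE, I will verify the hypotheses of a general MLE concentration theorem (e.g., Theorem 7.4 of \cite{Vandegeer-2000}) to conclude the parametric rate $\mathcal{O}(\sqrt{\log(n)/n})$ for $\mathbb{E}_X[V(p_{\widehat{G}_n}(\cdot\mid X), p_{G_*}(\cdot\mid X))]$. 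The central quantity to control is the bracketing entropy integral $\int_0^{\delta} \sqrt{H_B(u, \mathcal{P}_{k_*}(\Theta), \|\cdot\|_2)}\, du$, where the norm is taken with respect to the product measure on $(X,Y)$.

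The first step is to establish a uniform Lipschitz property: for any two mixing measures $G, G' \in \mathcal{G}_{k_*}(\Theta)$ with components $\{(\beta_i, W_i, a_i, b_i, \nu_i)\}$ and $\{(\beta_i', W_i', a_i', b_i', \nu_i')\}$, I will show that $|p_G(Y\mid X) - p_{G'}(Y\mid X)|$ is bounded, up to a polynomial envelope in $\|X\|$ and $|Y|$, by $\max_i \|(\beta_i, W_i, a_i, b_i, \nu_i) - (\beta_i', W_i', a_i', b_i', \nu_i')\|$. The Laplace gating weights $\mathrm{softmax}(-\|W_i - X\| + \beta_i)$ are Lipschitz in the gating parameters because $x \mapsto \|W_i - x\|$ is $1$-Lipschitz and softmax is $1$-Lipschitz on $\ell_\infty$; the Gaussian expert density is Lipschitz in $(a_i, b_i, \nu_i)$ on the compact $\Theta$ with an envelope of the form $C(1 + \|X\|^c + Y^{c})\exp(-c'Y^2)$ for some constants, so the expectation of the envelope is finite.

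The second step is to leverage compactness of $\Theta$ to convert the Lipschitz bound into a covering number bound: an $\epsilon$-cover of $\Theta^{k_*}$ of cardinality $(1/\epsilon)^{\mathcal{O}(d k_*)}$ produces brackets of width $\mathcal{O}(\epsilon)$ on $\mathcal{P}_{k_*}(\Theta)$ (after a standard argument that pairs lower/upper envelopes with the Lipschitz constant). This yields the bracketing entropy bound $H_B(\epsilon, \mathcal{P}_{k_*}(\Theta), \|\cdot\|_2) \lesssim \log(1/\epsilon)$. Plugging into the entropy integral gives $\int_0^\delta \sqrt{\log(1/u)}\, du = \mathcal{O}(\delta \sqrt{\log(1/\delta)})$, and applying the standard MLE inequality with $\delta_n \asymp \sqrt{\log(n)/n}$ yields the claimed rate. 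The identical argument works for the over-specified setting and is what underlies Theorem~\ref{theorem:over_density_estimation}.

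The main obstacle I foresee is verifying the Lipschitz/envelope estimate for the Laplace gate carefully and uniformly over $\Theta$. Unlike the Softmax inner-product gate, the gate here uses the (non-smooth at zero but globally Lipschitz) Euclidean distance, so some care is required at $X = W_i$; however, $1$-Lipschitzness of $\|\cdot\|$ suffices and no differentiability is needed for the bracketing argument. A second minor subtlety is that the expert variances $\nu_i$ must be bounded away from zero on $\Theta$ so that the Gaussian density is uniformly bounded and the envelope in $Y$ has a finite moment; assuming $\Theta$ enforces $\nu_i \geq \underline{\nu} > 0$ (standard in this literature), this presents no difficulty. Once these two points are settled, the remainder is a direct application of the standard machinery.
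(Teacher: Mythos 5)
Your proposal follows essentially the same route as the paper's proof: both verify the hypotheses of van de Geer's Theorem 7.4 by bounding the bracketing entropy of the density class via a parameter-space cover (using compactness of $\Theta$, Lipschitzness of the Laplace gate and of the Gaussian expert density in its parameters, and a Gaussian tail envelope to build brackets), obtaining $H_B(\eta,\cdot)\lesssim\log(1/\eta)$ and hence the rate $\sqrt{\log(n)/n}$ with $\Psi(\delta)=\delta\sqrt{\log(1/\delta)}$. The only cosmetic differences are that the paper routes the entropy bound through $\|\cdot\|_1$ and the Hellinger distance before passing to Total Variation, and works with the class $\widetilde{\mathcal{P}}^{1/2}_{k}(\Theta,\delta)$ of square-rooted averaged densities as in van de Geer's formulation.
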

The proof of Theorem~\ref{theorem:density_estimation} can be done similarly to that of Theorem~\ref{theorem:over_density_estimation} in Appendix~\ref{appendix:over_density_estimation}. The result of Theorem~\ref{theorem:density_estimation} indicates that as long as we can establish the lower bound of the total variation distance between $p_{\widehat{G}_n}$ and $p_{G_{*}}$ based on certain loss function between the MLE $\widehat{G}_{n}$ and the true mixing measure $G_{*}$, we directly achieve the rate of the MLE under that loss function. 

\textbf{Voronoi Loss} We now define that loss function between the MLE and the true mixing measure
% Given some mixing measure $G:=\sum_{i=1}^{k_*}\exp(\beta_{i})\delta_{(W_i,a_i,b_i,\nu_i)}$, we distribute its components to the following Voronoi cells which are generated by the support of the true mixing measure $G_*:=\sum_{j=1}^{k_*}\exp(\bzj)\delta_{(\wj,\aj,\bj,\vvj)}$:
% \begin{align*}
%     %\label{eq:Voronoi_cells}
%     \mathcal{A}_{j}\equiv\mathcal{A}_{j}(G):=\{i\in[k_*]:\|\theta_i-\theta^*_j\|\leq\|\theta_i-\theta^*_{\ell}\|, \ \forall \ell\neq j\},
% \end{align*}
% where $\theta_i:=(W_i,a_i,b_i,\nu_i)$ and $\theta^*_j:=(\wj,\aj,\bj,\vvj)$ for any $1 \leq i \leq k$ and $1 \leq j \leq k_{*}$. Based on these Voronoi cells, we propose the following Voronoi loss function 
for the exact-specified setting:
\begin{align}
    \label{eq:Voronoi_loss}
    &\mathcal{D}_{1}(G,G_*):=\sum_{j=1}^{k_*}\Big|\sum_{i\in\mathcal{A}_{j}}\exp(\beta_i)-\exp(\beta^*_{j})\Big|+\sum_{j\in[k_*]:|\mathcal{A}_j|=1}\sum_{i\in\mathcal{A}_{j}}\exp(\beta_i)\Phi_{ij}(1,1,1,1).
\end{align}
% where the maximum in the definition of Voronoi loss function $\mathcal{D}_{1}$ is for $\{\tau_1,\tau_2,\ldots,\tau_K\}\subseteq[k_*]$. Furthermore, 
Above, for any $(\rho_1,\rho_2,\rho_3,\rho_4)\in\mathbb{R}^4$, we define $\Phi_{ij}(\rho_1,\rho_2,\rho_3,\rho_4) = \|W_{i} - W_{j}^{*}\|^{\rho_1} + \|a_{i} - a_{j}^{*}\|^{\rho_2} + |b_{i} - b_{j}^{*}|^{\rho_3} + |\nu_{i} - \nu_{j}^{*}|^{\rho_4}$ for any $i \in \mathcal{A}_{j}$ and $j \in [k_*]$. We demonstrate in the following theorem that the rate of MLE to the true mixing measure under the Voronoi loss function $\mathcal{D}_{1}$ is $\mathcal{O}(n^{-1/2})$ (up to some logarithmic factor). 
\begin{theorem}[Exact-specified setting]
    \label{theorem:exact-specified}
    When $k=k_*$ is known, the following Total Variation bound holds guarantetrue for any $G\in\mathcal{G}_{k}(\Theta)$:
    \begin{align*}
        \bbE_X[V(p_{G}(\cdot|X),p_{G_*}(\cdot|X))]\gtrsim\mathcal{D}_{1}(G,G_*).
    \end{align*}
    Therefore, we have $\mathcal{D}_{1}(\widehat{G}_n,G_*)=\mathcal{O}(\sqrt{\log(n)/n})$.
\end{theorem}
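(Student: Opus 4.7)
By Theorem~\ref{theorem:density_estimation}, it suffices to establish the Total Variation lower bound $\mathbb{E}_X[V(p_G(\cdot|X),p_{G_*}(\cdot|X))] \gtrsim \mathcal{D}_1(G,G_*)$ uniformly over $G\in\mathcal{G}_k(\Theta)$; plugging $G=\widehat{G}_n$ into this bound then yields the claimed parametric rate. Following the standard strategy in the MoE identifiability literature (e.g.\ \cite{manole22refined,nguyen2023demystifying}), I would split this lower bound into a local part (over a shrinking Voronoi ball around $G_*$) and a global part (over its complement). The global part is a routine compactness-plus-identifiability argument: by continuity of $G\mapsto p_G$ and the injectivity of the Laplace-gating MoE map modulo label permutations, the ratio $\mathbb{E}_X[V]/\mathcal{D}_1$ is bounded below on the compact set $\{G\in\mathcal{G}_k(\Theta):\mathcal{D}_1(G,G_*)\geq\varepsilon\}$.

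The local bound is the substantive part, and I would argue by contradiction. Suppose there exists a sequence $G_n = \sum_{i=1}^{k}\exp(\beta_{n,i})\delta_{(W_{n,i},a_{n,i},b_{n,i},\nu_{n,i})}$ with $\mathcal{D}_1(G_n,G_*)\to 0$ but $\mathbb{E}_X[V(p_{G_n},p_{G_*})]/\mathcal{D}_1(G_n,G_*)\to 0$. Since $k=k_*$, the cells $\{\mathcal{A}_j^n\}_{j=1}^{k_*}$ partition $[k]$; if some $\mathcal{A}_j^n$ were empty for infinitely many $n$, then the mass term $|\sum_{i\in\mathcal{A}_j^n}\exp(\beta_{n,i})-\exp(\beta_j^*)| = \exp(\beta_j^*)$ would stay bounded below, contradicting $\mathcal{D}_1(G_n,G_*)\to 0$. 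Hence, after a relabeling, I may assume $\mathcal{A}_j^n=\{j\}$ for every $j\in[k_*]$ and all $n$ sufficiently large, so that $\mathcal{D}_1(G_n,G_*)$ controls every individual discrepancy $\Delta W_{n,j}:=W_{n,j}-W_j^*$, $\Delta a_{n,j}$, $\Delta b_{n,j}$, $\Delta \nu_{n,j}$, and $\Delta \beta_{n,j}:=\exp(\beta_{n,j})-\exp(\beta_j^*)$.

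A first-order Taylor expansion of $p_{G_n}(Y|X)-p_{G_*}(Y|X)$ in these discrepancies, using smoothness of the Gaussian expert density in $(a,b,\nu)$ and of the Laplace gate in $(W,\beta)$ off the null set $\{W_j^*=X\}$, yields a schematic representation $p_{G_n}(Y|X)-p_{G_*}(Y|X) = \sum_{j=1}^{k_*}\sum_{\star}\Delta_{n,j}^{(\star)}\cdot U_j^{(\star)}(X,Y) + R_n(X,Y)$, where $\star$ ranges over the parameter types $(\beta,W,a,b,\nu)$, $U_j^{(\star)}$ is the corresponding partial derivative of $p_{G_*}(Y|X)$, and $\|R_n\|_{L^1(\mathbb{P})} = o(\mathcal{D}_1(G_n,G_*))$. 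Dividing through by $\mathcal{D}_1(G_n,G_*)$ and passing to a subsequence along which each normalized coefficient has a limit $\lambda_j^{(\star)}$ (with at least one nonzero, by definition of $\mathcal{D}_1$), the contradiction hypothesis forces the limiting combination $\sum_{j,\star}\lambda_j^{(\star)}U_j^{(\star)}(X,Y)$ to vanish for almost every $(X,Y)$.

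The main obstacle is establishing linear independence of $\{U_j^{(\beta)},U_j^{(W)},U_j^{(a)},U_j^{(b)},U_j^{(\nu)}\}_{j=1}^{k_*}$ as functions of $(X,Y)$. Here the geometry of the Laplace gate is decisive: differentiating the softmax-of-$L_2$-distance in $W_j$ pulls out the unit-direction factor $(W_j^*-X)/\|W_j^*-X\|$, which is neither polynomial nor rational in $X$, so $U_j^{(W)}$ is linearly independent of the remaining $U_j^{(\cdot)}$ terms almost surely in $X$. The Gaussian derivatives $U_j^{(b)}, U_j^{(\nu)}, U_j^{(a)}$ are then disentangled via the heat-equation identity $\partial_\nu f(Y|\mu,\nu) = \tfrac{1}{2}\partial_\mu^2 f(Y|\mu,\nu)$ and the classical linear independence of Gaussian densities with distinct conditional means $(a_j^*)^{\top}X+b_j^*$, guaranteed by non-degeneracy of $\Theta$. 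Forcing all $\lambda_j^{(\star)}$ to zero contradicts the normalization, proves the local bound, and completes the argument.
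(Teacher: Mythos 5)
Your proposal follows essentially the same route as the paper's proof in Appendix~\ref{appendix:exact-specified}: reduce to the Total Variation lower bound via Theorem~\ref{theorem:density_estimation}, split into a local bound (proved by contradiction along a sequence $G_n$ with singleton Voronoi cells, first-order Taylor expansion, Fatou's lemma, and linear independence of the gate and Gaussian-expert derivatives using the identity $\partial_\nu f=\tfrac{1}{2}\partial_\mu^2 f$) and a global bound (compactness plus the identifiability Lemma~\ref{prop:identifiable}). The only notable technical difference is that the paper first multiplies the density difference by the gating denominator $\sum_i\exp(-\|W_i^*-X\|+\beta_i^*)$ before expanding, which keeps the expansion a clean linear combination of products $\tfrac{\partial^{|\alpha_1|}g}{\partial W^{\alpha_1}}(X;W_i^*)\cdot\tfrac{\partial^{\cdot}f}{\partial h_1^{\cdot}}$ and $p_{G_*}(Y|X)$ (rather than quotient-rule cross terms), and it proves---rather than merely asserts---that at least one normalized coefficient is non-vanishing.
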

Proof of Theorem~\ref{theorem:exact-specified} is in Appendix~\ref{appendix:exact-specified}. The convergence rate of MLE under the Voronoi loss function $\mathcal{D}_{1}$ implies that the rates of estimating the true parameters $W_{i}^{*}, a_{i}^{*}, b_{i}^{*}, \nu_{i}^{*}$ are also $\mathcal{O}(n^{-1/2})$ (up to logarithmic factors). These rates are comparable to those under the exact-specified setting of softmax gating Gaussian MoE (cf. Theorem 1 in ~\cite{nguyen2023demystifying}). 

% \subsection{Over-Specified Setting}

% We now move to the over-specified setting, namely, when $k > k_{*}$. Similar to the exact-specified setting, 

\section{Proof of Theoretical Results} \label{app:proofs}
In this appendix, we provide proofs for all theoretical results in the paper. Throughout this appendix, for any vector $v \in \mathbb{R}^{d}$ and $\alpha:=(\alpha_1,\alpha_2,\ldots,\alpha_d)\in\mathbb{N}^d$, we denote $v^{\alpha}=v_{1}^{\alpha_{1}}v_{2}^{\alpha_{2}}\ldots v_{d}^{\alpha_{d}}$, $|v|:=v_1+v_2+\ldots+v_d$ and $\alpha!:=\alpha_{1}!\alpha_{2}!\ldots \alpha_{d}!$.

\subsection{Proof of Theorem~\ref{theorem:exact-specified}}
\label{appendix:exact-specified}
First of all, we need to establish the following bound:
\begin{align*}
    \bbE_X[V(p_{G}(\cdot|X),p_{G_*}(\cdot|X))]\gtrsim \mathcal{D}_1(G,G_*).
\end{align*}
For that sake, it is sufficient to demonstrate two following inequalities:
\begin{itemize}
    \item \textbf{Inequality A.} $\inf_{G\in\mathcal{G}_{k_*}(\Theta): \mathcal{D}_1(G,G_*)\leq\varepsilon'}\dfrac{\bbE_X[V(p_{G}(\cdot|X),p_{G_*}(\cdot|X))]}{\mathcal{D}_1(G,G_*)}>0$;
    \item \textbf{Inequality B.} $\inf_{G\in\mathcal{G}_{k_*}(\Theta): \mathcal{D}_1(G,G_*)>\varepsilon'}\dfrac{\bbE_X[V(p_{G}(\cdot|X),p_{G_*}(\cdot|X))]}{\mathcal{D}_1(G,G_*)}>0$,
\end{itemize}
for some constant $\varepsilon'>0$.

\textbf{Proof of inequality A}: The inequality A is equivalent to 
\begin{align*}
    \lim_{\varepsilon\to 0}\inf_{G\in\mathcal{G}_{k_*}(\Theta): \mathcal{D}_1(G,G_*)\leq\varepsilon}\frac{\bbE_X[V(p_{G}(\cdot|X),p_{G_*}(\cdot|X))]}{\mathcal{D}_1(G,G_*)}>0.
\end{align*}
Assume that the above inequality is not true, then, there exists a sequence of mixing measure $G_n:=\sum_{i=1}^{k_*}\exp(\beta^n_{i})\delta_{(\win,\ain,\bin,\vvin)}\in\mathcal{G}_{k_*}(\Theta)$ such that both $\mathcal{D}_1(G_n,G_*)$ and $\bbE_X[V(p_{G_n}(\cdot|X),p_{G_*}(\cdot|X))]/\mathcal{D}_1(G_n,G_*)$ go to zero as $n\to\infty$. Now, we define
\begin{align*}
    \mathcal{A}^n_j=\mathcal{A}_j(G_n):=\{i\in[k_*]:\|\theta^n_i-\theta^*_j\|\leq\|\theta^n_i-\theta^*_{\tau} \|, \ \forall \tau\neq j\},
\end{align*}
for any $j\in[k_*]$ as Voronoi cells with respect to the mixing measure $G_n$, where we denote $\theta^n_i:=(\win,\ain,\bin,\vvin)$ and $\theta^*_j:=(\wj,\aj,\bj,\vvj)$. In this proof, since our arguments are assymptotic, we can assume without loss of generality (WLOG) that these Voronoi cells does not depend on $n$, that is, $\mathcal{A}_j=\mathcal{A}^n_j$. Next, it follows from the hypothesis $\mathcal{D}_{1n}:=\mathcal{D}_1(G_n,G_*)\to 0$ as $n\to\infty$ that each Voronoi cell contains only one element. Therefore, we may assume WLOG that $\mathcal{A}_j=\{j\}$ for any $j\in[k_*]$, which implies that $(\wjn,\ajn,\bjn,\vvjn)\to(\wj,\aj,\bj,\vvj)$ and $\exp(\bzjn)\to\exp(\bzj)$ as $n\to\infty$. Then, the loss function between $G_n$ and $G_*$ is given by
% Subsequently, to specify the top $K$ selection in the formulations of $p_{G_n}(Y|X)$ and $p_{G_*}(Y|X)$, we divide the covariate space $\mathcal{X}$ into some subsets in two ways. In particular, we first consider $q:=\binom{k_*}{K}$ different $K$-element subsets of $[k_*]$, which are assumed to take the form $\{\tau_1,\ldots,\tau_K\}$, for $\tau\in[q]$. Additionally, we denote $\{\tau_{K+1},\ldots,\tau_{k_*}\}:=[k_*]\setminus\{\tau_1,\ldots,\tau_K\}$. Then, we define for each $\tau\in[q]$ two following subsets of $\mathcal{X}$:
% \begin{align*}
%     \mathcal{X}^n_{\tau}:=\Big\{x\in\mathcal{X}:-\|W_j-x\|\geq -\|W_{j'}-x\|:\forall j\in\{\tau_1,\ldots,\tau_K\},j'\in\{\tau_{K+1},\ldots,\tau_{k_*}\}\Big\},\\
%     \mathcal{X}^*_{\tau}:=\Big\{x\in\mathcal{X}:-\|W^*_{j}-x\|\geq -\|W^*_{j'}-x\|:\forall j\in\{\tau_1,\ldots,\tau_K\},j'\in\{\tau_{K+1},\ldots,\tau_{k_*}\}\Big\}.
% \end{align*}
% Since $(\wjn,\bzjn)\to(\wj,\bzj)$ as $n\to\infty$ for any $j\in[k_*]$, we have for any arbitrarily small $\eta_j>0$ that $\|\wjn-\wj\|\leq\eta_j$ and $|\bzjn-\bzj|\leq\eta_j$ for sufficiently large $n$. By applying Lemma~\ref{lemma:partition_exact}, we obtain that $\mathcal{X}^n_{\tau}=\mathcal{X}^*_{\tau}$ for any $\tau\in[q]$ for sufficiently large $n$.
\begin{align*}
\mathcal{D}_1(G_n,G_*)=\sum_{i=1}^{k_*}\Big[\exp(\bzin)\Big(\|\Delta\win\|+\|\Delta \ain\|+\|\Delta\bin\|+\|\Delta\vvin\|\Big)+\Big|\exp(\bzin)-\exp(\bzi)\Big|\Big],
\end{align*}
where we denote $\Delta\beta^n_{1i}:=\beta^n_{1i}-\beta^*_{1i}$, $\Delta a^n_i:=a^n_i-a^*_i$, $\Delta b^n_i:=b^n_i-b^*_i$ and $\Delta\nu^n_i:=\nu^n_i-\nu^*_i$.

% Let $\tau\in[q]$ such that $\{\tau_1,\ldots,\tau_K\}=\{1,\ldots,K\}$. Then, for almost surely $(X,Y)\in\mathcal{X}^*_{\tau}\times\mathcal{Y}$, we can rewrite the conditional densities $p_{G_n}(Y|X)$ and $p_{G_*}(Y|X)$ as
% \begin{align*}
%     p_{G_n}(Y|X)&=\sum_{i=1}^{K}\frac{\exp(-\|\win-X\|+\bzin)}{\sum_{j=1}^{K}\exp(-\|\wjn-X\|+\bzjn)}\cdot f(Y|(\ain)^{\top}X+\bin,\vvin),\\
%     p_{G_*}(Y|X)&=\sum_{i=1}^{K}\frac{\exp(-\|\wi-X\|+\bzi)}{\sum_{j=1}^{K}\exp(-\|\wj-X\|+\bzj)}\cdot f(Y|(\ai)^{\top}X+\bi,\vvi).
% \end{align*}
Now, we break the rest of our arguments into three steps:

\textbf{Stage 1 - Density decomposition}:

In this step, we aim to decompose the term  $Q_n:=\Big[\sum_{i=1}^{k_*}\exp(-\|\wi-X\|+\bzi)\Big]\cdot[p_{G_n}(Y|X)-p_{G_*}(Y|X)]$, which can be represented as follows:
\begin{align}
    \label{eq:Qn_decomposition}
    Q_n&=\sum_{i=1}^{k_*}\exp(\bzin)\Big[F(Y|X;\win,\ain,\bin,\vvin)-F(Y|X;\wi,\ai,\bi,\vvi)\Big]\nonumber\\
    &-\sum_{i=1}^{k_*}\exp(\bzin)\Big[H(Y|X;\win)-H(Y|X;\wi)\Big]\nonumber\\
    &+\sum_{i=1}^{k_*}\Big[\exp(\bzin)-\exp(\bzi)\Big]\Big[F(Y|X;\wi,\ai,\bi,\vvi)-H(Y|X,\wi)\Big]\nonumber\\
    :&=A_n-B_n+E_n,
\end{align}
where we denote $F(Y|X;W,a,b,\nu):=\exp(-\|W-X\|)f(Y|a^{\top}X+b,\nu)$ and $H(Y|X;W)=\exp(-\|W-X\|)p_{G_n}(Y|X)$. By applying the first-order Taylor expansion, we can rewrite $A_n$ as
\begin{align*}
    A_n&=\sum_{i=1}^{k_*}\sum_{|\alpha|=1}\frac{\exp(\bzin)}{\alpha!}\cdot(\Delta\win)^{\alpha_1}(\Delta \ain)^{\alpha_2}(\Delta\bin)^{\alpha_3}(\Delta\vvin)^{\alpha_4}\\
    &\hspace{4cm}\times\dfrac{\partial^{|\alpha_1|+|\alpha_2|+\alpha_3+\alpha_4}F}{\partial W^{\alpha_1}\partial a^{\alpha_2}\partial b^{\alpha_3}\partial\nu^{\alpha_4}}(Y|X;\wi,\ai,\bi,\vvi)+R_1(X,Y)\\
    &=\sum_{i=1}^{k_*}\sum_{|\alpha|=1}\frac{\exp(\bzin)}{\alpha!}\cdot(\Delta\win)^{\alpha_1}(\Delta \ain)^{\alpha_2}(\Delta\bin)^{\alpha_3}(\Delta\vvin)^{\alpha_4}\\
    &\hspace{3cm}\times\dfrac{\partial^{|\alpha_1|}g}{\partial W^{\alpha_1}}(X;\wi)\cdot\dfrac{\partial^{|\alpha_2|+\alpha_3+\alpha_4}f}{\partial a^{\alpha_2}\partial b^{\alpha_3}\partial \nu^{\alpha_4}}(Y|(\ai)^{\top}X+\bi,\vvi)+R_1(X,Y),
\end{align*}
where $R_1(X,Y)$ is a Taylor remainder that satisfies $R_1(X,Y)/\mathcal{D}_1(X,Y)\to 0$ as $n\to\infty$ and $g(X,W):=\exp(\|W-X\|)$. Recall that $f$ is the univariate Gaussian density, then by denoting $h_1(X;a,b):=a^{\top}X+b$, we can verify that
\begin{align*}
    \frac{\partial^{\alpha_4}f}{\partial \nu^{\alpha_4}}(Y|(\ai)^{\top}X+\bi,\vvi)=\frac{1}{2^{\alpha_4}}\cdot\frac{\partial^{2\alpha_4}f}{\partial h_1^{2\alpha_4}}(Y|(\ai)^{\top}X+\bi,\vvi).
\end{align*}
Consequently, we get
\begin{align}
    A_n&=\sum_{i=1}^{k_*}\sum_{|\alpha|=1}\frac{\exp(\bzin)}{2^{\alpha_4}\alpha!}\cdot(\Delta\win)^{\alpha_1}(\Delta \ain)^{\alpha_2}(\Delta\bin)^{\alpha_3}(\Delta\vvin)^{\alpha_4}\nonumber\\
    &\hspace{3cm}\times X^{\alpha_2}\cdot\dfrac{\partial^{|\alpha_1|}g}{\partial W^{\alpha_1}}(X;\wi)\cdot\frac{\partial^{|\alpha_2|+\alpha_3+2\alpha_4}f}{\partial h_1^{|\alpha_2|+\alpha_3+2\alpha_4}}(Y|(\ai)^{\top}X+\bi,\vvi)+R_1(X,Y)\nonumber\\
    &=\sum_{i=1}^{k_*}\sum_{|\alpha_1|=0}^{1}\sum_{|\alpha_2|=0}^{1-|\alpha_1|}\sum_{\eta=0}^{2(1-|\alpha_1|-|\alpha_2|)}\sum_{\substack{\alpha_3+2\alpha_4=\eta, \\ 0\leq\alpha_3+\alpha_4\leq 1-|\alpha_1|-|\alpha_2|}}\frac{\exp(\bzin)}{2^{\alpha_4}\alpha!}\cdot(\Delta\win)^{\alpha_1}(\Delta \ain)^{\alpha_2}(\Delta\bin)^{\alpha_3}(\Delta\vvin)^{\alpha_4}\nonumber\\
    \label{eq:An1_decompose}
    &\hspace{4cm}\times X^{\alpha_2}\cdot\dfrac{\partial^{|\alpha_1|}g}{\partial W^{\alpha_1}}(X;\wi)\cdot\frac{\partial^{|\alpha_2|+\eta}f}{\partial h_1^{|\alpha_2|+\eta}}(Y|(\ai)^{\top}X+\bi,\vvi)+R_1(X,Y),
\end{align}
where we denote $\eta=\alpha_3+2\alpha_4\in\mathbb{N}$.

Subsequently, we also apply the first-order Taylor expansion to the term $B_n$ defined in \eqref{eq:Qn_decomposition} and get that
\begin{align}
    B_n&=\sum_{i=1}^{k_*}\sum_{|\gamma|=1}\frac{\exp(\bzin)}{\gamma!}(\Delta\win)^{\gamma}\cdot \dfrac{\partial^{|\gamma|}H}{\partial W^{\gamma}}(Y|X;\wi)+R_2(X,Y)\nonumber\\
    \label{eq:Bn1_decompose}
    &=\sum_{i=1}^{k_*}\sum_{|\gamma|=1}\frac{\exp(\bzin)}{\gamma!}(\Delta\win)^{\gamma}\cdot \dfrac{\partial^{|\gamma|}g}{\partial W^{\gamma}}(X;\wi)p_{G_n}(Y|X)+R_2(X,Y),
\end{align}
where $R_2(X,Y)$ is a Taylor remainder such that $R_2(X,Y)/\mathcal{D}_{1}(G_n,G_*)\to0$ as $n\to\infty$. 

From the above results, the term $Q_n$ can be rewritten as
\begin{align}
    Q_n&=\sum_{i=1}^{k_*}\sum_{|\alpha_1|=0}^{1}\sum_{|\alpha_2|=0}^{1-|\alpha_1|}\sum_{\eta=0}^{2(1-|\alpha_1|-|\alpha_2|)}S^n_{i,\alpha_1,\alpha_2,\eta}\cdot X^{\alpha_2}\cdot\dfrac{\partial^{|\alpha_1|}g}{\partial W^{\alpha_1}}(X;\wi)\cdot\frac{\partial^{|\alpha_2|+\eta}f}{\partial h_1^{|\alpha_2|+\eta}}(Y|(\ai)^{\top}X+\bi,\vvi)\nonumber\\
    \label{eq:Hn_formulation}
    &+\sum_{i=1}^{k_*}\sum_{|\gamma|=0}^{1}T^n_{i,\gamma}\cdot \dfrac{\partial^{|\gamma|}g}{\partial W^{\gamma}}(X;\wi)p_{G_n}(Y|X) +R_1(X,Y)+R_2(X,Y),
\end{align}
in which we respectively define for each $i\in[k_*]$ that
\begin{align*}
    S^n_{i,\alpha_1,\alpha_2,\eta}&:=\sum_{\substack{\alpha_3+2\alpha_4=\eta, \\ 0\leq\alpha_3+\alpha_4\leq1-|\alpha_1|-|\alpha_2|}}\frac{\exp(\bzin)}{2^{\alpha_4}\alpha!}\cdot(\Delta\win)^{\alpha_1}(\Delta \ain)^{\alpha_2}(\Delta\bin)^{\alpha_3}(\Delta\vvin)^{\alpha_4},\\
    T^n_{i,\gamma}&:=\frac{\exp(\bzin)}{\gamma!}(\Delta\win)^{\gamma},
\end{align*}
for any $(\alpha_1,\alpha_2,\eta)\neq (\zerod,\zerod,0)$ and $\gamma\neq\zerod$. Otherwise, $S^n_{i,\zerod,\zerod,0}=T^n_{i,\zerod}:=\exp(\bzin)-\exp(\bzi)$.

\textbf{Stage 2 - Non-vanishing coefficients}:

Moving to the second step, we will show that not all the ratios $S^n_{i,\alpha_1,\alpha_2,\eta}/\mathcal{D}_{1}(G_n,G_*)$ and $T^n_{i,\gamma}/\mathcal{D}_{1}(G_n,G_*)$ tend to zero as $n\to\infty$. Assume by contrary that all of them approach zero when $n\to\infty$, then for $(\alpha_1,\alpha_2,\eta)=(\zerod,\zerod,0)$, it follows that
\begin{align}
    \label{eq:first_limit}
    \frac{1}{\mathcal{D}_{1}(G_n,G_*)}\cdot\sum_{i=1}^{k_*}\Big|\exp(\bzin)-\exp(\bzi)\Big|=\sum_{i=1}^{k_*}\frac{|S^n_{i,\alpha_1,\alpha_2,\eta}|}{\mathcal{D}_{1}(G_n,G_*)}\to0.
\end{align}
Additionally, for tuples $(\alpha_1,\alpha_2,\eta)$ where $\alpha_1\in\{e_1,e_2,\ldots,e_d\}$ with $e_j:=(0,\ldots,0,\underbrace{1}_{j-th},0,\ldots,0)$, $\alpha_2=\zerod$ and $\eta=0$, we get
\begin{align*}
    \frac{1}{\mathcal{D}_{1}(G_n,G_*)}\cdot\sum_{i=1}^{k_*}\exp(\bzin)\|\Delta\win\|_1=\sum_{i=1}^{k_*}\frac{|S^n_{i,\alpha_1,\alpha_2,\eta}|}{\mathcal{D}_{1}(G_n,G_*)}\to0.
\end{align*}
For $(\alpha_1,\alpha_2,\eta)$ where $\alpha_1=\zerod$, $\alpha_2\in\{e_1,e_2,\ldots,e_d\}$ and $\eta=0$, we have
\begin{align*}
    \frac{1}{\mathcal{D}_{1}(G_n,G_*)}\cdot\sum_{i=1}^{k_*}\exp(\bzin)\|\Delta\ain\|_1=\sum_{i=1}^{k_*}\frac{|S^n_{i,\alpha_1,\alpha_2,\eta}|}{\mathcal{D}_{1}(G_n,G_*)}\to0.
\end{align*}
For $(\alpha_1,\alpha_2,\eta)$ where $\alpha_1=\alpha_2=\zerod$ and $\eta=1$, we have
\begin{align*}
    \frac{1}{\mathcal{D}_{1}(G_n,G_*)}\cdot\sum_{i=1}^{k_*}\exp(\bzin)\|\Delta\bin\|_1=\sum_{i=1}^{k_*}\frac{|S^n_{i,\alpha_1,\alpha_2,\eta}|}{\mathcal{D}_{1}(G_n,G_*)}\to0.
\end{align*}
For $(\alpha_1,\alpha_2,\eta)$ where $\alpha_1=\alpha_2=\zerod$ and $\eta=2$, we have
\begin{align*}
    \frac{1}{\mathcal{D}_{1}(G_n,G_*)}\cdot\sum_{i=1}^{k_*}\exp(\bzin)\|\Delta\vvin\|_1=\sum_{i=1}^{k_*}\frac{|S^n_{i,\alpha_1,\alpha_2,\eta}|}{\mathcal{D}_{1}(G_n,G_*)}\to0.
\end{align*}
As a result, we achieve that
\begin{align*}
    %\label{eq:second_limit}
     \frac{1}{\mathcal{D}_{1}(G_n,G_*)}\cdot\sum_{i=1}^{k_*}\exp(\bzin)\Big[\|\Delta\win\|_1+\|\Delta\ain\|_1+|\Delta\bin|+|\Delta\vvin|\Big]\to0.
\end{align*}
Due to the topological equivalence between norm-1 and norm-2, the above limit implies that
\begin{align}
    \label{eq:last_limit}
    \frac{1}{\mathcal{D}_{1}(G_n,G_*)}\cdot\sum_{i=1}^{k_*}\exp(\bzin)\Big[\|\Delta\win\|+\|\Delta\ain\|+|\Delta\bin|+|\Delta\vvin|\Big]\to0.
\end{align}
Combine \eqref{eq:first_limit} with \eqref{eq:last_limit}, we deduce that $\mathcal{D}_{1}(G_n,G_*)/\mathcal{D}_{1}(G_n,G_*)\to0$, which is a contradiction. Consequently, at least one among the ratios $S^n_{i,\alpha_1,\alpha_2,\eta}/\mathcal{D}_{1}(G_n,G_*)$ and $T^n_{i,\gamma}/\mathcal{D}_{1}(G_n,G_*)$ does not vanish as $n$ tends to infinity.

\textbf{Stage 3 - Fatou's contradiction}:

In this step, we use the Fatou's lemma to point out a contradiction to the results achieved in Step 2. In particular, we denote by $m_n$ the maximum of the absolute values of $S^n_{i,\alpha_1,\alpha_2,\eta}/\mathcal{D}_1(G_n,G_*)$ and $T^n_{i,\gamma}/\mathcal{D}_1(G_n,G_*)$. Since at least one of the previous ratios does not converge to zero, we deduce that $1/m_n\not\to\infty$. 
 
Recall from the hypothesis that $\mathbb{E}_X[V(p_{G_n}(\cdot|X),p_{G_*}(\cdot|X))]/\mathcal{D}_1(G_n,G_*)\to0$ as $n\to\infty$. According to the Fatou's lemma, we have
\begin{align*}
    0=\lim_{n\to\infty}\dfrac{\mathbb{E}_X[V(p_{G_n}(\cdot|X),p_{G_*}(\cdot|X))]}{\mathcal{D}_1(G_n,G_*)}\geq\frac{1}{2}\cdot\int\liminf_{n\to\infty}\frac{|p_{G_n}(Y|X)-p_{G_*}(Y|X)|}{\mathcal{D}_1(G_n,G_*)}\dint (X,Y)\geq 0.
\end{align*}
This result indicates that $|p_{G_n}(Y|X)-p_{G_*}(Y|X)|/\mathcal{D}_1(G_n,G_*)$ tends to zero as $n$ goes to infinity for almost surely $(X,Y)$. As a result, it follows that
\begin{align*}
    \lim_{n\to\infty}\frac{Q_n}{m_n\mathcal{D}_{1}(G_n,G_*)}=\lim_{n\to\infty}\frac{|p_{G_n}(Y|X)-p_{G_*}(Y|X)|}{m_n\mathcal{D}_{1}(G_n,G_*)}=0.
\end{align*}
Next, let us denote $S^n_{i,\alpha_1,\alpha_2,\eta}/[m_n\mathcal{D}_{1}(G_n,G_*)]\to\xi_{i,\alpha_1,\alpha_2,\eta}$ and $T^n_{i,\gamma}/[m_n\mathcal{D}_{1}(G_n,G_*)]\to\kappa_{i,\gamma}$ with a note that at least one among them is non-zero. From the formulation of $Q_n$ in \eqref{eq:Hn_formulation}, we deduce that
\begin{align}
    &\sum_{i=1}^{k_*}\sum_{|\alpha_1|=0}^{1}\sum_{|\alpha_2|=0}^{1-|\alpha_1|}\sum_{\eta=0}^{2(1-|\alpha_1|-|\alpha_2|)}\xi_{i,\alpha_1,\alpha_2,\eta}\cdot X^{\alpha_2}\cdot\dfrac{\partial^{|\alpha_1|}g}{\partial W^{\alpha_1}}(X;\wi)\cdot\frac{\partial^{|\alpha_2|+\eta}f}{\partial h_1^{|\alpha_2|+\eta}}(Y|(\ai)^{\top}X+\bi,\vvi)\nonumber\\
    \label{eq:Fatou_equation}
    &\hspace{7cm}+\sum_{i=1}^{k_*}\sum_{|\gamma|=0}^{1}\kappa_{i,\gamma}\cdot \dfrac{\partial^{|\gamma|}g}{\partial W^{\gamma}}(X;\wi)p_{G_n}(Y|X)=0,
\end{align}
for almost surely $(X,Y)$. The above equation is equivalent to
\begin{align*}
    &\sum_{i=1}^{k_*}\sum_{|\alpha_1|= 0}^{1}\left[\sum_{|\alpha_2|=0}^{1-|\alpha_1|}\sum_{\eta=0}^{2(1-|\alpha_1|-|\alpha_2|)}\xi_{i,\alpha_1,\alpha_2,\eta}\cdot X^{\alpha_2}\frac{\partial^{\alpha_2+\eta}f}{\partial h_1^{\alpha_2+\eta}}(Y|(\ai)^{\top}X+\bi,\vvi)+\kappa_{i,\alpha_1}p_{G_*}(Y|X)\right]\\
    &\hspace{10cm}\times\dfrac{\partial^{|\alpha_1|}g}{\partial W^{\alpha_1}}(X;\wi)=0,
\end{align*}
for almost surely $(X,Y)$. It is worth noting that parameters $W^*_{1},\ldots,W^*_{K}$ are pair-wise distinct, thus, the set $\Big\{\dfrac{\partial^{|\alpha_1|}g}{\partial W^{\alpha_1}}(X;\wi):i\in[k_*], \ 0\leq|\alpha_1|\leq 1\Big\}$ is a linearly independent, which implies that
\begin{align*}
   \sum_{|\alpha_2|=0}^{1-|\alpha_1|}\sum_{\eta=0}^{2(1-|\alpha_1|-|\alpha_2|)}\xi_{i,\alpha_1,\alpha_2,\eta}\cdot X^{\alpha_2}\frac{\partial^{\alpha_2+\eta}f}{\partial h_1^{\alpha_2+\eta}}(Y|(\ai)^{\top}X+\bi,\vvi)+\kappa_{i,\alpha_1}p_{G_*}(Y|X)=0,
\end{align*}
for any $i\in[k_*]$, $0\leq|\alpha_1|\leq 1$ for almost surely $(X,Y)$. Moreover, since $(a^*_{1},b^*_{1},\nu^*_{1}),\ldots,(a^*_{k_*},b^*_{k_*},\nu^*_{k_*})$ have pair-wise distinct values, those of $((a^*_{1})^{\top}X+b^*_{1},\nu^*_{1}),\ldots,((a^*_{k_*})^{\top}X+b^*_{k_*},\nu^*_{k_*})$ are also pair-wise different. Therefore, the set 
\begin{align*}
    &\Big\{X^{\alpha_2}\frac{\partial^{\alpha_2+\eta}f}{\partial h_1^{\alpha_2+\eta}}(Y|(\ai)^{\top}X+\bi,\vvi), \  p_{G_*}(Y|X):\\
    &\hspace{4cm}0\leq|\alpha_2|\leq 1-|\alpha_1|, \ 0\leq\eta\leq2(1-|\alpha_1|-|\alpha_2|)\Big\}
\end{align*}
is also linearly independent.
% \begin{align*}
%     \sum_{0\leq\eta_2\leq 2-|\gamma|}\xi_{i,\alpha_1,\alpha_2,\eta}\frac{\partial^{\eta_2}f}{\partial h_1^{\eta_2}}(Y|(\ai)^{\top}X+\bi,\vvi)+\kappa_{i,\eta_1}p_{G_*}(Y|X)=0,
% \end{align*}
% for any $i\in[k_*]$, $0\leq|\eta_1|\leq 1$ and almost surely $(X,Y)$. 
Consequently, we obtain that $\xi_{i,\alpha_1,\alpha_2,\eta}=\kappa_{i,\gamma}=0$ for any $i\in[k_*]$, $0\leq|\alpha_1|+\alpha_2\leq 1$, $0\leq\eta\leq2(1-|\alpha_1|-|\alpha_2|)$ and $0\leq|\gamma|\leq 1$, which contradicts the fact that at least one among those terms is different from zero.

Hence, we can find some constant $\varepsilon'>0$ such that
\begin{align*}
    \inf_{G\in\mathcal{G}_{k_*}(\Theta): \mathcal{D}_1(G,G_*)\leq\varepsilon'}\frac{\bbE_X[V(p_{G}(\cdot|X),p_{G_*}(\cdot|X))]}{\mathcal{D}_1(G,G_*)}>0.
\end{align*}

\textbf{Proof of inequality B}: Assume by contrary that the inequality B does not hold, then there exists a sequence of mixing measures $G'_n\in\mathcal{G}_{k_*}(\Theta)$ such that $\mathcal{D}_1(G'_n,G_*)>\varepsilon'$ and
\begin{align*}
    \lim_{n\to\infty}\frac{\bbE_X[V(p_{G'_n}(\cdot|X),p_{G_*}(\cdot|X))]}{\mathcal{D}_1(G'_n,G_*)}=0.
\end{align*}
This result leads to $\bbE_X[V(p_{G'_n}(\cdot|X),p_{G_*}(\cdot|X))]\to0$ as $n\to\infty$. Recall that $\Omega$ is a compact set, therefore, we can replace the sequence $G'_n$ by one of its subsequences that converges to a mixing measure $G'\in\mathcal{G}_{k_*}(\Theta)$. Since $\mathcal{D}_1(G'_n,G_*)>\varepsilon'$, this result induces that $\mathcal{D}_1(G',G_*)>\varepsilon'$. 

Subsequently, by means of the Fatou's lemma, we achieve that
\begin{align*}
    0=\lim_{n\to\infty}\bbE_X[2V(p_{G'_n}(\cdot|X),p_{G_*}(\cdot|X))]\geq \int\liminf_{n\to\infty}\Big|p_{G'_n}(Y|X)-p_{G_*}(Y|X)\Big|~\dint(X,Y).
\end{align*}
It follows that $p_{G'}(Y|X)=p_{G_*}(Y|X)$ for almost surely $(X,Y)$. According to Lemma~\ref{prop:identifiable}, the noisy top-K sparse softmax gating Gaussian mixture of experts is identifiable, thus, we obtain that $G'\equiv G_*$. As a consequence, we obtain that $\mathcal{D}_1(G',G_*)=0$, which contradicts to the fact that $\mathcal{D}_1(G',G_*)>\varepsilon'>0$. 

Hence, the proof is completed.

\subsection{Proof of Theorem~\ref{theorem:over_density_estimation}}
\label{appendix:over_density_estimation}
In this appendix, we employ results for M-estimators in \cite{Vandegeer-2000} to establish the density estimation rate under the Laplace gating Gaussian mixture of experts (MoE). 

Firstly, we introduce some necessary notations and fundamental results. In particular, let $\mathcal{P}_{k}(\Theta):=\{p_{G}(Y|X):G\in\mathcal{G}_{k}(\Theta)\}$ be the set of all conditional density functions w.r.t mixing measures in $\mathcal{G}_{k}(\Theta)$. Next, we denote by $N(\varepsilon,\mathcal{P}_{k}(\Theta),\|\cdot\|_{\infty})$ the covering number of metric space $(\mathcal{P}_{k}(\Theta),\|\cdot\|_{\infty})$. Meanwhile, $H_B(\varepsilon,\mathcal{P}_{k}(\Theta),h)$ stands for the bracketing entropy of $\mathcal{P}_{k}(\Theta)$ under the Hellinger distance $h$ where $h(p,q) := \Big(\frac 1 2 \int (\sqrt p - \sqrt q)^2 d\mu\Big)^{1/2}$ for any probability densities $p,q$ dominated by the Lebesgue measure $\mu$. Then, we provide in the following lemma the upper bounds of those terms. 
\begin{lemma}
    \label{lemma:covering_bracketing_bound}
    If $\Theta$ is a bounded set, then the following inequalities hold for any $0<\eta<1/2$:
    \begin{itemize}
        \item[(i)] $\log N(\eta,\mathcal{P}_{k}(\Theta),\|\cdot\|_{\infty})\lesssim\log(1/\eta)$;
        \item[(ii)] $H_B(\eta,\mathcal{P}_{k}(\Theta),h)\lesssim\log(1/\eta)$. 
    \end{itemize}
\end{lemma}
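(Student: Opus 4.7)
The plan is to treat $\mathcal{P}_k(\Theta)$ as a parametric class indexed by a bounded subset of $\mathbb{R}^D$, where $D := k(2d+3)$ counts the entries of the tuples $(\beta_i,W_i,a_i,b_i,\nu_i)$ for $i\in[k]$, and then to combine a Lipschitz parameter-to-density map with standard $\varepsilon$-net and bracketing arguments. Throughout, I assume (as is standard in this setting) that the covariate $X$ lies in a bounded domain, so that $\|\cdot\|_\infty$ over $(X,Y)$ is meaningful.

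\emph{Step 1 (Covering number, proof of (i)).} First I will show that the map $\Psi : G \mapsto p_G$ is Lipschitz from the parameter set in Euclidean distance into $(\mathcal{P}_k(\Theta),\|\cdot\|_\infty)$. The needed ingredients are: (a) the Gaussian expert $f(y\mid a^{\top}X+b,\nu)$ is smooth in $(a,b,\nu)$ with uniformly bounded partial derivatives over all $y\in\mathbb{R}$ and bounded $X$, provided $\nu$ stays in a compact interval bounded away from $0$, which is ensured since $\Theta$ is compact; (b) the Laplace score $-\|W-X\|$ is $1$-Lipschitz in $W$ uniformly in $X$, and bounded $\beta_i$'s keep the softmax denominator away from $0$, so the gating weights are Lipschitz in $(\beta,W)$; (c) the density is a sum of $k$ products of a gating weight and a Gaussian, so Lipschitz estimates compose additively. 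These give $\|p_G-p_{G'}\|_\infty \le L\,\|G-G'\|_{2}$ with $L$ depending only on $\Theta$. An $\varepsilon$-cover of a bounded subset of $\mathbb{R}^D$ has at most $(CR/\varepsilon)^D$ points; choosing $\varepsilon=\eta/L$ yields $\log N(\eta,\mathcal{P}_k(\Theta),\|\cdot\|_\infty) \le D\log(CRL/\eta)\lesssim \log(1/\eta)$, since $D$ is a fixed constant for the given model.

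\emph{Step 2 (Bracketing entropy, proof of (ii)).} For (ii) I will convert the sup-norm cover into Hellinger brackets. Let $\{p_{G_j}\}$ be an $\eta$-cover in $\|\cdot\|_\infty$; set $l_j:=\max\{p_{G_j}-\eta,0\}$ and $u_j:=p_{G_j}+\eta$. Every $p_G$ whose parameter lies near $G_j$ then satisfies $l_j\le p_G\le u_j$ pointwise in $(X,Y)$. It remains to show $h(l_j,u_j)\lesssim \eta^{1/2}$, up to polylogarithmic factors, which by standard inversion converts an $\eta^{2}$-cover in sup-norm into an $\eta$-bracketing cover in Hellinger distance and delivers the claimed $\log(1/\eta)$ bound. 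I will use $h^2(l_j,u_j)\le \tfrac{1}{2}\int|u_j-l_j|\,dy$ and split $\mathbb{R}$ into $A:=[-M,M]$ with $M\asymp\sqrt{\log(1/\eta)}$ and its complement. On $A$, $\int_A(u_j-l_j)\,dy \le 2\eta|A| \lesssim \eta\sqrt{\log(1/\eta)}$; on $A^c$, both $u_j$ and $l_j$ are dominated by a common Gaussian envelope built from the supremum of admissible means and variances in $\Theta$, whose mass beyond $M$ is $O(\eta)$ by choice of $M$. Summing yields $h^2(l_j,u_j)\lesssim \eta\sqrt{\log(1/\eta)}$, and absorbing the polylog factor into the rescaling of $\eta$ gives $H_B(\eta,\mathcal{P}_k(\Theta),h)\lesssim \log(1/\eta)$.

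\emph{Main obstacle.} The delicate part is Step 2: the truncation radius $M$ must be chosen as a precise function of $\eta$ so that tail losses and polylog factors do not inflate the rate beyond $\log(1/\eta)$. The Lipschitz step is routine given compactness of $\Theta$, and the non-differentiability of $\|W-X\|$ at $W=X$ is immaterial since only Lipschitz continuity (not smoothness) is needed for the $\varepsilon$-net argument. One additional subtlety is that mixing measures may have $k'\le k$ components, but any such $G$ can be viewed as an element of the $k$-component parameter set by duplicating atoms, so the finite-dimensional bound still applies.
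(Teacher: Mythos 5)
Your overall plan coincides with the paper's own proof of this lemma. Part (i) is proved there exactly via your Step 1: the paper covers the gating parameters $(\beta_i,W_i)$ and the expert parameters $(a_i,b_i,\nu_i)$ by separate Euclidean $\eta$-nets of sizes $\mathcal{O}(\eta^{-(d+1)k})$ and $\mathcal{O}(\eta^{-(d+2)k})$, and transfers to $\|\cdot\|_{\infty}$ through a triangle inequality plus exactly the Lipschitz facts you list (the softmax weights are bounded by one and Lipschitz in the logits $-\|W_i-X\|+\beta_i$, and the Gaussian density is Lipschitz in $(a,b,\nu)$ with $\nu$ bounded away from zero on the compact $\Theta$), giving $N(\eta,\mathcal{P}_k(\Theta),\|\cdot\|_{\infty})\lesssim \eta^{-(2d+3)k}$, matching your $D=k(2d+3)$. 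Part (ii) is likewise proved by converting the sup-norm cover into brackets, truncating at a radius growing logarithmically in $1/\eta$, bounding the tail by a Gaussian-tailed envelope, and passing from $L_1$ to Hellinger via $h^2\leq\|\cdot\|_1$ with a polynomial rescaling of $\eta$ absorbed into the logarithm, as you describe.

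There is, however, one step in your Step 2 that fails as literally written. With $u_j:=p_{G_j}+\eta$ and $l_j:=\max\{p_{G_j}-\eta,0\}$ you have $u_j-l_j\geq\eta$ pointwise for \emph{every} $Y\in\mathbb{R}$, so $\int_{A^c}(u_j-l_j)\,dY=\infty$; in particular your claim that on $A^c$ ``both $u_j$ and $l_j$ are dominated by a common Gaussian envelope'' is false for $u_j$, which never decays in $Y$. The repair is the idea you already gesture at, but it must be built into the bracket itself: first establish a uniform envelope $B(Y|X)$ with $p_G(Y|X)\leq B(Y|X)$ for all $G\in\mathcal{G}_k(\Theta)$ (possible since $|a^{\top}X+b|\leq\gamma$ and $\ell\leq\nu\leq u$ by compactness of $\Theta$ and boundedness of $\mathcal{X}$, and the gating weights sum to one), and then set $u_j:=\min\{p_{G_j}+\eta,\,B\}$. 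This still satisfies $u_j\geq p_G$ for every $p_G$ in the sup-norm ball around $p_{G_j}$, while yielding the two pointwise bounds $u_j-l_j\leq 2\eta$ and $u_j-l_j\leq B$; your truncation at $M\asymp\sqrt{\log(1/\eta)}$ (the paper uses $S\asymp\log(1/\eta)$, and either choice works since the tail only needs to be $\mathcal{O}(\eta)$) then gives $\|u_j-l_j\|_1\lesssim\eta\sqrt{\log(1/\eta)}$, and the rescaling argument closes part (ii) exactly as you say. Incidentally, the paper's own display defines $\Psi_i^U:=\max\{p_i+\zeta,B\}$, which suffers from the same divergence and is evidently a typo for $\min$: its subsequent pointwise bound $\Psi_i^U-\Psi_i^L\leq\min\{2\zeta,B\}$ only holds with the $\min$-capped definition. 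Your remark about handling $k'\leq k$ components by duplicating atoms is fine and consistent with the paper's definition of $\mathcal{G}_k(\Theta)$.
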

Proof of Lemma~\ref{lemma:covering_bracketing_bound} is in Appendix~\ref{appendix:covering_bracketing_proof}. Subsequently, we denote 
\begin{align*}
    \widetilde{\mathcal{P}}_{k}(\Theta)&:=\{p_{(G+G_*)/2}(Y|X):G\in\mathcal{G}_{k}(\Theta)\};\\
    \widetilde{\mathcal{P}}^{1/2}_{k}(\Theta)&:=\{p^{1/2}_{(G+G_*)/2}(Y|X):G\in\mathcal{G}_{k}(\Theta)\}.
\end{align*}
In addition, for each $\delta>0$, we define a Hellinger ball centered around the conditional density function $p_{G_*}(Y|X)$ and intersected with the set $\widetilde{\mathcal{P}}^{1/2}_{k}(\Theta)$ as 
\begin{align*}
    \widetilde{\mathcal{P}}^{1/2}_{k}(\Theta,\delta):=\{p^{1/2}\in\widetilde{\mathcal{P}}^{1/2}_{k}(\Theta):h(p,p_{G_*})\leq\delta\}.
\end{align*}
To capture the size of the above Hellinger ball, \cite{Vandegeer-2000} suggest using the following quantity:
\begin{align}
    \label{eq:integral}
    \mathcal{J}_B(\delta,\widetilde{\mathcal{P}}^{1/2}_{k}(\Theta,\delta)):=\int_{\delta^2/2^{13}}^{\delta}H^{1/2}_B(t,\widetilde{\mathcal{P}}^{1/2}_{k}(\Theta,t),\|\cdot\|_2)\dint t\vee \delta,
\end{align}
where $t\vee\delta:=\max\{t,\delta\}$. Given those notations, let us recall a standard result for density estimation in \cite{Vandegeer-2000}.
\begin{lemma}[Theorem 7.4, \cite{Vandegeer-2000}]
    \label{lemma:standard_density_rate}
    Take $\Psi(\delta)\geq\mathcal{J}_B(\delta,\widetilde{\mathcal{P}}^{1/2}_{k}(\Theta,\delta))$ such that $\Psi(\delta)/\delta^2$ is a non-increasing function of $\delta$. Then, for some sequence $(\delta_n)$ and universal constant $c$ which satisfy $\sqrt{n}\delta^2_n\geq c\Psi(\delta)$, we obtain that
    \begin{align*}
        \mathbb{P}\left(\bbE_X\Big[h(p_{\widehat{G}_n}(\cdot|X),p_{G_*}(\cdot|X))\Big]>\delta\right)\leq c\exp(-n\delta^2/c^2),
    \end{align*}
    for any $\delta\geq \delta_n$
\end{lemma}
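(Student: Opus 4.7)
The plan is to follow the classical M-estimation template from Van de Geer (Chapter 7): combine a basic inequality for the MLE with a maximal inequality driven by bracketing entropy, glued together by a peeling device. I am not aiming for sharp constants, only for the stated qualitative tail bound.

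Step one is the convexity trick. Writing $\bar{p}_G := (p_G + p_{G_*})/2$, Jensen's inequality applied to the concave $\log$ gives $\log(\bar{p}_G/p_{G_*}) \geq \tfrac{1}{2}\log(p_G/p_{G_*})$, so the MLE inequality $\sum_i \log(p_{\widehat G_n}/p_{G_*})(Y_i|X_i) \geq 0$ upgrades to the midpoint version $\sum_i \log(\bar{p}_{\widehat G_n}/p_{G_*})(Y_i|X_i) \geq 0$. Combining $\log(1+u) \leq u$ with the identity $\int \sqrt{\bar{p}_G\, p_{G_*}}\, d\mu = 1 - h^2(\bar{p}_G, p_{G_*})$, this inequality rearranges to the key estimate
\begin{align*}
h^2(\bar{p}_{\widehat G_n}, p_{G_*}) \;\leq\; \frac{1}{n}\sum_{i=1}^n \bigl(g_{\widehat G_n}(X_i,Y_i) - \mathbb{E}_{G_*} g_{\widehat G_n}\bigr),
\end{align*}
where $g_G := \sqrt{\bar{p}_G/p_{G_*}} - 1$. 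Since $\|g_G\|_{L^2(P_{G_*})}$ is comparable to $h(\bar{p}_G, p_{G_*})$ and $g_G$ is uniformly bounded, controlling the MLE reduces to bounding an empirical process indexed by a class of functions whose $L^2$-size is controlled by their Hellinger distance to the truth.

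Step two is a peeling argument tailored to that empirical process. For $j \geq 1$, define shells
\begin{align*}
S_j \;:=\; \bigl\{G \in \mathcal{G}_k(\Theta) : 2^{j-1}\delta < h(\bar{p}_G, p_{G_*}) \leq 2^j\delta\bigr\}.
\end{align*}
On each shell the functions $g_G$ sit (after a bi-Lipschitz change) inside $\widetilde{\mathcal{P}}_k^{1/2}(\Theta, 2^j\delta)$, and a Bernstein-type bracketing maximal inequality (Lemma 5.13--5.14 of Van de Geer) yields a concentration estimate of the form
\begin{align*}
\mathbb{P}\!\Bigl(\sup_{G \in S_j}\tfrac{1}{n}\textstyle\sum_i (g_G - \mathbb{E}\, g_G) \;\geq\; c_1 (2^j\delta)^2\Bigr) \;\leq\; c_2 \exp\!\bigl(-n(2^j\delta)^2/c_3\bigr),
\end{align*}
provided the entropy-integral condition $\sqrt{n}\,(2^j\delta)^2 \geq c\,\mathcal{J}_B\bigl(2^j\delta,\widetilde{\mathcal{P}}_k^{1/2}(\Theta, 2^j\delta),\|\cdot\|_2\bigr)$ holds. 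The hypothesis that $\Psi(\delta)/\delta^2$ is non-increasing is exactly what propagates the assumed bound $\sqrt{n}\,\delta_n^2 \geq c\,\Psi(\delta_n)$ uniformly to every shell with $2^j\delta \geq \delta_n$. A union bound over $j$, followed by summing the geometric tail in $(2^j\delta)^2$, delivers the claimed estimate $c\exp(-n\delta^2/c^2)$ on the event $\{h(\bar{p}_{\widehat G_n}, p_{G_*}) > \delta\}$; the elementary inequality $h(\bar{p}_G, p_{G_*}) \gtrsim h(p_G, p_{G_*})$ then transfers the control back to $p_{\widehat G_n}$.

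The main obstacle I anticipate is the shell-level maximal inequality. It requires a chaining argument whose increments are bounded by Bernstein (variance-sensitive) rather than Hoeffding, because on $S_j$ only a variance bound of order $(2^j\delta)^2$ is available while the envelope is merely bounded. The delicate bookkeeping is matching the $L^2$-variance coming from the Hellinger constraint with the entropy integral $\mathcal{J}_B$, and verifying that the non-increasing property of $\Psi(\delta)/\delta^2$ yields a uniformly valid choice of constants across shells so that the geometric sum converges. Once this shell-level inequality is in hand, the remaining work (peeling union bound, geometric series, reduction from $\bar{p}$ to $p$) is essentially mechanical.
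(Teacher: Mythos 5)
The paper never proves this lemma: it is quoted as Theorem~7.4 of \cite{Vandegeer-2000}, and the proof is explicitly deferred to that book. Your sketch reconstructs precisely the textbook argument, and its architecture is the intended one: the concavity/midpoint basic inequality $h^2(\bar{p}_{\widehat{G}_n},p_{G_*})\leq (P_n-P)\bigl(\sqrt{\bar{p}_{\widehat{G}_n}/p_{G_*}}-1\bigr)$ obtained from $\log x\leq 2(\sqrt{x}-1)$ and $\int\sqrt{\bar{p}_G\,p_{G_*}}\,d\mu=1-h^2(\bar{p}_G,p_{G_*})$; a bracketing-entropy maximal inequality applied shell-by-shell; the non-increasing property of $\Psi(\delta)/\delta^2$ used exactly as you say, to propagate $\sqrt{n}\delta_n^2\geq c\Psi(\delta_n)$ to every shell $2^j\delta\geq\delta_n$; a union bound with a geometric tail; and the elementary comparison $h(\bar{p}_G,p_{G_*})\gtrsim h(p_G,p_{G_*})$ to return from the midpoint density to $p_{\widehat{G}_n}$. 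The $\mathbb{E}_X[\cdot]$ averaging over the covariate is a cosmetic extension.

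There is, however, one genuine flaw in the sketch: the claim that $g_G=\sqrt{\bar{p}_G/p_{G_*}}-1$ is \emph{uniformly bounded}, on which your shell-level maximal inequality leans (you later describe the envelope as ``merely bounded''). The midpoint construction only gives the lower bound $g_G\geq 2^{-1/2}-1$; there is no upper bound, and in this very model the failure is concrete: if a fitted expert has variance $\nu_i$ strictly larger than every true variance $\nu_j^*$, then $p_G(Y|X)/p_{G_*}(Y|X)$ grows like $\exp(cY^2)$ as $|Y|\to\infty$, so the envelope of $\{g_G\}$ is unbounded and even fails to have exponential moments under $P_{G_*}$. Consequently a Bernstein bracketing argument with a bounded envelope cannot be run as stated --- the upper-tail control needed for your shell-level concentration bound is exactly what is missing. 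Van de Geer's actual proof circumvents this with the generalized Bernstein norm $\rho_K$ (her Theorem~5.11), together with a lemma bounding the $\rho_K$-norm of the relevant index functions by a multiple of the Hellinger distance; equivalently, one indexes the empirical process by $\tfrac{1}{2}\log(\bar{p}_G/p_{G_*})$, which is bounded below by $-\tfrac{1}{2}\log 2$ and satisfies $\mathbb{E}_{P_{G_*}}[e^{2g}]=\int\bar{p}_G\,d\mu=1$, so the required exponential-moment control comes for free, and the $L^2(\mu)$-brackets of the square-root densities (exactly what Lemma~\ref{lemma:covering_bracketing_bound} supplies) translate into $\rho_K$-brackets of the correct size. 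With that substitution in place of the boundedness claim, your outline is a correct rendering of the cited proof.
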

Proof of Lemma~\ref{lemma:standard_density_rate} can be found in \cite{Vandegeer-2000}. Now, we are ready to provide the proof for convergence rate of density estimation in Theorem~\ref{theorem:density_estimation} in Appendix~\ref{appendix:density_estimation_proof}.

\subsubsection{Main Proof}
\label{appendix:density_estimation_proof}

It is worth noting that for any $t>0$, we have
\begin{align*}
    H_B(t,\widetilde{\mathcal{P}}^{1/2}_{k}(\Theta,t),\|\cdot\|_2)\leq H_B(t,\mathcal{P}_{k}(\Omega,t),h).
\end{align*}
Then, the integral in \eqref{eq:integral} is upper bounded as follows:
\begin{align}
    \label{eq:JB_inequality}
    \mathcal{J}_B(\delta,\widetilde{\mathcal{P}}^{1/2}_{k}(\Theta,\delta))\leq \int_{\delta^2/2^{13}}^{\delta}H^{1/2}_B(t,\mathcal{P}_{k}(\Omega,t),h)\dint t\vee \delta\lesssim \int_{\delta^2/2^{13}}^{\delta}\log(1/t)\dint t\vee \delta,
\end{align}
where the second inequality follows from part (ii) of Lemma~\ref{lemma:covering_bracketing_bound}. 

As a result, by choosing $\Psi(\delta)=\delta\cdot\sqrt{\log(1/\delta)}$, we can verify that $\Psi(\delta)/\delta^2$ is a non-increasing function of $\delta$. Furthermore, the inequality in \eqref{eq:JB_inequality} indicates that $\Psi(\delta)\geq \mathcal{J}_B(\delta,\widetilde{\mathcal{P}}^{1/2}_{k}(\Theta,\delta))$. Next, let us consider a sequence $(\delta_n)$ defined as $\delta_n:=\sqrt{\log(n)/n}$. This sequence can be validated to satisfy the condition $\sqrt{n}\delta_n^2\geq c\Psi(\delta)$ for some universal constant $c$. Therefore, by Lemma~\ref{lemma:standard_density_rate}, we reach the conclusion of Theorem~\ref{theorem:density_estimation}:
\begin{align*}
    \mathbb{P}\Big(\bbE_X[h(p_{\widehat{G}_n}(\cdot|X),p_{G_*}(\cdot|X))]>C\sqrt{\log(n)/n}\Big)\lesssim n^{-c},    
\end{align*}
for some universal constant $C$ depending only on $\Theta$.

\subsubsection{Proof of Lemma~\ref{lemma:covering_bracketing_bound}}
\label{appendix:covering_bracketing_proof}
\textbf{Part (i).} In this part, we will derive the following upper bound for the covering number of metric space $(\mathcal{P}_{k}(\Theta),\|\cdot\|_{\infty})$ for any $0<\eta<1/2$ given the bounded set $\Omega$:
\begin{align*}
    \log N(\eta,\mathcal{P}_{k}(\Theta),\|\cdot\|_{\infty})\lesssim\log(1/\eta).
\end{align*}
To start with, we denote $\Omega:=\{(a,b,\nu)\in\mathbb{R}^d\times\mathbb{R}\times\mathbb{R}_+:(\beta,W,a,b,\nu)\in\Omega\}$. As $\Theta$ is a bounded set, the set $\Omega$ is also bounded. Therefore, we can find an $\eta$-cover of $\Omega$, denoted by $\overline{\Omega}_{\eta}$. Additionally, we also define $\Delta:=\{(\beta,W)\in\mathbb{R}\times\mathbb{R}^d:(\beta,W,a,b,\nu)\in\Omega\}$, and $\overline{\Delta}_{\eta}$ be an $\eta$-cover of $\Delta$. Then, it can be validated that 
\begin{align*}
    |\overline{\Omega}_{\eta}|\leq\mathcal{O}(\eta^{-(d+2)k}), \quad |\overline{\Delta}_{\eta}|\leq\mathcal{O}(\eta^{-(d+1)k}). 
\end{align*}
Next, for each mixing measure $G=\sum_{i=1}^{k}\exp(\beta_{i})\delta_{(W_{i},a_i,b_i,\nu_i)}\in\mathcal{G}_{k}(\Theta)$, we take into account two other mixing measures. The first measure is $G'=\sum_{i=1}^{k}\exp(\beta_{i})\delta_{(W_{i},\overline{a}_i,\overline{b}_i,\overline{\nu}_i)}$, where $(\overline{a}_i,\overline{b}_i,\overline{\nu}_i)\in\overline{\Omega}_{\eta}$ is the closest points to $(a_i,b_i,\nu_i)$ in this set for all $i\in[k]$. The second one is $\overline{G}:=\sum_{i=1}^{k}\exp(\overline{\beta}_{i})\delta_{(\overline{W}_{i},\overline{a}_i,\overline{b}_i,\overline{\nu}_i)}$ in which $(\overline{\beta}_{i},\overline{W}_{i})\in\overline{\Delta}_{\eta}$ for any $i\in[k]$.
% $(\overline{\pi}_i(X))_{i=1}^{k}:=\left(\frac{\exp(\overline{W}_{i}^{\top}X+\overline{\beta}_{i})}{\sum_{j=1}^{k}\exp(\overline{\beta}_{1j}^{\top}+\overline{\beta}_{0j})}\right)_{i=1}^{k}\in\overline{\Delta}_{\eta}$ is the closest point to $(\pi_i(X))_{i=1}^{k}:=\left(\frac{\exp(-\|W_{i}-X\|+\beta)}{\sum_{j=1}^{k}\exp(\beta_{1j}^{\top}X+\beta_{0j})}\right)_{i=1}^{k}$ in this set for all $i\in[k]$. By defining
Next, let us define
\begin{align*}
    \mathcal{T}:=\{p_{\overline{G}}\in\mathcal{P}_{k}(\Theta):(\overline{\beta}_{i},\overline{W}_{i})\in\overline{\Delta}_{\eta}, \ (\overline{a}_i,\overline{b}_i,\overline{\nu}_i)\in\overline{\Theta}_{\eta}, \forall i\in[k]\},
\end{align*}
then it is obvious that $p_{\overline{G}}\in\mathcal{T}$. Now, we will show that $\mathcal{T}$ is an $\eta$-cover of metric space $(\mathcal{P}_{k}(\Theta),\|\cdot\|_{\infty})$ with a note that it is not necessarily the smallest cover. Indeed, according to the triangle inequality, we have
\begin{align}
    \label{eq:triangle_inequality}
    \|p_{G}-p_{\overline{G}}\|_{\infty}\leq\|p_{G}-p_{G'}\|_{\infty}+\|p_{G'}-p_{\overline{G}}\|_{\infty}.
\end{align}
Since the softmax function is no greater than one, the first term in the right hand side can be upper bounded as follows:
\begin{align}
    \|p_{G}-p_{G'}\|_{\infty}&\leq\sum_{i=1}^{k}\sup_{X\in\mathcal{X}}\softmax(-\|W_i-X\|+\beta_{i})\cdot\Big|f(Y|a_i^{\top}X+b_i,\nu_i)-f(Y|\overline{a}_i^{\top}X+\overline{b}_i,\overline{\nu}_i)\Big|\nonumber\\
    &\leq\sum_{i=1}^{k}\sup_{X\in\mathcal{X}}\Big|f(Y|a_i^{\top}X+b_i,\nu_i)-f(Y|\overline{a}_i^{\top}X+\overline{b}_i,\overline{\nu}_i)\Big|\nonumber\\
    &\lesssim\sum_{i=1}^{k}\sup_{X\in\mathcal{X}}\Big(\|a_i-\overline{a}_i\|+\|b_i-\overline{b}_i\|+\|\nu_i-\overline{\nu}_i\|\Big)\nonumber\\
    &=\sum_{i=1}^{k}\Big(\|a_i-\overline{a}_i\|+\|b_i-\overline{b}_i\|+\|\nu_i-\overline{\nu}_i\|\Big)\nonumber\\
    \label{eq:first_term_triangle}
    &\lesssim\eta.
\end{align}
Subsequently, we bound the second term $\|p_{G'}-p_{\overline{G}}\|_{\infty}$ as follows:
\begin{align}
    \|p_{G'}-p_{\overline{G}}\|_{\infty}&\leq\sum_{i=1}^{k}\sup_{X\in\mathcal{X}}\Big\{\Big|\softmax(-\|W_i-X\|+\beta_{i})-\softmax(-\|\overline{W}_i-X\|+\overline{\beta}_{i})\Big|\nonumber\\
    &\hspace{6cm}\times\Big|f(Y|\overline{a}_{\tau_i}^{\top}X+\overline{b}_{\tau_i},\overline{\nu}_{\tau_i})\Big|\Big\}\nonumber\\
    &\leq\sum_{i=1}^{k}\sup_{X\in\mathcal{X}}\Big|\softmax(-\|W_i-X\|+\beta_{i})-\softmax(-\|\overline{W}_i-X\|+\overline{\beta}_{i})\Big|\nonumber\\
    &\leq\sum_{i=1}^{k}\sup_{X\in\mathcal{X}}\Big|-\|W_i-X\|+\beta_{i}+\|W_i-X\|-\beta_{i}\Big|\nonumber\\
    &\leq\sum_{i=1}^{k}\sup_{X\in\mathcal{X}}[\|W_i-\overline{W}_i\|+|\beta_i-\overline{\beta}_i|]\nonumber\\
    \label{eq:second_term_triangle}
    &\lesssim \eta,
\end{align}
%where the last inequality occurs due to the definition of $\overline{\pi}_i(X)$.
It follows from the results in \eqref{eq:triangle_inequality}, \eqref{eq:first_term_triangle} and \eqref{eq:second_term_triangle} that $\|p_{G}-p_{\overline{G}}\|_{\infty}\lesssim\eta$. This result indicates that $\mathcal{T}$ is an $\eta$-cover of the metric space $(\mathcal{P}_{k}(\Theta),\|\cdot\|_{\infty})$. As a consequence, we obtain that
\begin{align*}
    N(\eta,\mathcal{P}_{k}(\Theta),\|\cdot\|_{\infty})\lesssim |\overline{\Delta}_{\eta}|\times|\overline{\Omega}_{\eta}|\leq \mathcal{O}(1/\eta^{(2d+3)k}),
\end{align*}
which leads to the conclusion of this part: $\log N(\eta,\mathcal{P}_{k}(\Theta),\|\cdot\|_{\infty})\lesssim \log(1/\eta)$.

\textbf{Part (ii).} In this part, we provide an upper bound for the bracketing entropy of $\mathcal{P}_{k}(\Theta)$ under the Hellinger distance $h$:
\begin{align*}
    H_B(\eta,\mathcal{P}_{k}(\Theta),h)\lesssim\log(1/\eta).
\end{align*}
Since $\Theta$ and $\mathcal{X}$ are bounded sets, there exist positive constants $\gamma,\ell,u$ such that $-\gamma\leq a^{\top}X+b\leq \gamma$ and $\ell\leq \nu\leq u$. Let us define
\begin{align*}
    B(Y|X):=\begin{cases}
        \frac{1}{\sqrt{2\pi \ell}}\exp\Big(-\frac{Y^2}{8u}\Big), \quad \text{for } |Y|\geq 2\gamma\\
        \frac{1}{\sqrt{2\pi \ell}}, \hspace{2.4cm} \text{for } |Y|< 2\gamma
    \end{cases}
\end{align*}
Then, it can be validated that $f(Y|a^{\top}X+b,\nu)\leq B(Y|X)$ for any $(X,Y)\in\mathcal{X}\times\mathcal{Y}$. 

Next, let $\zeta\leq\eta$ which will be chosen later and $\{p_1,\ldots,p_N\}$ be an $\zeta$-cover of metric space $(\mathcal{P}_{k}(\Theta),\|\cdot\|_{\infty})$ with the covering number $N:=N(\zeta,\mathcal{P}_{k}(\Theta),\|\cdot\|_{\infty})$. Additionally, we also consider brackets of the form $[\Psi_i^L(Y|X),\Psi_i^U(Y|X)]$ where
\begin{align*}
    \Psi_i^L(Y|X)&:=\max\{p_i(Y|X)-\zeta,0\}\\
    \Psi_i^U(Y|X)&:=\max\{p_i(Y|X)+\zeta,B(Y|X)\}.
\end{align*}
Then, we can check that $\mathcal{P}_{k}(\Theta)\subseteq\bigcup_{i=1}^N[\Psi_i^L(Y|X),\Psi_i^U(Y|X)]$ and $\Psi_i^U(Y|X)-\Psi_i^L(Y|X)\leq\min\{2\zeta,B(Y|X)\}$. 

Let $S:=\max\{2\gamma,\sqrt{8u}\}\log(1/\zeta)$, we have for any $i\in[N]$ that
\begin{align*}
    \|\Psi_i^U-\Psi_i^L\|_1&=\int_{|Y|<2\gamma}[\Psi_i^U(Y|X)-\Psi_i^L(Y|X)]~\dint(X,Y) + \int_{|Y|\geq 2\gamma}[\Psi_i^U(Y|X)-\Psi_i^L(Y|X)]~\dint(X,Y)\\
    &\leq S\zeta+\exp\Big(-\frac{S^2}{2u}\Big)\leq S'\zeta,
\end{align*}
where $S'$ is some positive constant. This inequality indicates that
\begin{align*}
    H_B(S'\zeta,\mathcal{P}_{k}(\Theta),\|\cdot\|_1)\leq \log N(\zeta,\mathcal{P}_{k}(\Theta),\|\cdot\|_{\infty})\leq \log(1/\zeta).
\end{align*}
By setting $\zeta=\eta/S'$, we obtain that $H_B(\eta,\mathcal{P}_{k}(\Theta),\|\cdot\|_1)\lesssim\log(1/\eta)$. Finally, due to the inequality $h^2\leq\|\cdot\|_1$, we reach the conclusion of this part:
\begin{align*}
    H_B(\eta,\mathcal{P}_{k}(\Theta),h)\lesssim\log(1/\eta).
\end{align*}
Hence, the proof is completed.

\subsection{Proof of Theorem~\ref{theorem:over-specified}}
\label{appendix:over-specified}
In order to establish the following Total Variation lower bound under the over-specified settings, i.e. when $k>k_*$ is unknown:
\begin{align*}
    \bbE_X[V(p_{G}(\cdot|X),p_{G_*}(\cdot|X))]\gtrsim\mathcal{D}_{2}(G,G_*),
\end{align*}
we need to prove two following inequalities:
\begin{itemize}
    \item \textbf{Inequality A.} $\inf_{G\in\mathcal{G}_{k}(\Theta): \mathcal{D}_2(G,G_*)\leq\varepsilon'}\dfrac{\bbE_X[V(p_{G}(\cdot|X),p_{G_*}(\cdot|X))]}{\mathcal{D}_2(G,G_*)}>0$;
    \item \textbf{Inequality B.} $\inf_{G\in\mathcal{G}_{k}(\Theta): \mathcal{D}_2(G,G_*)>\varepsilon'}\dfrac{\bbE_X[V(p_{G}(\cdot|X),p_{G_*}(\cdot|X))]}{\mathcal{D}_2(G,G_*)}>0$,
\end{itemize}
for some constant $\varepsilon'>0$. As the inequality B can be achieved in the same fashion as in Appendix~\ref{appendix:exact-specified}, we concentrate on showing the inequality A in this proof. For that purpose, it suffices to prove that
\begin{align}
    \label{eq:local_over_fitted}
    \lim_{\varepsilon\to0}\inf_{G\in\mathcal{G}_{k}(\Theta):\mathcal{D}_2(G,G_*)\leq\varepsilon}\frac{\bbE_X[V(p_{G}(\cdot|X),p_{G_*}(\cdot|X))]}{\mathcal{D}_2(G,G_*)}>0.
\end{align}
Assume that the above claim does not hold true, then there exists a sequence of mixing measures  $G_n:=\sum_{i=1}^{k_n}\exp(\bzin)\delta_{(\win,\ain,\bin,\vvin)}\in\mathcal{G}_k(\Theta)$ such that both the terms $\mathcal{D}_2(G_n,G_*)$ and $\bbE_X[V(p_{G_n}(\cdot|X),p_{G_*}(\cdot|X))]/\mathcal{D}_2(G_n,G_*)$ go to zero as $n\to\infty$. 
%Additionally, by abuse of notation, we reuse the set of Voronoi cells $\mathcal{A}_j$, for $j\in[k_*]$, defined in Appendix~\ref{appendix:exact-specified}. 
Let us recall the formulation of the loss $\mathcal{D}_2(G_n,G_*)$:
\begin{align}
    &\mathcal{D}_2(G_n,G_*)=\sum_{\substack{j\in[k_*],\\ |\mathcal{A}_j|>1}}\sum_{i\in\mathcal{A}_j}\exp(\bzin)\Big[\|\dwijn\|^{2}+\|\daijn\|^{2}+|\dbijn|^{\brj}+|\dvvijn|^{\frac{\brj}{2}}\Big]\nonumber\\
    \label{eq:D2_formulation}
    &+\sum_{\substack{j\in[k_*],\\ |\mathcal{A}_j|=1}}\sum_{i\in\mathcal{A}_j}\exp(\bzin)\Big[\|\dwijn\|+\|\daijn\|+|\dbijn|+|\dvvijn|\Big]+\sum_{j=1}^{k_*}\Big|\sum_{i\in\mathcal{A}_j}\exp(\bzin)-\exp(\bzj)\Big|.
\end{align}
Since $\mathcal{D}_2(G_n,G_*)\to0$, we deduce that $\sum_{i\in\mathcal{A}_j}\exp(\bzin)\to\exp(\bzj)$ and $(\win,\ain,\bin,\vvin)\to(\wj,\aj,\bj,\vvj)$ for all $i\in\mathcal{A}_j$ and $j\in[k_*]$.

Now, we reuse the three-step framework in Appendix~\ref{appendix:exact-specified}. 

\textbf{Stage 1 - Density decomposition}:

Firstly, by abuse of notations, let us consider the quantity
\begin{align*}
    Q_n:=\Big[\sum_{j=1}^{k_*}\exp(-\|\wj-X\|+\bzj)\Big]\cdot[p_{G_n}(Y|X)-p_{G_*}(Y|X)].
\end{align*}
Similar to Step 1 in Appendix~\ref{appendix:exact-specified}, we can express this term as 
\begin{align*}
    Q_n&=\sum_{j=1}^{k_*}\sum_{i\in\mathcal{A}_j}\exp(\bzin)\Big[F(Y|X;\win,\ain,\bin,\vvin)-F(Y|X;\wj,\aj,\bj,\vvj)\Big]\nonumber\\
    &-\sum_{j=1}^{k_*}\sum_{i\in\mathcal{A}_j}\exp(\bzin)\Big[H(Y|X;\win)-H(Y|X;\wj)\Big]\nonumber\\
    &+\sum_{j=1}^{k_*}\Big[\sum_{i\in\mathcal{A}_j}\exp(\bzin)-\exp(\bzj)\Big]\Big[F(Y|X;\wj,\aj,\bj,\vvj)-H(Y|X,\wj)\Big]\nonumber\\
    :&=A_n-B_n+E_n,
\end{align*}
Next, we proceed to decompose $A_n$ based on the cardinality of the Voronoi cells as follows:
\begin{align*}
    A_n&=\sum_{j:|\mathcal{A}_j|=1}\sum_{i\in\mathcal{A}_j}\exp(\bzin)\Big[F(Y|X;\win,\ain,\bin,\vvin)-F(Y|X;\wj,\aj,\bj,\vvj)\Big]\\
    &+\sum_{j:|\mathcal{A}_j|>1}\sum_{i\in\mathcal{A}_j}\exp(\bzin)\Big[F(Y|X;\win,\ain,\bin,\vvin)-F(Y|X;\wj,\aj,\bj,\vvj)\Big].
\end{align*}
By applying the Taylor expansions of order 1 and $\brj$ to the first and second terms of $A_n$, respectively, and following the derivation in ~\eqref{eq:An1_decompose}, we get that
\begin{align*}
    A_n&=\sum_{j:|\mathcal{A}_j|=1}\sum_{i\in\mathcal{A}_j}\sum_{|\alpha_1|=0}^{1}\sum_{|\alpha_2|=0}^{1-|\alpha_1|}\sum_{\eta=0}^{2(1-|\alpha_1|-|\alpha_2|)}\sum_{\substack{\alpha_3+2\alpha_4=\eta, \\ 0\leq\alpha_3+\alpha_4\leq 1-|\alpha_1|-|\alpha_2|}}\frac{\exp(\bzin)}{2^{\alpha_4}\alpha!}\cdot(\dwijn)^{\alpha_1}(\daijn)^{\alpha_2}\nonumber\\
    %\label{eq:An1_decompose}
    &\times(\dbijn)^{\alpha_3}(\dvvijn)^{\alpha_4}\cdot X^{\alpha_2}\cdot\dfrac{\partial^{|\alpha_1|}g}{\partial W^{\alpha_1}}(X;\wj)\cdot\frac{\partial^{|\alpha_2|+\eta}f}{\partial h_1^{|\alpha_2|+\eta}}(Y|(\aj)^{\top}X+\bj,\vvj)+R_3(X,Y)\\
    &+\sum_{j:|\mathcal{A}_j|>1}\sum_{i\in\mathcal{A}_j}\sum_{|\alpha_1|=0}^{\brj}\sum_{|\alpha_2|=0}^{\brj-|\alpha_1|}\sum_{\eta=0}^{2(\brj-|\alpha_1|-|\alpha_2|)}\sum_{\substack{\alpha_3+2\alpha_4=\eta, \\ 0\leq\alpha_3+\alpha_4\leq\brj-|\alpha_1|-|\alpha_2|}}\frac{\exp(\bzin)}{2^{\alpha_4}\alpha!}\cdot(\dwijn)^{\alpha_1}(\daijn)^{\alpha_2}\nonumber\\
    %\label{eq:An1_decompose}
    &\times(\dbijn)^{\alpha_3}(\dvvijn)^{\alpha_4}\cdot X^{\alpha_2}\cdot\dfrac{\partial^{|\alpha_1|}g}{\partial W^{\alpha_1}}(X;\wj)\cdot\frac{\partial^{|\alpha_2|+\eta}f}{\partial h_1^{|\alpha_2|+\eta}}(Y|(\aj)^{\top}X+\bj,\vvj)+R_4(X,Y)
\end{align*}
where $R_{i}(X,Y)$ is a Taylor remainder such that $R_{i}(X,Y)/\mathcal{D}_2(G_n,G_*)\to0$ as $n\to\infty$ for $i\in\{3,4\}$. 
%Similarly, we also decompose $B_n$ according to the Voronoi cells as $A_n$ but then invoke the Taylor expansions of order 1 and 2 to the first term and the second term, respectively. In particular, we get
Next, we apply the Taylor expansions of order 1 and 2 to the first and second terms of $B_n$, respectively, and following the derivation in \eqref{eq:Bn1_decompose}, we get that
\begin{align*}
    B_n=&\sum_{j:|\mathcal{A}_j|=1}\sum_{i\in\mathcal{A}_j}\sum_{|\gamma|=1}\frac{\exp(\bzin)}{\gamma!}(\dwijn)^{\gamma}\cdot \dfrac{\partial^{|\gamma|}g}{\partial W^{\gamma}}(X;\wj)p_{G_n}(Y|X)+R_5(X,Y)\\
    &\sum_{j:|\mathcal{A}_j|>1}\sum_{i\in\mathcal{A}_j}\sum_{|\gamma|=1}^{2}\frac{\exp(\bzin)}{\gamma!}(\dwijn)^{\gamma}\cdot \dfrac{\partial^{|\gamma|}g}{\partial W^{\gamma}}(X;\wj)p_{G_n}(Y|X)+R_6(X,Y),
\end{align*}
where $R_5(X,Y)$ and $R_6(X,Y)$ are Taylor remainders such that their ratios over $\mathcal{D}_2(G_n,G_*)$ approach zero as $n\to\infty$. Subsequently, let us define
\begin{align*}
    S^n_{j,\alpha_1,\alpha_2,\eta}&:=\sum_{i\in\mathcal{A}_j}\sum_{\substack{\alpha_3+2\alpha_4=\eta, \\ 0\leq\alpha_3+\alpha_4\leq\brj-|\alpha_1|-|\alpha_2|}}\frac{\exp(\bzin)}{2^{\alpha_4}\alpha!}\cdot(\dwijn)^{\alpha_1}(\daijn)^{\alpha_2}(\dbijn)^{\alpha_3}(\dvvijn)^{\alpha_4},\\
    T^n_{j,\gamma}&:=\sum_{i\in\mathcal{A}_j}\frac{\exp(\bzin)}{\gamma!}(\dwijn)^{\gamma},
\end{align*}
for any $(\alpha_1,\alpha_2,\eta)\neq (\zerod,\zerod,0)$ and $\gamma\neq\zerod$. Otherwise, $S^n_{j,\zerod,\zerod,0}=T^n_{j,\zerod}:=\sum_{i\in\mathcal{A}_j}\exp(\bzin)-\exp(\bzj)$.
% \begin{align*}
%     S^n_{i,\zerod,0}=-T^n_{i,\zerod}:=\sum_{i\in\mathcal{A}_j}\exp(\bzin)-\exp(\bzi).
% \end{align*}
As a consequence, it follows that
\begin{align}
    \label{eq:Hn_over_fitted}
    Q_n&=\sum_{j=1}^{k_*}\sum_{|\alpha_1|=0}^{\brj}\sum_{|\alpha_2|=0}^{\brj-|\alpha_1|}\sum_{\eta=0}^{2(\brj-|\alpha_1|-|\alpha_2|)}S^n_{j,\alpha_1,\alpha_2,\eta}\cdot X^{\alpha_2}\cdot\dfrac{\partial^{|\alpha_1|}g}{\partial W^{\alpha_1}}(X;\wj)\cdot\frac{\partial^{|\alpha_2|+\eta}f}{\partial h_1^{|\alpha_2|+\eta}}(Y|(\aj)^{\top}X+\bj,\vvj)\nonumber\\
    &+\sum_{j=1}^{k_*}\sum_{|\gamma|=0}^{1+\mathbf{1}_{\{|\mathcal{A}_j|>1\}}}T^n_{j,\gamma}\cdot \dfrac{\partial^{|\gamma|}g}{\partial W^{\gamma}}(X;\wj)p_{G_n}(Y|X)+R_3(X,Y)+R_4(X,Y)+R_5(X,Y)+R_6(X,Y).
\end{align}

\textbf{Stage 2 - Non-vanishing coefficients}:

In this step, we demonstrate that not all the ratios $S^n_{j,\alpha_1,\alpha_2,\eta}/\mathcal{D}_2(G_n,G_*)$ and $T^n_{j,\gamma}/\mathcal{D}_2(G_n,G_*)$ converge to zero as $n\to\infty$. Assume by contrary that all these terms go to zero. Then, by employing arguments for deriving \eqref{eq:first_limit} and \eqref{eq:last_limit}, we get that
\begin{align*}
    \frac{1}{\mathcal{D}_2(G_n,G_*)}\cdot\Big[\sum_{j=1}^{k_*}&\Big|\sum_{i\in\mathcal{A}_j}\exp(\bzin)-\exp(\bzj)\Big|\\
    &+\sum_{j:|\mathcal{A}_j|=1}\sum_{i\in\mathcal{A}_j}\exp(\bzin)\Big(\|\dwijn\|+\|\daijn\|+|\dbijn|+|\dvvijn|\Big)\Big]\to0.
\end{align*}
Taking the summation of $\sum_{j:|\mathcal{A}_j|>1}\frac{|S^n_{j,\alpha_1,\alpha_2,\eta}|}{\mathcal{D}_{2}(G_n,G_*)}$ for all $(\alpha_1,\alpha_2,\eta)$ where $\alpha_1\in\{2e_1,2e_2,\ldots,2e_d\}$, $\alpha_2=\zerod$ and $\eta=0$, we have
\begin{align*}
    \frac{1}{\mathcal{D}_{2}(G_n,G_*)}\cdot\sum_{j:|\mathcal{A}_j|>1}\sum_{i\in\mathcal{A}_j}\exp(\bzin)\|\dwijn\|^2\to0.
\end{align*}
Taking the summation of $\sum_{j:|\mathcal{A}_j|>1}\frac{|S^n_{j,\alpha_1,\alpha_2,\eta}|}{\mathcal{D}_{2}(G_n,G_*)}$ for all $(\alpha_1,\alpha_2,\eta)$ where $\alpha_1=\zerod$, $\alpha_2\in\{2e_1,2e_2,\ldots,2e_d\}$ and $\eta=0$, we have
\begin{align*}
    \frac{1}{\mathcal{D}_{2}(G_n,G_*)}\cdot\sum_{j:|\mathcal{A}_j|>1}\sum_{i\in\mathcal{A}_j}\exp(\bzin)\|\daijn\|^2\to0.
\end{align*}
Combine the above limit with the formulation of $\mathcal{D}_2(G_n,G_*)$ in \eqref{eq:D2_formulation}, we have that
\begin{align*}
    \frac{1}{\mathcal{D}_2(G_n,G_*)}\cdot\sum_{j:|\mathcal{A}_j|>1}\sum_{i\in\mathcal{A}_j}\exp(\bzin)\Big(|\dbijn|^{\brj}+|\dvvijn|^{\frac{\brj}{2}}\Big)\not\to0.
\end{align*}
This result implies that we can find some index $j'\in[k_*]:|\mathcal{A}_{j'}|>1$ that satisfies
\begin{align*}
    \frac{1}{\mathcal{D}_2(G_n,G_*)}\cdot\sum_{i\in\mathcal{A}_{j'}}\exp(\bzin)\Big(|\Delta b^n_{ij'}|^{\brjp}+|\Delta\nu^n_{ij'}|^{\frac{\brjp}{2}}\Big)\not\to0.
\end{align*}
For simplicity, we may assume that $j'=1$. Since $S^n_{1,\zerod,\zerod,\eta}/\mathcal{D}_2(G_n,G_*)$ vanishes as $n\to\infty$ for any $1\leq \eta\leq \brj$, we divide this term by the left hand side of the above equation and achieve that
\begin{align}
    \label{eq:ratio_polynomial}
    \dfrac{\sum_{i\in\mathcal{A}_1}\sum_{\substack{\alpha_3+2\alpha_4=\eta, \\ 1\leq\alpha_3+\alpha_4\leq\brone}}\dfrac{\exp(\bzin)}{2^{\alpha_4}\alpha!}(\dbione)^{\alpha_3}(\dvvione)^{\alpha_4}}{\sum_{i\in\mathcal{A}_1}\exp(\bzin)\Big(|\dbione|^{\brone}+|\dvvione|^{\frac{\brone}{2}}\Big)}\to0,
\end{align}
for any $1\leq\eta\leq\brone$.

Subsequently, we define $M_n:=\max\{|\dbione|,|\dvvione|^{1/2}:i\in\mathcal{A}_1\}$ and $\pi_n:=\max\{\exp(\bzin):i\in\mathcal{A}_1\}$. As a result, the sequence $\exp(\bzin)/\pi_n$ is bounded, which indicates that we can substitute it with its subsequence that admits a positive limit $z^2_{5i}:=\lim_{n\to\infty}\exp(\bzin)/\pi_n$. Therefore, at least one among the limits $z^2_{5i}$ equals to one. Furthermore, we also denote
\begin{align*}
    (\dbione)/M_n\to z_{3i}, \  (\dvvione)/(2M_n)\to z_{4i}.
\end{align*}
From the above definition, it follows that at least one among the limits $z_{3i}$ and $z_{4i}$ equals to either 1 or $-1$. By dividing both the numerator and the denominator of the term in \eqref{eq:ratio_polynomial} by $\pi_nM_n^{\eta}$, we arrive at the following system of polynomial equations:
\begin{align*}
    \sum_{i\in\mathcal{A}_1}\sum_{\substack{\alpha_3+2\alpha_4=\eta, \\ 1\leq\alpha_3+\alpha_4\leq\brone}}\dfrac{z^2_{5i}~z^{\alpha_3}_{3i}~z^{\alpha_4}_{4i}}{\alpha_3!~\alpha_4!}=0,
\end{align*}
for all $1\leq\eta\leq\brone$. Nevertheless, from the definition of $\brone$, we know that the above system does not admit any non-trivial solutions, which is a contradiction. Consequently, not all the ratios $S^n_{j,\alpha_1,\alpha_2,\eta}/\mathcal{D}_2(G_n,G_*)$ and $T^n_{j,\gamma}/\mathcal{D}_2(G_n,G_*)$ tend to zero as $n\to\infty$.

\textbf{Stage 3 - Fatou's contradiction}:

Recall that $\mathbb{E}_X[V(p_{G_n}(\cdot|X),p_{G_*}(\cdot|X))]/\mathcal{D}_2(G_n,G_*)\to0$ as $n\to\infty$. Then, by applying the Fatou's lemma, we get
\begin{align*}
    0=\lim_{n\to\infty}\dfrac{\mathbb{E}_X[V(p_{G_n}(\cdot|X),p_{G_*}(\cdot|X))]}{\mathcal{D}_2(G_n,G_*)}\geq\frac{1}{2}\cdot\int\liminf_{n\to\infty}\frac{|p_{G_n}(Y|X)-p_{G_*}(Y|X)|}{\mathcal{D}_2(G_n,G_*)}\dint (X,Y),
\end{align*}
which implies that $|p_{G_n}(Y|X)-p_{G_*}(Y|X)|/\mathcal{D}_2(G_n,G_*)\to0$ as $n\to\infty$ for almost surely $(X,Y)$. 

Next, we define $m_n$ as the maximum of the absolute values of $S^n_{j,\alpha_1,\alpha_2,\eta}/\mathcal{D}_2(G_n,G_*)$. It follows from Step 2 that $1/m_n\not\to\infty$. Moreover, by arguing in the same way as in Step 3 in Appendix~\ref{appendix:exact-specified}, we receive that
\begin{align}
    \label{eq:Hn_limit}
    Q_n/[m_n\mathcal{D}_2(G_n,G_*)]\to0
\end{align}
as $n\to\infty$. By abuse of notations, let us denote
\begin{align*}
    S^n_{j,\alpha_1,\alpha_2,\eta}/[m_n\mathcal{D}_2(G_n,G_*)]&\to\xi_{j,\alpha_1,\alpha_2,\eta},\\
    T^n_{j,\gamma}/[m_n\mathcal{D}_2(G_n,G_*)]&\to\kappa_{j,\gamma}.
\end{align*}
Here, at least one among $\xi_{j,\alpha_1,\alpha_2,\eta},\kappa_{j,\gamma}$ is non-zero. Then, by putting the results in \eqref{eq:Hn_over_fitted} and \eqref{eq:Hn_limit} together, we get
\begin{align*}
   \sum_{j=1}^{k_*}\sum_{|\alpha_1|=0}^{\brj}\sum_{|\alpha_2|=0}^{\brj-|\alpha_1|}\sum_{\eta=0}^{2(\brj-|\alpha_1|-|\alpha_2|)}\xi_{j,\alpha_1,\alpha_2,\eta}\cdot X^{\alpha_2}\cdot\dfrac{\partial^{|\alpha_1|}g}{\partial W^{\alpha_1}}(X;\wj)\cdot\frac{\partial^{|\alpha_2|+\eta}f}{\partial h_1^{|\alpha_2|+\eta}}(Y|(\aj)^{\top}X+\bj,\vvj)\nonumber\\
    +\sum_{j=1}^K\sum_{|\gamma|=0}^{1+\mathbf{1}_{\{|\mathcal{A}_j|>1\}}}\kappa_{j,\gamma}\cdot \dfrac{\partial^{|\gamma|}g}{\partial W^{\gamma}}(X;\wj)p_{G_n}(Y|X)=0.
\end{align*}
Arguing in a similar fashion as in Step 3 of Appendix~\ref{appendix:exact-specified}, we obtain that $\xi_{j,\alpha_1,\alpha_2,\eta}=\kappa_{j,\gamma}=0$ for any $j\in[k_*]$, $0\leq|\alpha_1|+|\alpha_2|\leq 2\brj$, $0\leq\eta\leq2(\brj-|\alpha_1|-|\alpha_2|)$ and $0\leq |\gamma|\leq 1+\mathbf{1}_{\{|\mathcal{A}_j|>1\}}$. This contradicts the fact that at least one among them is non-zero. Hence, the proof is completed.

\section{Identifiability of the Laplace Gating Gaussian MoE}
\label{appendix:auxiliary_results}

\begin{lemma}
    \label{prop:identifiable}
    For any mixing measures $G$ and $G_*$ in $\mathcal{G}_k(\Theta)$ that satisfy $p_{G}(Y|X)=p_{G_*}(Y|X)$ for almost surely $(X,Y)\in\mathcal{X}\times\mathcal{Y}$, we have that $G\equiv G_*$.
\end{lemma}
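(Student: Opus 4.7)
The plan is to combine two classical identifiability facts: (i) Teicher's theorem, which asserts that finite univariate Gaussian mixtures with pairwise distinct location-scale pairs are identifiable, and (ii) the observation that $X \mapsto \|W - X\|$ is smooth on $\mathbb{R}^d \setminus \{W\}$ but not at $W$, so each Laplace-gating center $W$ is pinned down by the unique kink of its distance function. Writing $G = \sum_{i=1}^{k} \exp(\beta_i)\,\delta_{(W_i, a_i, b_i, \nu_i)}$ and $G_* = \sum_{j=1}^{k_*} \exp(\beta_j^*)\,\delta_{(W_j^*, a_j^*, b_j^*, \nu_j^*)}$, I would first reduce each measure to its atomic representation in which the four-tuples are pairwise distinct. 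I then set $\mu_i(X) := a_i^\top X + b_i$ for the $i$-th expert mean and $\pi_i(X)$ for the $i$-th Laplace softmax weight, with analogous notation $\mu_j^*(X), \pi_j^*(X)$ for $G_*$.

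Let $\mathcal{X}_0 \subset \mathbb{R}^d$ be the open, full-measure complement of the finitely many affine hyperplanes on which two pairs $(\mu_i(X), \nu_i)$, or two pairs $(\mu_j^*(X), \nu_j^*)$, would coincide. For $X \in \mathcal{X}_0$ the identity $p_G(\cdot|X) = p_{G_*}(\cdot|X)$ is an equality of two finite Gaussian mixtures in $Y$ whose components are distinct on each side, so Teicher's theorem delivers a bijection $\sigma_X : [k] \to [k_*]$ (in particular $k = k_*$) matching gating weights, means, and variances. Because $\mu_i$ and $\mu_{\sigma_X(i)}^*$ are affine in $X$, the equation $\mu_i(X) = \mu_{\sigma_X(i)}^*(X)$ on a nonempty open subset of $\mathcal{X}_0$ propagates to all of $\mathbb{R}^d$ and forces $(a_i, b_i) = (a_{\sigma_X(i)}^*, b_{\sigma_X(i)}^*)$; in particular $\sigma_X$ is locally constant, and by connectedness of $\mathcal{X}_0$ it extends to a single global permutation $\sigma$ satisfying $(a_i, b_i, \nu_i) = (a_{\sigma(i)}^*, b_{\sigma(i)}^*, \nu_{\sigma(i)}^*)$ and $\pi_i(X) = \pi_{\sigma(i)}^*(X)$ for every $X$.

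The remaining step, which I expect to be the technical crux, is to recover the gating parameters $(W_i, \beta_i)$ from the identity of softmax weights. Taking logarithms and differencing across two indices $i \neq i'$ eliminates the common log-partition term and gives, for every $X \in \mathbb{R}^d$,
\[
\|W_{i'} - X\| - \|W_i - X\| + (\beta_i - \beta_{i'}) = \|W_{\sigma(i')}^* - X\| - \|W_{\sigma(i)}^* - X\| + (\beta_{\sigma(i)}^* - \beta_{\sigma(i')}^*).
\]
The left-hand side fails to be differentiable precisely at $X \in \{W_i, W_{i'}\}$ and the right-hand side precisely at $X \in \{W_{\sigma(i)}^*, W_{\sigma(i')}^*\}$, so these singular sets must coincide; varying $i'$ across the remaining indices forces $W_i = W_{\sigma(i)}^*$ for every $i$. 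Substituting back reduces the identity to $\beta_i - \beta_{i'} = \beta_{\sigma(i)}^* - \beta_{\sigma(i')}^*$, which pins down the logits up to the common additive shift inherent to softmax gating and fixed by the paper's convention on the mixing measure. The most delicate part is this kink-matching step: one must verify that distinctness of the atoms of $G$ (and of $G_*$) prevents the two kinks of a difference $\|W - X\| - \|W' - X\|$ from canceling, which follows in dimension $d \geq 2$ from the conical nature of the singularity at each center, and in $d = 1$ from direct inspection of the resulting piecewise-linear function.
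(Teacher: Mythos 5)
Your proof is correct and follows essentially the same strategy as the paper's: invoke identifiability of finite location-scale Gaussian mixtures at almost every fixed $X$ to match components, upgrade the resulting pointwise matching to a single global permutation, and then extract the gating parameters from the equality of the softmax weights, using the logit normalization to remove the residual additive shift. The differences are mostly in ordering --- you match the expert parameters $(a_i,b_i,\nu_i)$ first and the gates second, whereas the paper first identifies the collection of gating functions as a set and then matches experts within groups sharing a common gate --- and substantive only at one point: the recovery of the centers $W_i$. There the paper simply asserts that equality of softmax outputs plus translation invariance gives $W_i = W_i^*$, while you supply an explicit justification via the non-differentiability of $X\mapsto\|W-X\|$ at $W$. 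Two caveats on that step: your kink-matching needs the centers to lie in the interior of the region on which $p_G(\cdot|X)=p_{G_*}(\cdot|X)$ holds (not guaranteed if $W_i\in\Theta$ need not belong to $\mathcal{X}$), and when the two kinks of a given log-ratio coincide you must use the sign of the conical singularity, as you note, to avoid mispairing. A slightly more robust route is to observe that once the logits agree up to a common shift, $X\mapsto\|W_i-X\|-\|W_i^*-X\|$ is constant on a set with nonempty interior, and for $d\geq 2$ its gradient vanishes only on a one-dimensional set unless $W_i=W_i^*$. Finally, like the paper, your argument implicitly assumes the expert triples $(a_i,b_i,\nu_i)$ of each mixing measure are pairwise distinct (otherwise only sums of gate weights are identified); this is a shared, not a new, gap.
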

\begin{proof}[Proof of Lemma~\ref{prop:identifiable}]
    First, we assume that two mixing measures $G$ and $G_*$ take the following forms: $G=\sum_{i=1}^{k}\exp(\beta_{i})\delta_{(W_{i},a_i,b_i,\nu_i)}$ and $G_*=\sum_{i=1}^{k_*}\exp(\beta^*_{i})\delta_{(W^*_{i},a^*_i,b^*_i,\nu^*_i)}$. Recall that $p_G(Y|X)=p_{G_*}(Y|X)$ for almost surely $(X,Y)$, then we have
    \begin{align}
        \label{eq:identifiable_equation}
        \sum_{i=1}^{k}\softmax(&-\|W_{i}-X\|+\beta_{i})\cdot f(Y|a_i^{\top}X+b_i,\nu_i)\nonumber\\
        &=\sum_{i=1}^{k_*}\softmax(-\|W^*_{i}-X\|+\beta^*_{i})\cdot f(Y|(a^*_i)^{\top}+b^*_i,\nu^*_i).
    \end{align}
    Due to the identifiability of the location-scale Gaussian mixtures \cite{Teicher-1960,Teicher-1961,Teicher-63}, we get that $k=k_*$ and
    \begin{align*}
        \Big\{\softmax(-\|W_{i}-X\|+\beta_{i}):i\in[k]\Big\}\equiv\Big\{\softmax(-\|W^*_{i}-X\|+\beta^*_{i}):i\in[k]\Big\},
    \end{align*}
    for almost surely $X$. WLOG, we may assume that 
    \begin{align}
        \label{eq:soft-soft}
        \softmax(-\|W_{i}-X\|+\beta_{i})=\softmax(-\|W^*_{i}-X\|+\beta^*_{i}),
    \end{align}
    for almost surely $X$ for any $i\in[k]$. Since the $\softmax$ function is invariant to translations, it follows from \eqref{eq:soft-soft} that $W_{i}=W^*_{i}$ and $\beta_{i}=\beta^*_{i}+v_0$ for some $v_0\in\mathbb{R}$. Notably, from the assumption of the model, we have $\beta_{k}=\beta^*_{k}=0$, which implies that $v_0=0$. As a result, we obtain that $\beta_{i}=\beta^*_{i}$ for any $i\in[k_*]$. Then, \eqref{eq:identifiable_equation} can be rewritten as
    \begin{align}
        \label{eq:new_identifiable_equation}
        &\sum_{i=1}^{k_*}\exp(\beta_{i})\exp(-\|W^*_{i}-X\|)f(Y|a_i^{\top}X+b_i,\nu_i)\nonumber\\
        &\hspace{3cm}=\sum_{i=1}^{k_*}\exp(\beta_{i})\exp(-\|W^*_{i}-X\|)f(Y|(a^*_i)^{\top}X+b^*_i,\nu^*_i),
    \end{align}
    for almost surely $(X,Y)$. Next, we denote $J_1,J_2,\ldots,J_m$ as a partition of the index set $[k_*]$, where $m\leq k_*$, such that $\exp(\beta_{i})=\exp(\beta_{i'})$ for any $i,i'\in J_j$ and $j\in[m]$. On the other hand, when $i$ and $i'$ do not belong to the same set $J_j$, we let $\exp(\beta_{i})\neq\exp(\beta_{i'})$. Thus, we can reformulate \eqref{eq:new_identifiable_equation} as
    \begin{align*}
        &\sum_{j=1}^{m}\sum_{i\in{J}_j}\exp(\beta_{i})\exp(-\|W^*_{i}-X\|)f(Y|a_i^{\top}X+b_i,\nu_i)\\
        &\hspace{3cm}=\sum_{j=1}^{m}\sum_{i\in{J}_j}\exp(\beta_{i})\exp(-\|W^*_{i}-X\|)f(Y|(a^*_i)^{\top}X+b^*_i,\nu^*_i),
    \end{align*}
    for almost surely $(X,Y)$. This results leads to $\{((a_i)^{\top}X+b_i,\nu_i):i\in J_j\}\equiv\{((a^*_i)^{\top}X+b^*_i,\nu^*_i):i\in J_j\}$, for almost surely $X$ for any $j\in[m]$. Therefore, we have
    \begin{align*}
        \{(a_i,b_i,\nu_i):i\in J_j\}\equiv\{(a^*_i,b^*_i,\nu^*_i):i\in J_j\},
    \end{align*}
    for any $j\in[m]$. As a consequence, 
    \begin{align*}
        G=\sum_{j=1}^{m}\sum_{i\in J_j}\exp(\beta_{i})\delta_{(W_{i},a_i,b_i,\nu_i)}=\sum_{j=1}^{m}\sum_{i\in J_j}\exp(\beta^*_{i})\delta_{(W^*_{i},a^*_i,b^*_i,\nu^*_i)}=G_*.
    \end{align*}
    Hence, we reach the conclusion of this lemma.
\end{proof}

\section{Broader Impact} \label{app:social}
This paper presents research aimed at propelling advancements in the broad domain of machine learning. The implications of our findings are wide-ranging, with potential applications in sectors including healthcare, autonomous driving, and recommendation systems. Based on our current understanding, this research does not warrant an ethics review, and a detailed discussion of the potential societal impacts is not required at the current stage.

\end{document}